\def\eqref#1{equation~\ref{#1}}
\def\1{\bm{1}}
\def\rmA{{\mathbf{A}}}
\def\rmB{{\mathbf{B}}}
\def\rmC{{\mathbf{C}}}
\def\rmD{{\mathbf{D}}}
\def\vone{{\bm{1}}}
\def\vb{b}
\DeclareMathAlphabet{\mathsfit}{\encodingdefault}{\sfdefault}{m}{sl}
\SetMathAlphabet{\mathsfit}{bold}{\encodingdefault}{\sfdefault}{bx}{n}
\def\gA{{\mathcal{A}}}
\def\gB{{\mathcal{B}}}
\def\gD{{\mathcal{D}}}
\def\gP{{\mathcal{P}}}
\def\gX{{\mathcal{X}}}
\def\gY{{\mathcal{Y}}}
\newcommand{\E}{\mathbb{E}}
\newcommand{\Ls}{\mathcal{L}}
\newcommand{\R}{\mathbb{R}}
\DeclareMathOperator*{\argmax}{arg\,max}
\DeclareMathOperator*{\argmin}{arg\,min}
\DeclareMathOperator{\Tr}{Tr}
\newcommand{\Lobj}{\Ls_{\mathrm{obj}}}
\newcommand{\Lreg}{\Ls_{\mathrm{reg}}}
\newcommand{\Lregn}{\tilde{\Ls}_{\mathrm{reg}}}
\newcommand{\nomark}{{}}
\crefname{thm}{theorem}{theorems}
\crefname{corr}{corollary}{corollaries}
\crefname{lemma}{lemma}{lemmas}
\crefname{prop}{proposition}{propositions}
\Crefname{algocf}{Algorithm}{Algorithms}
\title{Stochastic Amortization: A Unified Approach to Accelerate Feature and Data Attribution}
\author{%
  Ian Covert\thanks{Equal contribution. $^{\dagger}$Equal advising.} \\
  Stanford University\\
  \texttt{icovert@stanford.edu} \\
  \And
  Chanwoo Kim$^*$ \\
  University of Washington\\
  \texttt{chanwkim@uw.edu} \\
  \And
  Su-In Lee$^{\dagger}$ \\
  University of Washington\\
  \texttt{suinlee@uw.edu} \\
  \And
  James Zou$^{\dagger}$ \\
  Stanford University\\
  \texttt{jamesz@stanford.edu} \\
  \And
  Tatsunori Hashimoto$^{\dagger}$ \\
  Stanford University\\
  \texttt{thashim@stanford.edu} \\
}
\begin{document}

\maketitle

\begin{abstract}
Many tasks in explainable machine learning, such as data valuation and feature attribution, perform expensive computation for each data point and are intractable for large datasets.
These methods require efficient approximations, and although amortizing the process by learning a network to directly predict the desired output is a promising solution, training such models with exact labels is often infeasible.
We therefore explore training amortized models with noisy labels, and we find that this is inexpensive and surprisingly effective.
Through theoretical analysis of the label noise and experiments with various models and datasets, we show that this approach tolerates high noise levels and significantly accelerates several feature attribution and data valuation methods, often yielding an order of magnitude speedup over existing approaches.

\end{abstract}

\section{Introduction}
\label{sec:intro}
Many tasks in explainable machine learning (XML) perform some form of costly computation for every data point in a dataset. For example, common tasks include assessing individual data points' impact on a model's accuracy \citep{ghorbani2019data}, or quantifying each input feature's influence on individual model predictions \citep{lundberg2017unified}. Many of these techniques are prohibitively expensive: in particular, those with game-theoretic formulations have exponential complexity in the number of features or data points, making their exact calculation intractable \citep{shapley1953value, banzhaf1964weighted}.

Accelerating these methods is therefore a topic of great practical importance. This has been addressed primarily with Monte Carlo approximations \citep{vstrumbelj2010efficient, covert2021improving, mitchell2021sampling}, which are faster than brute-force calculations but can be slow to converge and impractical for large datasets. Alternatively, a promising idea is to \textit{amortize} the computation, or to approximate each data point's output with a learned model, typically a deep neural network \citep{amos2022tutorial}. For example, in the feature attribution context, we can train an \textit{explainer model} to predict Shapley values that describe how each feature affects a classifier's prediction \citep{jethani2021fastshap}.

There are several reasons why amortization is appealing, particularly with neural networks: similar data points often have similar outputs, pretrained networks extract relevant features and can be efficiently fine-tuned, and if the combined training and inference time is low then amortization can be faster than computing the object of interest (e.g., data valuation scores) for the entire dataset.
However, it is not obvious how to train such amortized models, because standard supervised learning requires a dataset of ground truth labels that can be intractable to generate. Our goal here is therefore to explore efficiently training amortized models when exact labels are costly. Our main insight is that amortization is surprisingly effective with noisy labels: we train with inexpensive estimates of the true labels, and we find that this is theoretically justified when the estimates are unbiased.

We refer to this approach as \textit{stochastic amortization} (\Cref{fig:concept}), and we find that it is applicable to a variety of XML tasks. In particular, we show that it is effective for feature attribution with Shapley values \citep{lundberg2017unified}, Banzhaf values \citep{chen2020ls} and LIME \citep{ribeiro2016should}; for several formulations of data valuation \citep{ghorbani2019data, ghorbani2020distributional, kwon2021beta, wang2022data}; and
to data attribution with datamodels \citep{ilyas2022datamodels}. Our experiments demonstrate significant speedups for several of these tasks: we find that amortizing across an entire dataset with noisy labels is often more efficient than current per-example approximations, especially for large datasets, and that amortized feature and data attribution models generalize well to unseen examples.

\begin{figure}
\centering
\includegraphics[width=\columnwidth]{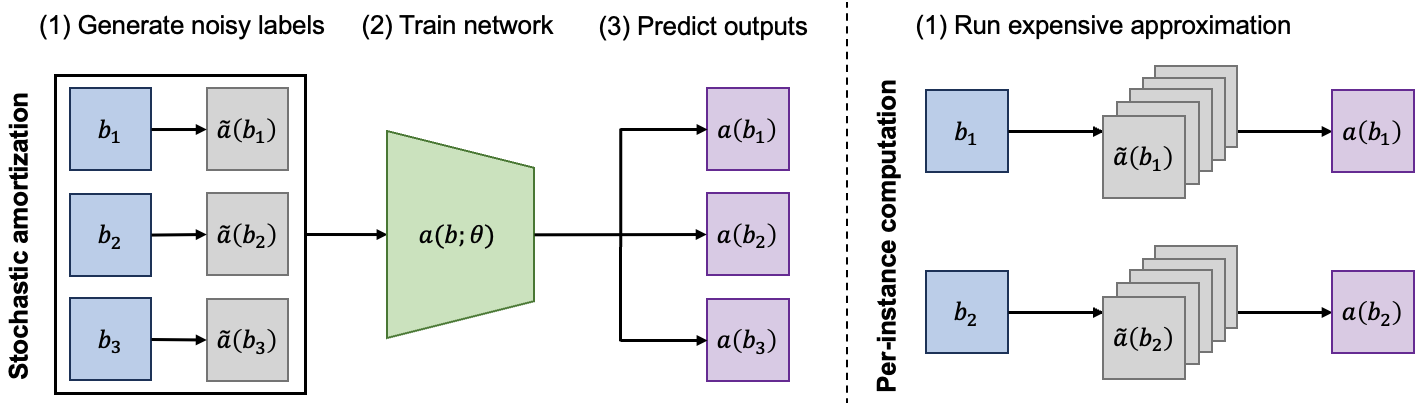}
\caption{Diagram of stochastic amortization. Left: using a dataset with noisy labels $\tilde{a}(b)$ (e.g., images and data valuation estimates), we can train an amortized model that accurately estimates the true outputs $a(b)$ (e.g., data valuation scores). Right: the default approach of running an expensive approximation algorithm
for each example (e.g., a Monte Carlo estimator
with many samples \cite{ghorbani2019data}).
} \label{fig:concept}
\end{figure}

Our contributions in this work are the following:
\begin{itemize}[leftmargin=0.35cm]
    \item We present the idea of stochastic amortization, or training amortized models with noisy labels. We analyze the role of noise 
    from the label generation process and show theoretically that it is sufficient to use unbiased estimates of the ground truth labels (\Cref{sec:amortization}). We find that non-zero bias in the labels leads to learning an incorrect function, but that variance in the labels plays a more benign role of slowing optimization.

    \item We identify a range of applications for stochastic amortization in XML. Our theory only requires unbiased estimation of the task's true labels, and we find that such estimators exist for several feature attribution and data valuation methods (\Cref{sec:xml}).
    
    \item Experimentally, we test multiple estimators for Shapley value feature attributions and find that amortization works when the labels are unbiased (\Cref{sec:experiments}). We also verify that amortization is effective for Banzhaf values and LIME. For data valuation, we apply amortization to Data Shapley and show that it allows us to scale this approach to larger datasets than in previous works.
    
    \item Throughout our experiments, we also analyze the scaling behavior with respect to the amount of training data and the quality of the noisy labels. In general, we find that amortization is more efficient than per-example computation when the training set used for amortization contains at least a moderate number of data points (e.g., ${>}1$K for data valuation).
\end{itemize}
Overall, our work shows the potential for accelerating many computationally intensive XML tasks with the same simple approach: amortizing with noisy, unbiased labels.

\section{Background}
\label{sec:background}
We first introduce the basic idea of amortization in ML, which we discuss in a general setting before considering any specific XML tasks. Consider a scenario where we repeat similar computation for a large number of data points. We represent this general setup with a context variable $b \in \gB$ and a per-context output $a(b) \in \gA$. For example, these can be an image and its data valuation score, or an image and its feature attributions. Amortization can be used with arbitrary domains, but we assume Euclidean spaces, or $\gA \subseteq \R^m$ and $\gB \subseteq \R^d$, because many XML tasks involve real-valued outputs.

The computation performed for each context $b \in \gB$ can be arbitrary as well. In some cases $a(b)$ is an intractable expectation, in which case we would typically approximate it with a Monte Carlo estimator \citep{vstrumbelj2010efficient, ghorbani2019data}. In other situations it is the solution to an optimization problem \citep{shu2017amortized, amos2022tutorial}, in which case we can define $a(b)$ via a parametric objective $h: \gA \times \gB \mapsto \R$:
\begin{equation}
    a(b) \equiv \argmin_{a' \in \gA} \; h(a'; b). \label{eq:objective}
\end{equation}
We do not require a specific formulation for $a(b)$ in this work, but we will see that both the expectation and optimization setups provide useful perspectives on our proposal of training with noisy labels.

In situations with repeated computation, our two options are generally to (i)~perform the computation separately for each $b \in \gB$, or (ii)~amortize the computation by predicting the output with a learned model. We typically implement the latter with a neural network $a(b; \theta)$, and our goal is to train it such that $a(b; \theta) \approx a(b)$. In training these models, the main challenge occurs when the per-instance computation is costly: 
specifically, it is not obvious how to train $a(b; \theta)$ without a dataset of ground truth solutions $a(b)$, which can be too slow to generate for many XML methods \cite{lundberg2017unified, ghorbani2019data}. We address this challenge in \Cref{sec:amortization}, where we prove that amortization tolerates training with noisy labels.

\subsection{Related work}

The general idea of amortized computation captures many tasks arising in physics, engineering, control and ML \citep{amos2022tutorial}. For example, amortization is prominent in variational inference \citep{kingma2013auto, rezende2014stochastic}, meta learning \citep{ha2016hypernetworks, finn2017model, rajeswaran2019meta} and reinforcement learning \citep{heess2015learning, levine2013guided, haarnoja2018soft}. In the XML context, many recent works have explored amortization to accelerate costly per-datapoint calculations \citep{chen2018learning, yoon2018invase, jethani2021have, jethani2021fastshap, covert2022learning, covert2023learning}.
Some of these works are reviewed by \cite{chuang2023efficient}, and we offer a detailed overview in \Cref{app:review}.

For feature attributions, two works propose predicting Shapley values by training with a custom weighted least squares loss \citep{jethani2021fastshap, covert2022learning}; our simpler approach can use any unbiased estimator and resembles a standard regression task. Two other works suggest modeling feature attributions with supervised learning \citep{schwarzenberg2021efficient, chuang2023cortx}; these recommend training with exact or high-quality labels,
whereas we recognize the potential to use noisy labels that can be generated orders of magnitude faster.
Concurrently, \citet{zhang2023exploring} proposed training with a custom estimator for Shapley value feature attributions; our work is similar but derives stochastic amortization algorithms for a range of settings, including the use of any unbiased estimator (\Cref{sec:amortization}) and usage for various XML tasks including data valuation (\Cref{sec:xml}).

For data valuation, two works consider predicting data valuation scores with supervised learning \citep{ghorbani2020distributional, ghorbani2022data}, but these also use near-exact labels that limit the applicability to large datasets. 
Concurrently, \citet{li2024faster} propose a family of data valuation estimators and a learning-based approach analogous to \citep{jethani2021fastshap} but for data valuation; our approach uses a simpler training loss that works with any unbiased estimator, and unlike \citep{li2024faster} its memory usage does not scale with the dataset size.
Separately, another line of work focuses on accelerating the model retraining step underlying most data valuation methods \citep{koh2017understanding, wang2021improving, wu2022davinz}, and these are complementary to our approach.
Finally, while there are works that accelerate data attribution with datamodels \citep{park2023trak, engstrom2024dsdm}, we are not aware of any that use amortization.\footnote{Datamodels \citep{ilyas2022datamodels} performs data attribution by fitting a linear regression model, but this is not amortization because the model cannot predict attributions for new data points (see \Cref{app:datamodels}).}

More broadly, our proposal to train amortized models with noisy labels can be viewed as a version of stochastic optimization, or training with noisy gradients. This fundamental idea is widely used in machine learning \citep{bubeck2015convex}, but to our knowledge we are the first to study the broad applicability of noisy labels for accelerating diverse XML tasks.

\section{Stochastic Amortization}
\label{sec:amortization}
We now discuss how to efficiently train amortized models with noisy labels. Following \Cref{sec:background}, we present this as a general approach before focusing on a specific XML task. One natural idea is to treat amortization like a standard supervised learning problem: we can parameterize a model $a(b; \theta)$, adopt a distribution $p(\vb)$ over the context variable, and then train our model with the following objective,
\begin{equation}
    \Lreg(\theta) = \E\left[ \left\lVert a(\vb; \theta) - a(\vb) \right\rVert^2 \right]. \label{eq:rao}
\end{equation}
This approach is called \textit{regression-based amortization} \citep{amos2022tutorial} because it reduces the problem to a simple regression task.
The challenge is that this approach cannot be used when we lack a large dataset of exact labels $a(b)$, which is common for computationally intensive XML methods (\Cref{sec:xml}).

A relaxation of this idea is to train the model with inexact labels (see \Cref{fig:concept}). We assume that these are generated by a noisy oracle $\tilde{a}(b)$, which is characterized by a distribution of outputs for each context $b \in \gB$. For example, the noisy oracle could be a statistical estimator of a data valuation score~\citep{ghorbani2019data}. With this, we can train the model using a modified version of \cref{eq:rao}, where we consider the loss in expectation over both $p(b)$ and the noisy labels $\tilde{a}(b)$:
\begin{equation}
    \Lregn(\theta) = \E\left[ \left\lVert a(\vb; \theta) - \tilde{a}(\vb) \right\rVert^2 \right]. \label{eq:stochrao}
\end{equation}
It is not immediately obvious when this approach is worthwhile: if the noisy oracle is too inaccurate we will learn the wrong function, so it is important to choose the oracle carefully. We find that there are two properties of the oracle that matter, and these relate to its systematic error and noise level, or more intuitively its bias and variance. We denote these quantities as follows for a specific value $b$,
\begin{align*}
    \mathrm{B}(\tilde a \mid b) = \left\Vert a(b) - \E[\tilde a(b) \mid b] \right\rVert^2,
    \quad\quad\quad
    \mathrm{N}(\tilde a \mid b) = \E\left[ \left\lVert \tilde{a}(b) - \E[\tilde{a}(b) \mid b] \right\rVert^2 \mid b \right],
\end{align*}
and based on these we can also define the global measures $\mathrm{B}(\tilde{a}) \equiv \E_p[\mathrm{B}(\tilde a \mid \vb)]$ and $\mathrm{N}(\tilde{a}) \equiv \E_p[\mathrm{N}(\tilde a \mid \vb)]$ for the distribution over context variables $p(\vb)$.\footnote{$\mathrm{N}(\tilde{a} \mid b)$ is equal to the trace of the conditional covariance $\mathrm{Cov}(\tilde{a} \mid b)$, and $\mathrm{N}(\tilde{a}) = \Tr(\E[\mathrm{Cov}(\tilde{a} \mid \vb)])$.} These terms are useful because they reveal a relationship between the two amortization objectives. In general, the objectives are related by the following two-sided bound (see proof in \Cref{app:proofs}):
\begin{equation}
    \left( \sqrt{\Lregn(\theta) - \mathrm{N}(\tilde{a})} - \sqrt{\mathrm{B}(\tilde{a})} \right)^2 \leq \Lreg(\theta) \leq \left( \sqrt{\Lregn(\theta) - \mathrm{N}(\tilde{a})} + \sqrt{\mathrm{B}(\tilde{a})} \right)^2. \label{eq:regao-bound}
\end{equation}
This relationship shows that reducing $\Lregn(\theta)$ towards its minimum value $\mathrm{N}(\tilde{a})$ is similar to training with $\Lreg(\theta)$, only with a disconnect introduced by the bias $\mathrm{B}(\tilde{a})$. The bias represents a source of irreducible error, because in the limit $\Lregn(\theta) - \mathrm{N}(\tilde{a}) \to 0$ we have $\Lreg(\theta) = \mathrm{B}(\tilde{a})$. On the other hand, when $\mathrm{B}(\tilde{a}) = 0$ we can see that $\Lreg(\theta) = \Lregn(\theta) - \mathrm{N}(\tilde{a})$, which means that training with the noisy loss is equivalent and will recover the correct function asymptotically. This last equality is easy to see given an unbiased noisy oracle $\tilde{a}(b)$, but the more general relationship in \cref{eq:regao-bound} emphasizes how non-zero bias can be problematic and lead to learning an incorrect function.

Aside from the bias, the variance plays a role as well, not in determining the function we learn but in making the model's optimization unstable or require more noisy labels.
To illustrate the role of variance, we present a theoretical result considering the simplest case of a linear model $a(b; \theta)$ trained with SGD, which shows that high label noise slows convergence (see proof in \Cref{app:proofs}).
\begin{restatable}{thm}{stochregao} \label{thm:stochregao}
    Consider a noisy oracle $\tilde{a}(b)$ that satisfies $\E[\tilde a(b) \mid b] = \tilde{\theta} b$ with parameters $\tilde{\theta} \in \R^{m \times d}$ such that $\lVert\tilde{\theta}\rVert_F \leq D$.
    Given a distribution $p(\vb)$, define the norm-weighted distribution $q(b) \propto p(b) \cdot \lVert b \rVert^2$ and the terms $\Sigma_p \equiv \E_p[\vb\vb^\top]$ and $\Sigma_q \equiv \E_q[\vb \vb^\top]$.
    If we train a linear model $a(\theta; b) = \theta b$ with the noisy objective $\Lregn(\theta)$ using SGD with step size $\eta_t = \frac{2}{\alpha(t + 1)}$, then the averaged iterate $\bar \theta_{T} = \sum_{t=1}^T \frac{2t}{T(T + 1)} \theta_t$ at step $T$ satisfies
    \begin{equation*}
        \E[\Lregn(\bar \theta_T)] - \mathrm{N}(\tilde a) \leq \frac{4\Tr(\Sigma_p) \left( \mathrm{N}_q(\tilde{a}) + 4 \lambda_{\max}(\Sigma_q) D^2 \right)}{\lambda_{\min}(\Sigma_p) (T + 1)},
    \end{equation*}
    where $\mathrm{N}_q(\tilde a) \equiv \E_q[\mathrm{N}(\tilde a \mid \vb)]$ is the noisy oracle's norm-weighted variance, and $\lambda_{\max}(\cdot)$, $\lambda_{\min}(\cdot)$ are the maximum and minimum eigenvalues.
\end{restatable}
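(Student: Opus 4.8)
The plan is to recognize that, for a linear model, the noisy objective is a strongly convex quadratic shifted by a constant, so the result reduces to a textbook convergence rate for projected SGD on strongly convex functions, with all problem-specific quantities entering through the second moment of the stochastic gradient.

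First I would rewrite the objective. Writing the noise as $\tilde a(\vb) = \tilde\theta \vb + \xi$ with $\E[\xi \mid \vb] = 0$ and $\E[\|\xi\|^2 \mid \vb] = \mathrm{N}(\tilde a \mid \vb)$, the cross term vanishes in expectation, giving $\Lregn(\theta) = \E_p[\|(\theta - \tilde\theta)\vb\|^2] + \mathrm{N}(\tilde a) = \Tr\big((\theta-\tilde\theta)\Sigma_p(\theta-\tilde\theta)^\top\big) + \mathrm{N}(\tilde a)$. Call the first term $f(\theta)$; it is a nonnegative quadratic minimized at $\theta^* = \tilde\theta$ with $f(\theta^*)=0$, so $\min_\theta \Lregn = \mathrm{N}(\tilde a)$ and the left-hand side of the claim equals $\E[f(\bar\theta_T)] - f(\theta^*)$. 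Vectorizing $\theta$ by rows shows the Hessian of $f$ is block diagonal with blocks $2\Sigma_p$, so $f$ is $\alpha$-strongly convex with $\alpha = 2\lambda_{\min}(\Sigma_p)$, exactly the modulus implicit in the step size $\eta_t = 2/(\alpha(t+1))$. Since $\|\tilde\theta\|_F \le D$, the minimizer $\theta^*$ lies in the Frobenius ball of radius $D$, so I would run SGD with projection onto this ball, which keeps the iterates bounded without moving the optimum.

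The crux is bounding the second moment of the stochastic gradient uniformly over the ball. The per-sample gradient $g_t = 2(\theta_t \vb_t - \tilde a(\vb_t))\vb_t^\top$ is unbiased for $\nabla f(\theta_t)$ because $\E[\tilde a(\vb)\vb^\top] = \tilde\theta\Sigma_p$. Using $\|u v^\top\|_F = \|u\|\|v\|$ and integrating over the label noise first, $\E[\|g_t\|_F^2] = 4\,\E_p\big[\|\vb\|^2\big(\|(\theta_t - \tilde\theta)\vb\|^2 + \mathrm{N}(\tilde a \mid \vb)\big)\big]$. Here I would apply the change of measure $q(b) \propto p(b)\|b\|^2$, under which $\E_p[\|\vb\|^2\,\phi(\vb)] = \Tr(\Sigma_p)\,\E_q[\phi(\vb)]$ since $\E_p[\|\vb\|^2] = \Tr(\Sigma_p)$. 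This turns the two pieces into $\Tr(\Sigma_p)$ times $\E_q[\|(\theta_t-\tilde\theta)\vb\|^2] = \Tr((\theta_t-\tilde\theta)\Sigma_q(\theta_t-\tilde\theta)^\top) \le 4\lambda_{\max}(\Sigma_q)D^2$ (using $\|\theta_t - \tilde\theta\|_F \le 2D$ on the ball) and $\E_q[\mathrm{N}(\tilde a \mid \vb)] = \mathrm{N}_q(\tilde a)$, yielding the uniform bound $\E[\|g_t\|_F^2] \le G^2 := 4\Tr(\Sigma_p)\big(\mathrm{N}_q(\tilde a) + 4\lambda_{\max}(\Sigma_q)D^2\big)$. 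Integrating over the noise before bounding is what produces this exact constant rather than a looser one from a triangle-inequality split.

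Finally I would invoke the standard guarantee for projected SGD on an $\alpha$-strongly convex objective with step size $\eta_t = 2/(\alpha(t+1))$ and the weighted average $\bar\theta_T = \sum_t \frac{2t}{T(T+1)}\theta_t$ (Lacoste-Julien, Schmidt and Bach, 2012), namely $\E[f(\bar\theta_T)] - f(\theta^*) \le 2G^2/(\alpha(T+1))$. Substituting $\alpha = 2\lambda_{\min}(\Sigma_p)$ and the value of $G^2$ gives exactly the claimed rate. The main obstacles I anticipate are the gradient second-moment computation with the correct reweighting to $q$ (the source of $\Tr(\Sigma_p)$, $\lambda_{\max}(\Sigma_q)$, and $\mathrm{N}_q(\tilde a)$) and justifying a uniform gradient bound, which is precisely why the projection and the assumption $\|\tilde\theta\|_F \le D$ are needed; the remaining convergence analysis is a direct appeal to the cited result.
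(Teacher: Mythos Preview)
Your proposal is correct and follows essentially the same approach as the paper: rewrite $\Lregn(\theta)$ as a quadratic in $\theta-\tilde\theta$ plus the constant $\mathrm{N}(\tilde a)$, establish $2\lambda_{\min}(\Sigma_p)$-strong convexity, project onto the $D$-ball, bound the stochastic gradient second moment via the change of measure to $q$, and invoke a standard strongly-convex SGD rate (the paper cites Bubeck's Theorem~6.3 rather than Lacoste-Julien--Schmidt--Bach, but these are the same result). The only cosmetic difference is that the paper first splits $\E[\|g\|_F^2]$ into variance plus squared mean and then recombines, whereas you compute the second moment directly; both routes land on the identical expression $4\Tr(\Sigma_p)\big(\mathrm{N}_q(\tilde a) + \Tr((\theta-\tilde\theta)\Sigma_q(\theta-\tilde\theta)^\top)\big)$ before the $4D^2$ bound.
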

The bound in \Cref{thm:stochregao} shows that noise slows convergence by its presence in the numerator, and interestingly, it appears in the form of a weighted version $\mathrm{N}_q(\tilde{a})$ that puts more weight on values with large norm $\lVert b \rVert$; this is a consequence of assuming a linear model, but we expect the general conclusion of label variance slowing convergence to hold even for neural networks. As a corollary, we can see that the rate in \Cref{thm:stochregao} applies directly to $\Lreg(\theta)$ when the noisy oracle is unbiased (see \Cref{app:proofs}). We note that very high noise levels can in principle prevent effective optimization, and this can be mitigated by either reducing the noisy oracle's variance or taking more steps $T$.

Overall, our analysis shows that amortization with noisy labels is possible, although perhaps more difficult to optimize than training with exact labels. We next show that unbiased estimates are available for many XML tasks (\Cref{sec:xml}), and we later find that this form of amortization is consistently effective with the noise levels observed in practice (\Cref{sec:experiments}), even providing better accuracy than per-example estimation in compute-matched comparisons. As a shorthand, we refer to training with the noisy objective $\Lregn(\theta)$ in \cref{eq:stochrao} as \textit{stochastic amortization}.

\section{Applications to Explainable ML}
\label{sec:xml}
We now consider XML tasks that can be accelerated with stochastic amortization. Rather than using generic variables $b \in \gB$ and $a(b) \in \gA$, this section uses an input variable $x \in \gX$, a response variable $y \in \gY$, and a model $f$ or a measure of its performance. As we describe below, each application of stochastic amortization is instantiated by a noisy oracle that generates labels for the given task.

\subsection{Shapley value feature attribution} \label{sec:xml-shapley}

One of the most common tasks in XML is feature attribution, which aims to quantify each feature's influence on an individual prediction. The Shapley value has gained popularity because of its origins in game theory \citep{shapley1953value, lundberg2017unified}, and like many feature attribution methods is based on querying the model while removing different feature sets \citep{covert2021explaining}.
Given a model $f$ and input $x$ that consists of $d$ separate features $x = (x_1, \ldots, x_d)$, we assume that we can calculate the prediction $f(x_S) \in \R$ for any feature set $S \subseteq [d]$.\footnote{There are many ways to do so \citep{covert2021explaining}, e.g., we can set features to their mean or use masked self-attention.} With this setup, the Shapley values $\phi_i(x) \in \R$ for each feature $i \in [d]$ are defined as:
\begin{equation}
    \phi_i(x) = \frac{1}{d} \sum_{S \subseteq [d] \setminus \{i\}} \binom{d - 1}{|S|}^{-1} \left( f(x_{S \cup \{i\}}) - f(x_S) \right). \label{eq:shapley}
\end{equation}
These scores satisfy several desirable properties \citep{shapley1953value}, but they are impractical to calculate due to the exponential summation over feature subsets. Our goal is therefore to learn an amortized model $\phi(x; \theta) \in \R^d$ to directly predict feature attribution scores, and for this we require a noisy oracle.

Many recent works have studied efficient Shapley value estimation \citep{chen2022algorithms}, and we first consider noisy oracles derived from \cref{eq:shapley}, which defines the attribution as the feature's expected marginal contribution. There are several unbiased statistical estimators that rely on sampling feature subsets or permutations \citep{castro2009polynomial, vstrumbelj2010efficient, okhrati2021multilinear, mitchell2021sampling, kolpaczki2023approximating}, and following \Cref{sec:amortization} we can use any of these for stochastic amortization. Our experiments use the classic permutation sampling estimator \citep{vstrumbelj2010efficient, mitchell2021sampling}, which approximates the values $\phi_i(x)$ as an expectation across feature orderings.
We defer the precise definition of this noisy oracle to \Cref{app:estimators}, along with the other estimators used in our experiments.

Next, we also consider noisy oracles derived from an optimization perspective on the Shapley value. A famous result from \citet{charnes1988extremal} shows that the Shapley values are the solution to the following problem (with abuse of notation we discard the solution's intercept term),
\begin{equation}
    \phi(x) = \argmin_{a \in \R^{d + 1}} \; \sum_{S \subseteq [d]} \mu(S) \left( f(x_S) - a_0 - \sum_{i \in S} a_i \right)^2, \label{eq:kernelshap}
\end{equation}
where we use a least squares weighting kernel defined as $\mu^{-1}(S) = \binom{d}{|S|} |S| (d - |S|)$. Several works have proposed approximating Shapley values by solving this problem with sampled subsets, either using projected gradient descent \citep{simon2020projected} or analytic solutions \citep{lundberg2017unified, covert2021improving}. Among these, we use KernelSHAP \citep{lundberg2017unified} and SGD-Shapley \citep{simon2020projected} as noisy oracles in our experiments. The first is an M-estimator whose bias shrinks as the number of sampled subsets grows \citep{van2000asymptotic}, so we expect it to lead to effective amortization; the latter has been shown to have non-negligible bias \citep{chen2022algorithms}, so our theory in \Cref{sec:amortization} suggests that it should lead to learning an incorrect function when used for amortization.

\subsection{Alternative feature attributions} \label{sec:xml-others}

Next, we consider two alternative feature attribution methods: Banzhaf values \citep{banzhaf1964weighted, chen2020ls} and LIME \citep{ribeiro2016should}. These are closely related to Shapley values and are similarly intractable  \citep{dubey1979mathematical, hammer1992approximations}, but we find that they offer statistical estimators that can be used for stochastic amortization.

First, Banzhaf values assign the following scores to each feature for a prediction $f(x)$ \citep{banzhaf1964weighted}:
\begin{equation}
    \phi_i(x) = \frac{1}{2^{d-1}} \sum_{S \subseteq [d] \setminus \{i\}} \left( f(x_{S \cup \{i\}}) - f(x_S) \right).
\end{equation}
These differ from Shapley values only in their choice of weighting function, and they admit a range of similar statistical estimators. One option is the MSR estimator from \cite{wang2022data}, which is unbiased and re-uses all model evaluations for each feature attribution estimate. We adopt this as a noisy oracle in our experiments (see a precise definition in \Cref{app:estimators}), but several other options are available.

Second, LIME defines its attribution scores $\phi_i(x)$ as the solution to the following optimization problem, given a weighting kernel $\pi(S)$ and penalty term $\Omega$ \citep{ribeiro2016should}:\footnote{Following \cite{covert2021explaining}, we only consider the version of LIME with a
binary interpretable representation.}
\begin{equation}
    \argmin_{a \in \R^{d + 1}} \; \sum_{S \subseteq [d]} \pi(S) \left( f(x_S) - a_0 - \sum_{i \in S} a_i \right)^2 + \Omega(a).
\end{equation}
As our noisy oracle for LIME, we use the popular approach of solving the above problem for subsets sampled according to $\pi(S)$. Similar to KernelSHAP \citep{lundberg2017unified}, this is an M-estimator whose bias shrinks to zero as the sample size grows \citep{van2000asymptotic}, so we expect it to lead to successful amortization.

Aside from these methods, other costly feature interpretation methods rely on unbiased statistical estimators and can be amortized in a similar fashion \citep{sundararajan2020shapley, kwon2022weightedshap, fumagalli2023shap}. We leave further investigation of these methods to future work.

\subsection{Data valuation} \label{sec:xml-valuation}

Next, data valuation aims to quantify how much each training example affects a model's accuracy. We consider labeled examples $z = (x, y)$ and a training dataset $\gD = \{z_i\}_{i = 1}^n$, and we analyze each data point's value by fitting models to subsampled datasets $\gD_T \subseteq \gD$ with $T \subseteq [n]$ and calculating a measure of the model's performance $v(\gD_T) \in \R$ (e.g., its 0-1 accuracy). This general approach was introduced by \citet{ghorbani2019data}, who defined the Data Shapley scores $\psi(z_i) \in \R$ as follows:
\begin{equation}
    \psi(z_i) = \frac{1}{n} \sum_{T \subseteq [n] \setminus \{i\}} \binom{n - 1}{|T|}^{-1} \left( v(\gD_{T} \cup \{z_i\}) - v(\gD_T) \right). \label{eq:data-shapley}
\end{equation}
Subsequent work generalized the approach in different ways, which we briefly summarize before considering amortization. For example, \citet{wang2022data} used the Banzhaf value rather than Shapley value, and \citet{kwon2021beta} considered the case of arbitrary semivalues \citep{monderer2002variations}. These correspond to adopting a different weighting over subsets in \cref{eq:data-shapley}, and the general case can be written as follows for a normalized weighting function $w(k)$:\footnote{For this to be a valid expectation, semivalues require that $\sum_{k = 0}^{n-1} \binom{n-1}{k}w(k) = 1$. Note that Data Shapley adopts $w(k) = \binom{n - 1}{k}^{-1}/n$ and Data Banzhaf adopts $w(k) = 1 / 2^{n - 1}$.}
\begin{equation}
    \psi(z_i) = \sum_{T \subseteq [n] \setminus \{i\}} w(|T|) \left( v(\gD_{T} \cup \{z_i\}) - v(\gD_T) \right).
\end{equation}
Next, another extension is the case of \textit{distributional data valuation}. \citet{ghorbani2020distributional} incorporate an expectation over the original dataset $\gD$, which they show leads to well defined scores even for data points $z=(x,y)$ outside the training set. Given a distribution over datasets of size $|\gD| = n - 1$ and a weighting function $w(k)$, this version defines the score $\psi(z) \in \R$ for arbitrary $z$ as follows:
\begin{equation}
    \psi(z) = \E_{\gD} \sum_{T \subseteq [n - 1]} w(|T|) \left( v(\gD_T \cup \{z\}) - v(\gD_T) \right).
\end{equation}
When using any of these methods in practice, the scores are difficult to calculate due to the intractable expectation across datasets. However, a crucial property they share is that they can all be estimated in an unbiased fashion, and these estimates can therefore be used as noisy labels for stochastic amortization. We focus on Data Shapley and Distributional Data Shapley in our experiments \citep{ghorbani2019data, ghorbani2020distributional}, and we use the Monte Carlo estimator from \citet{ghorbani2019data} as a noisy oracle (see the precise definition in \Cref{app:estimators}). Doing so allows us to train an amortized valuation model $\psi(z; \theta) \in \R$ that accelerates valuation within our training dataset, and that can also be applied to external data, e.g., when selecting informative new data points for active learning \citep{ghorbani2022data}.

Finally, \Cref{app:datamodels} discusses amortization for the datamodels data attribution technique \citep{ilyas2022datamodels}. This method measures how much each training data point $z_i \in \gD$ affects the prediction for an inference example $x \in \gX$, and we show that the scores are equivalent to a simple expectation that can be estimated in an unbiased fashion. The datamodels scores can therefore be amortized by adopting these estimates as noisy labels, but we leave further investigation of this approach to future work.

\section{Experiments}
\label{sec:experiments}
Our experiments apply stochastic amortization to several of the tasks discussed in \Cref{sec:xml}. We consider both feature attribution and data valuation, for image and tabular datasets, and using multiple architectures for our amortized models, including fully-connected networks (FCNs), ResNets \citep{he2016deep} and ViTs \citep{dosovitskiy2020image}. Full details are provided in \Cref{app:implementation}, including our exact models and hyperparameters.

Our goal in each experiment is to perform feature attribution or data valuation for an entire dataset, and to compare the accuracy of stochastic amortization to running existing estimators on each point. We adopt a noisy oracle for each task (e.g., a Monte Carlo estimator of data valuation scores), we then fit an amortized network with one noisy label per training example, and our evaluation focuses on the accuracy of the amortized predictions relative to the ground truth. Our ground truth is obtained by running the noisy oracle to near-convergence for a large number of samples: for example, we run KernelSHAP \citep{lundberg2017unified} for feature attribution with 1M samples, and the TMC estimator \citep{ghorbani2019data} for data valuation with 10K samples. We test amortization when using different numbers of training examples, and for both training and unseen external data to evaluate the model's generalization. We find that amortization often denoises and strongly improves upon the noisy labels, leading to a significant accuracy improvement for the same computational budget.

\subsection{Feature attribution}


\begin{wrapfigure}{r}{0.5\textwidth}
    \vspace{-0.15in}
    \includegraphics[width=0.5\columnwidth, trim={0 3.77in 0 0},clip]{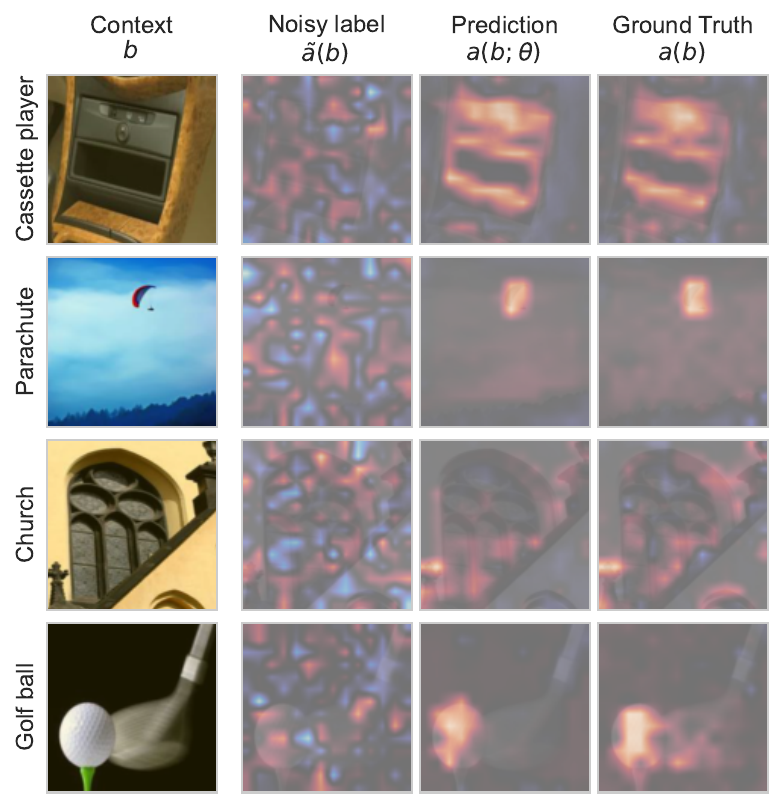}
    \caption{Stochastic amortization for Shapley value feature attributions. We compare the
    predicted attributions to the noisy labels and ground truth,
    which are
    generated using KernelSHAP with $512$ and 1M samples, respectively.
    } \label{fig:shapley-qualitative}
\vspace{-0.1in}
\end{wrapfigure}

We first consider Shapley value feature attributions. This task offers a diverse set of noisy oracles, and we consider three options: KernelSHAP \citep{lundberg2017unified}, permutation sampling \citep{mitchell2021sampling} and SGD-Shapley \citep{simon2020projected}. Among these, our theory from \Cref{sec:amortization} suggests that the first two will be effective for amortization, while the third may not because it is not an unbiased estimator. We follow the setup from \cite{covert2022learning} and implement our amortized network with a pretrained ViT-B architecture \citep{dosovitskiy2020image}, and we use the ImageNette dataset \citep{howard2020fastai} with $224 \times 224$ images partitioned into $196$ patches of size $14 \times 14$.

As a first result, \Cref{fig:shapley-qualitative} compares our training targets to the amortized model's predictions. The predicted attributions are significantly more accurate than the labels, even for the noisiest setting with just $512$ KernelSHAP samples. \Cref{fig:shapley-target-versus-prediction-short} (left) quantifies the improvement from amortization, and we observe similar results for both KernelSHAP and permutation sampling (see \Cref{app:results}): in both cases the error is significantly lower than that of the noisy labels, and it remains lower and improves as the
labels become more accurate. To contextualize our amortized model's accuracy, we find that the error is similar to that of running KernelSHAP for 10-40K samples, even though our labels use an order of magnitude fewer samples (see \Cref{app:results}).
We also find that the model generalizes to external data points (see \Cref{app:results}).
In addition, we show results for SGD-Shapley, where we confirm that it leads to poor amortization results due to its non-negligible bias (see \Cref{app:results}).

Next, we investigate the compute trade-off between calculating attributions separately (e.g., with KernelSHAP)
and using amortization. \Cref{fig:shapley-target-versus-prediction-short} (center-right) shows two results regarding this tradeoff. First, we measure the error as a function of FLOPs, where we account for the cost of generating labels and training the amortized model (see \Cref{app:implementation}). The FLOPs incurred by training are negligible compared to the KernelSHAP estimates, and we find that stopping at any time to fit an amortized model yields significantly better estimates. This suggests that amortization is an inexpensive final denoising step, regardless of how much compute was used for the noisy estimates.

Second, we test the effectiveness of amortization for different dataset sizes. We match the compute between the two scenarios, using $2440$ KernelSHAP samples for per-example computation\footnote{This is the default number of samples for $196$ features in the official repository: \url{https://github.com/shap/shap}.} and $2257$ for amortization to account for the cost of training. This compute-matched comparison shows that amortization achieves lower estimation error for datasets ranging from 250-10K data points (\Cref{fig:shapley-target-versus-prediction-short} right); it becomes more effective as the dataset grows, but it is useful even for small datasets.

\begin{figure}[t]
\centering
\vspace{-0.2in}
\includegraphics[width=0.329\columnwidth, trim={0cm 0cm 26.1cm 0cm},clip]{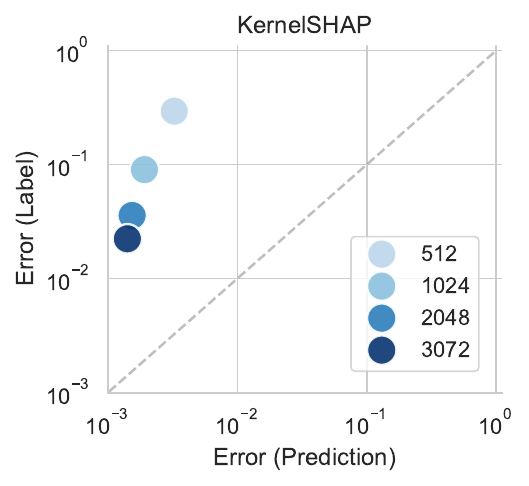}
\includegraphics[width=0.329\columnwidth, trim={0.25cm 0.2cm 27.1cm 0.2cm},clip]{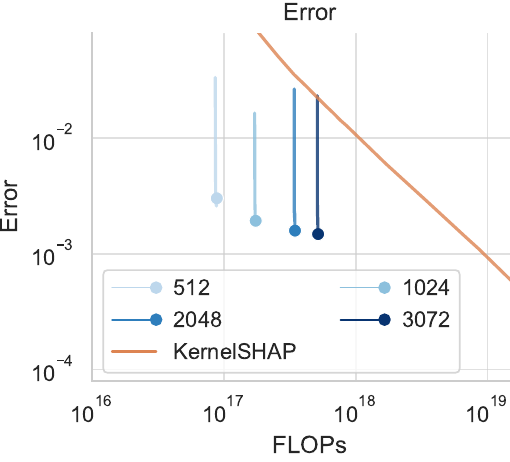}
\includegraphics[width=0.329\columnwidth, trim={17.96cm 0.2cm 9.1cm 0.2cm},clip]{images/shapley_compute.pdf}
\caption{Amortized Shapley value feature attributions using KernelSHAP as a noisy oracle. Left: squared error relative to the ground truth attributions when using noisy labels with different numbers of samples (different noise levels). Center: estimation error as a function of FLOPs, where KernelSHAP incurs FLOPs via classifier predictions used to estimate the attributions, and amortization incurs additional FLOPs from training (training appears as a vertical line because the FLOPs are relatively low, and endpoints represent results from the final epoch).
Right: estimation error with different training dataset sizes given equivalent compute per data point (matched by using fewer KernelSHAP samples when generating noisy labels for amortization and allowing up to $50$ epochs of training).}
\label{fig:shapley-target-versus-prediction-short}
\vspace{-0.1in}
\end{figure}

Finally, \Cref{app:results} shows a comparison between stochastic amortization and FastSHAP \citep{jethani2021fastshap, covert2022learning}, an existing approach to amortized Shapley value estimation. We observe similar estimation accuracy in compute-matched comparisons, and find that both methods are significantly more accurate than per-example estimation (similar to \Cref{fig:shapley-target-versus-prediction-short} center). \Cref{app:results} also show results for amortizing Banzhaf values and LIME: we find that amortization is more difficult for these methods due to the inconsistent scale of attributions between inputs, but we nonetheless observe an improvement in our amortized estimates versus the noisy labels.

\subsection{Data valuation} \label{sec:data-valuation}

Next, we consider data valuation with Data Shapley. For our noisy oracle, we obtain training labels by running the TMC estimator with different numbers of samples \citep{ghorbani2019data}. We first test our approach with the adult census and MiniBooNE particle physics datasets \citep{dua2017uci, roc2005boosted}, and following prior work we conduct experiments using versions of each dataset with different numbers of data points \citep{jiang2023opendataval}. Our valuation model must predict scores for each example $z = (x, y)$, so we train FCNs that output scores for all classes and use only the relevant output for each data point.


\begin{figure}[ht!]
\centering
\includegraphics[width=0.66\columnwidth]{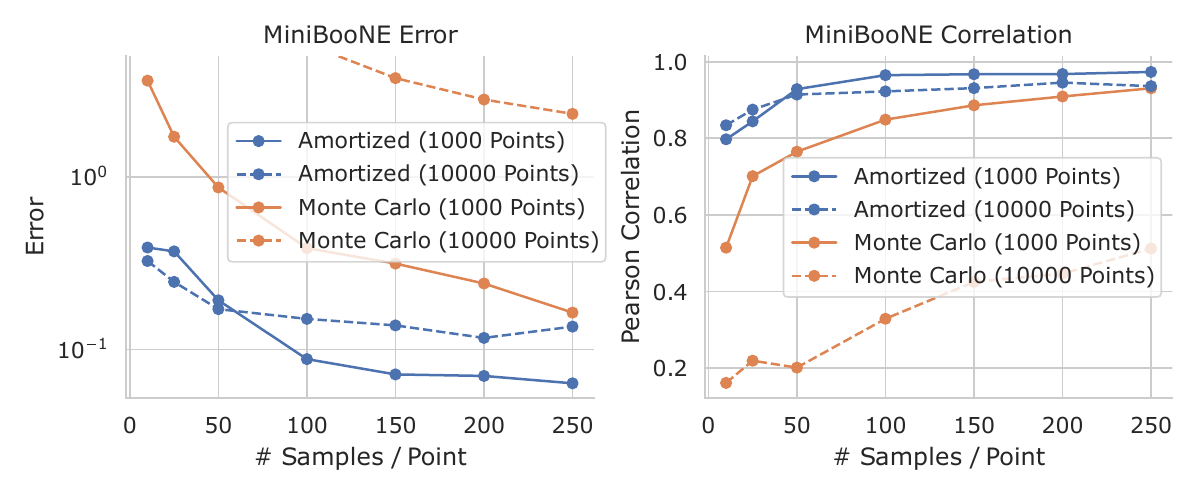}
\includegraphics[width=0.315\columnwidth]{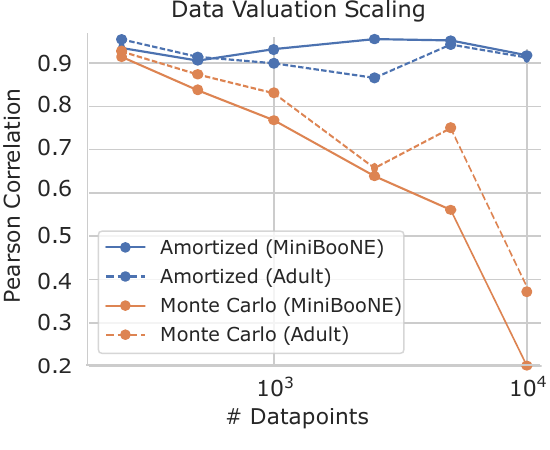}
\vspace{-0.1in}
\caption{Amortized data valuation accuracy for tabular datasets.
Left: mean squared error relative to the ground truth for the MiniBooNE dataset, normalized so that the mean valuation score has error equal to $1$ (for 1K and 10K data points). 
The x-axis indicates how many Monte Carlo samples were used for each data point.
Center: Pearson correlation with the ground truth for the MiniBooNE dataset (for 1K and 10K data points).
Right: estimation accuracy for the MiniBooNE and adult census datasets
as a function of dataset size (250 to 10K data points); we use $50$ Monte Carlo samples per data point for all results and show the Pearson correlation with the ground truth.
} \label{fig:valuation-tabular}
\end{figure}

As a first result, \Cref{fig:valuation-tabular} (left-center) shows the estimation accuracy for the MiniBooNE dataset when using 1K and 10K data points. The noisy
estimates converge as we use more Monte Carlo samples, but we see that the amortized estimates are always more accurate in terms of both squared error and correlation with the ground truth. The improvement is largest for the noisiest estimates, where the amortized predictions have correlation ${>}0.9$ when using only $50$ samples. Amortization is more beneficial
for the 10K dataset, which suggests that training with more noisy labels can be a substitute for high-quality labels.
\Cref{app:results} shows similar results with the adult census dataset.


Next, \Cref{fig:valuation-tabular} (right)
considers the role of training dataset size for the estimates with $50$ Monte Carlo samples. For both the adult census and MiniBooNE datasets, we see that the benefits of amortization are small with $250$ data points but grow as we approach 10K data points. This number of samples is enough to maintain $0.9$ correlation with the ground truth when using amortization, whereas the raw estimates become increasingly inaccurate. Stochastic amortization is therefore promising to scale data valuation beyond previous works, which
typically focus on ${<}$1K data points \citep{ghorbani2019interpretation, kwon2021beta, wang2022data}.



\begin{figure}[t]
\centering
\includegraphics[width=0.66\linewidth, trim={0.4cm 0cm 20.5cm 0},clip]{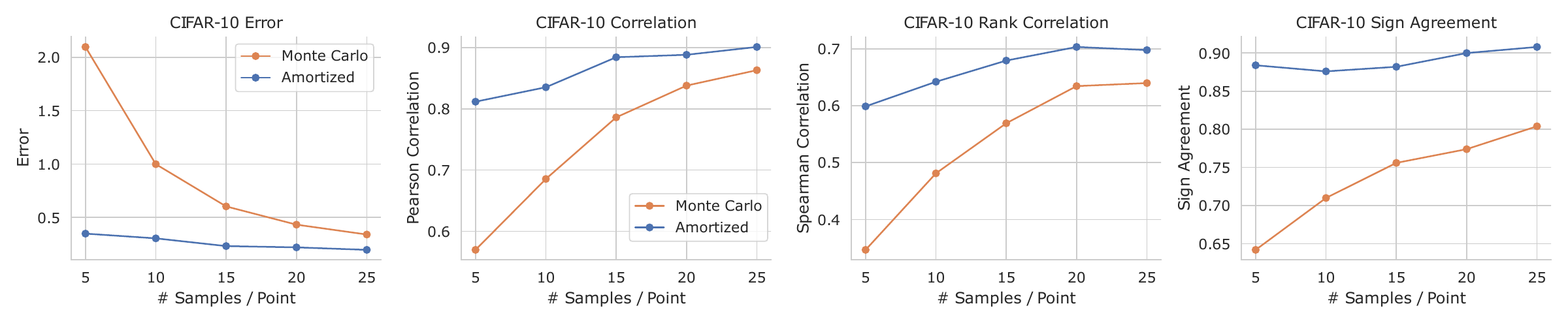}
\includegraphics[width=0.33\linewidth, trim={0 0cm 10.1cm 0},clip]{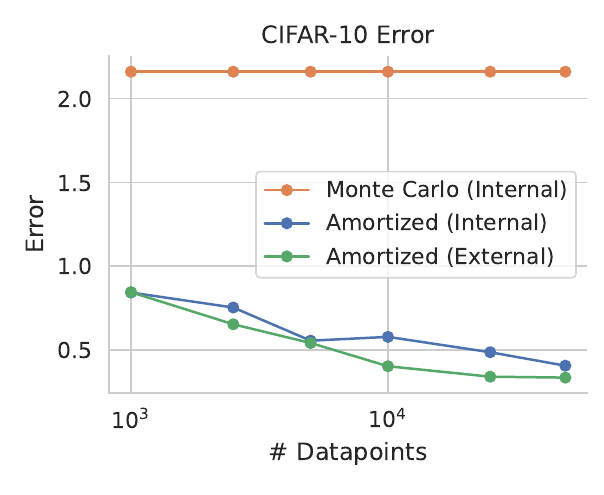}
\vspace{-0.25in}
\caption{Distributional data valuation for CIFAR-10.
Left: estimation error when using different numbers of samples for the noisy label estimates. Center: Pearson correlation with the ground truth for different numbers of noisy samples.
Right: estimation error as a function of dataset size, where all results use $5$ Monte Carlo samples per data points; we compare the error for amortized estimates on internal (training) and external (unseen) data points, demonstrating strong generalization.
} \label{fig:valuation-image}
\end{figure}

\subsection{Distributional data valuation} \label{sec:distributional-valuation}

Finally, we consider Distributional Data Shapley \citep{ghorbani2020distributional}, which is similar to the previous experiments but defines valuation scores even for points outside the training dataset; this allows us to test the generalization to unseen data. We use the CIFAR-10 dataset \citep{krizhevsky2009learning}, which contains 50K training examples, and for our valuation model we train a ResNet-18 that outputs scores for each class.

Similar to the previous experiments, \Cref{fig:valuation-image} (left-center) evaluates the estimation accuracy for different numbers of Monte Carlo samples. We observe that the distributional scores converge faster because we use a smaller maximum dataset cardinality (see \Cref{app:implementation}), but that amortization still provides a significant benefit. The improvement is largest for the noisiest estimates, which in this case use just $5$ samples: amortization achieves correlation $0.81$ with the ground truth, versus $0.58$ for the Monte Carlo estimates. The improvement is consistent across several measures of accuracy, including squared error, Pearson correlation and Spearman correlation
(see \Cref{app:results}).

Next, we study generalization and the role of dataset size. We focus on the noisiest estimates with $5$ Monte Carlo samples, because this low-sample regime is most relevant for larger datasets with millions of examples. We train the valuation model with different portions of the 50K training set, and we measure the estimation accuracy for both internal and unseen external data. \Cref{fig:valuation-image} (right) shows large improvements in squared error with as few as 1K data points, and we also observe improvement in correlation when we use at least 5K data points (10\% of the dataset, see \Cref{app:results}). We additionally observe a small generalization gap, suggesting that the valuation model can be trained with a subset of the data and reliably applied to unseen examples.


Lastly, we test the usage of amortized valuation scores in downstream tasks. Following \cite{jiang2023opendataval}, the tasks we consider are identifying mislabeled examples and improving the model by removing low-value examples. These results are shown in \Cref{app:results}, and we find that the amortized estimates identify mislabeled examples more reliably than Monte Carlo estimates, and that filtering the dataset based on our estimates leads to improved performance.

\section{Conclusion}
\label{sec:conclusion}
This work explored the idea of stochastic amortization, or training amortized models with noisy labels. Our main finding is that fast, noisy supervision provides substantial compute and accuracy gains over existing XML approximations. This approach makes several feature attribution and data valuation methods more practical for large datasets and real-time applications, and it may have broader applications to amortization beyond XML \citep{amos2022tutorial}. Our proposal has certain limitations, including that stochastic amortization may become ineffective with sufficiently high noise levels, and that it is difficult to know a priori how much compute is necessary for label generation to achieve a desired error level in the amortized predictions.

Our work suggests multiple directions for future research. One direction is to study the trade-off between using a larger number of noisy labels or a smaller number of more accurate labels, which is a key difference from prior work that uses near-exact labels for amortization \citep{chuang2023cortx}. Other directions include scaling to datasets with millions of examples to test the limits of noisy supervision,
leveraging more sophisticated data valuation estimators \citep{wu2023variance}, using alternative model retraining primitives \citep{koh2017understanding, kwon2023datainf, grosse2023studying},
and exploring amortization for other methods like datamodels (discussed in \Cref{app:datamodels}).

\section*{Code}

We provide two repositories to reproduce each our results:

{\renewcommand{\arraystretch}{1.4}
\begin{table}[h]
    \centering
    \begin{tabular}{lll}
         \raisebox{-0.2\height}{\includegraphics[height=4.5mm]{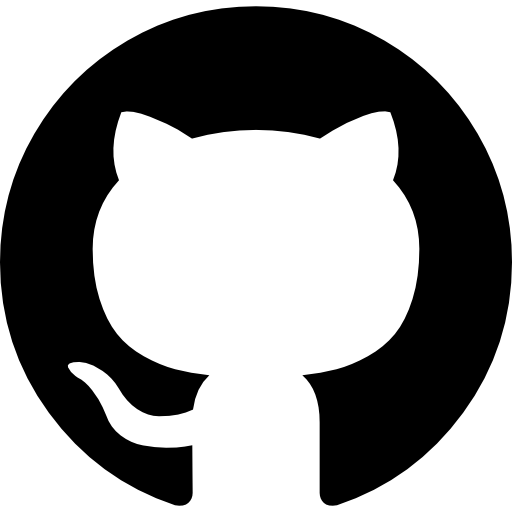}} & \textbf{Feature attribution} & \url{https://github.com/chanwkimlab/amortized-attribution} \\
         \raisebox{-0.2\height}{\includegraphics[height=4.5mm]{images/github-logo.png}} & \textbf{Data valuation} & \url{https://github.com/iancovert/amortized-valuation} \\
    \end{tabular}
\end{table}
}

\section*{Acknowledgements}

The authors thank Yongchan Kwon for helpful discussions and advice on using OpenDataVal. We also thank Mukund Sudarshan and Neil Jethani for early conversations about amortizing data valuation. Chanwoo Kim and Su-In Lee were supported by the National Science Foundation (CAREER DBI-1552309 and DBI-1759487) and the National Institutes of Health (R35 GM 128638 and R01 AG061132). 

\bibliography{main}

\begin{thebibliography}{111}
\providecommand{\natexlab}[1]{#1}
\providecommand{\url}[1]{\texttt{#1}}
\expandafter\ifx\csname urlstyle\endcsname\relax
  \providecommand{\doi}[1]{doi: #1}\else
  \providecommand{\doi}{doi: \begingroup \urlstyle{rm}\Url}\fi

\bibitem[Abnar and Zuidema(2020)]{abnar2020quantifying}
Samira Abnar and Willem Zuidema.
\newblock Quantifying attention flow in transformers.
\newblock \emph{arXiv preprint arXiv:2005.00928}, 2020.

\bibitem[Amos(2022)]{amos2022tutorial}
Brandon Amos.
\newblock Tutorial on amortized optimization for learning to optimize over
  continuous domains.
\newblock \emph{arXiv preprint arXiv:2202.00665}, 2022.

\bibitem[Ancona et~al.(2018)Ancona, Ceolini, {\"O}ztireli, and
  Gross]{ancona2018towards}
Marco Ancona, Enea Ceolini, Cengiz {\"O}ztireli, and Markus Gross.
\newblock Towards better understanding of gradient-based attribution methods
  for deep neural networks.
\newblock In \emph{International Conference on Learning Representations}, 2018.

\bibitem[Banzhaf(1964)]{banzhaf1964weighted}
John~F Banzhaf.
\newblock Weighted voting doesn't work: a mathematical analysis.
\newblock \emph{Rutgers Law Review}, 19:\penalty0 317, 1964.

\bibitem[Bubeck et~al.(2015)]{bubeck2015convex}
S{\'e}bastien Bubeck et~al.
\newblock Convex optimization: Algorithms and complexity.
\newblock \emph{Foundations and Trends{\textregistered} in Machine Learning},
  8\penalty0 (3-4):\penalty0 231--357, 2015.

\bibitem[Castro et~al.(2009)Castro, G{\'o}mez, and
  Tejada]{castro2009polynomial}
Javier Castro, Daniel G{\'o}mez, and Juan Tejada.
\newblock Polynomial calculation of the {S}hapley value based on sampling.
\newblock \emph{Computers \& Operations Research}, 36\penalty0 (5):\penalty0
  1726--1730, 2009.

\bibitem[Charnes et~al.(1988)Charnes, Golany, Keane, and
  Rousseau]{charnes1988extremal}
A~Charnes, B~Golany, M~Keane, and J~Rousseau.
\newblock Extremal principle solutions of games in characteristic function
  form: core, {C}hebychev and {S}hapley value generalizations.
\newblock In \emph{Econometrics of Planning and Efficiency}, pages 123--133.
  Springer, 1988.

\bibitem[Chattopadhyay et~al.(2022)Chattopadhyay, Slocum, Haeffele, Vidal, and
  Geman]{chattopadhyay2022interpretable}
Aditya Chattopadhyay, Stewart Slocum, Benjamin~D Haeffele, Rene Vidal, and
  Donald Geman.
\newblock Interpretable by design: Learning predictors by composing
  interpretable queries.
\newblock \emph{IEEE Transactions on Pattern Analysis and Machine
  Intelligence}, 2022.

\bibitem[Chattopadhyay et~al.(2023)Chattopadhyay, Chan, Haeffele, Geman, and
  Vidal]{chattopadhyay2023variational}
Aditya Chattopadhyay, Kwan Ho~Ryan Chan, Benjamin~D Haeffele, Donald Geman, and
  Ren{\'e} Vidal.
\newblock Variational information pursuit for interpretable predictions.
\newblock \emph{arXiv preprint arXiv:2302.02876}, 2023.

\bibitem[Chen et~al.(2022)Chen, Covert, Lundberg, and Lee]{chen2022algorithms}
Hugh Chen, Ian~C Covert, Scott~M Lundberg, and Su-In Lee.
\newblock Algorithms to estimate {S}hapley value feature attributions.
\newblock \emph{arXiv preprint arXiv:2207.07605}, 2022.

\bibitem[Chen and Jordan(2020)]{chen2020ls}
Jianbo Chen and Michael Jordan.
\newblock {LS-Tree}: Model interpretation when the data are linguistic.
\newblock In \emph{Proceedings of the AAAI Conference on Artificial
  Intelligence}, volume~34, pages 3454--3461, 2020.

\bibitem[Chen et~al.(2018)Chen, Song, Wainwright, and Jordan]{chen2018learning}
Jianbo Chen, Le~Song, Martin~J Wainwright, and Michael~I Jordan.
\newblock Learning to explain: An information-theoretic perspective on model
  interpretation.
\newblock \emph{arXiv preprint arXiv:1802.07814}, 2018.

\bibitem[Chuang et~al.(2023{\natexlab{a}})Chuang, Wang, Yang, Liu, Cai, Du, and
  Hu]{chuang2023efficient}
Yu-Neng Chuang, Guanchu Wang, Fan Yang, Zirui Liu, Xuanting Cai, Mengnan Du,
  and Xia Hu.
\newblock Efficient {XAI} techniques: A taxonomic survey.
\newblock \emph{arXiv preprint arXiv:2302.03225}, 2023{\natexlab{a}}.

\bibitem[Chuang et~al.(2023{\natexlab{b}})Chuang, Wang, Yang, Zhou, Tripathi,
  Cai, and Hu]{chuang2023cortx}
Yu-Neng Chuang, Guanchu Wang, Fan Yang, Quan Zhou, Pushkar Tripathi, Xuanting
  Cai, and Xia Hu.
\newblock {CoRTX}: Contrastive framework for real-time explanation.
\newblock \emph{arXiv preprint arXiv:2303.02794}, 2023{\natexlab{b}}.

\bibitem[Cook and Weisberg(1980)]{cook1980characterizations}
R~Dennis Cook and Sanford Weisberg.
\newblock Characterizations of an empirical influence function for detecting
  influential cases in regression.
\newblock \emph{Technometrics}, 22\penalty0 (4):\penalty0 495--508, 1980.

\bibitem[Covert and Lee(2021)]{covert2021improving}
Ian Covert and Su-In Lee.
\newblock Improving {KernelSHAP}: Practical {Shapley} value estimation using
  linear regression.
\newblock In \emph{International Conference on Artificial Intelligence and
  Statistics}, pages 3457--3465. PMLR, 2021.

\bibitem[Covert et~al.(2021)Covert, Lundberg, and Lee]{covert2021explaining}
Ian Covert, Scott Lundberg, and Su-In Lee.
\newblock Explaining by removing: A unified framework for model explanation.
\newblock \emph{Journal of Machine Learning Research}, 22\penalty0
  (209):\penalty0 1--90, 2021.

\bibitem[Covert et~al.(2022)Covert, Kim, and Lee]{covert2022learning}
Ian Covert, Chanwoo Kim, and Su-In Lee.
\newblock Learning to estimate {S}hapley values with vision transformers.
\newblock \emph{arXiv preprint arXiv:2206.05282}, 2022.

\bibitem[Covert et~al.(2023)Covert, Qiu, Lu, Kim, White, and
  Lee]{covert2023learning}
Ian Covert, Wei Qiu, Mingyu Lu, Nayoon Kim, Nathan White, and Su-In Lee.
\newblock Learning to maximize mutual information for dynamic feature
  selection.
\newblock \emph{arXiv preprint arXiv:2301.00557}, 2023.

\bibitem[Covert et~al.(2019)Covert, Lundberg, and Lee]{covertshapley}
Ian~C Covert, Scott Lundberg, and Su-In Lee.
\newblock {S}hapley feature utility.
\newblock In \emph{Machine Learning in Computational Biology}, 2019.

\bibitem[Dabkowski and Gal(2017)]{dabkowski2017real}
Piotr Dabkowski and Yarin Gal.
\newblock Real time image saliency for black box classifiers.
\newblock In \emph{Advances in Neural Information Processing Systems}, pages
  6967--6976, 2017.

\bibitem[Dekel et~al.(2012)Dekel, Gilad-Bachrach, Shamir, and
  Xiao]{dekel2012optimal}
Ofer Dekel, Ran Gilad-Bachrach, Ohad Shamir, and Lin Xiao.
\newblock Optimal distributed online prediction using mini-batches.
\newblock \emph{Journal of Machine Learning Research}, 13\penalty0 (1), 2012.

\bibitem[Deng et~al.(2009)Deng, Dong, Socher, Li, Li, and
  Fei-Fei]{deng2009imagenet}
Jia Deng, Wei Dong, Richard Socher, Li-Jia Li, Kai Li, and Li~Fei-Fei.
\newblock Imagenet: A large-scale hierarchical image database.
\newblock In \emph{IEEE Conference on Computer Vision and Pattern Recognition},
  pages 248--255. IEEE, 2009.

\bibitem[Dosovitskiy et~al.(2020)Dosovitskiy, Beyer, Kolesnikov, Weissenborn,
  Zhai, Unterthiner, Dehghani, Minderer, Heigold, Gelly,
  et~al.]{dosovitskiy2020image}
Alexey Dosovitskiy, Lucas Beyer, Alexander Kolesnikov, Dirk Weissenborn,
  Xiaohua Zhai, Thomas Unterthiner, Mostafa Dehghani, Matthias Minderer, Georg
  Heigold, Sylvain Gelly, et~al.
\newblock An image is worth 16x16 words: Transformers for image recognition at
  scale.
\newblock \emph{arXiv preprint arXiv:2010.11929}, 2020.

\bibitem[Dua and Graff(2017)]{dua2017uci}
Dheeru Dua and Casey Graff.
\newblock {UCI} machine learning repository, 2017.
\newblock URL \url{http://archive.ics.uci.edu/ml}.

\bibitem[Dubey and Shapley(1979)]{dubey1979mathematical}
Pradeep Dubey and Lloyd~S Shapley.
\newblock Mathematical properties of the {B}anzhaf power index.
\newblock \emph{Mathematics of Operations Research}, 4\penalty0 (2):\penalty0
  99--131, 1979.

\bibitem[Engstrom et~al.(2024)Engstrom, Feldmann, and Madry]{engstrom2024dsdm}
Logan Engstrom, Axel Feldmann, and Aleksander Madry.
\newblock {DsDm}: Model-aware dataset selection with datamodels.
\newblock \emph{arXiv preprint arXiv:2401.12926}, 2024.

\bibitem[Feldman and Zhang(2020)]{feldman2020neural}
Vitaly Feldman and Chiyuan Zhang.
\newblock What neural networks memorize and why: Discovering the long tail via
  influence estimation.
\newblock \emph{Advances in Neural Information Processing Systems},
  33:\penalty0 2881--2891, 2020.

\bibitem[Finn et~al.(2017)Finn, Abbeel, and Levine]{finn2017model}
Chelsea Finn, Pieter Abbeel, and Sergey Levine.
\newblock Model-agnostic meta-learning for fast adaptation of deep networks.
\newblock In \emph{International Conference on Machine Learning}, pages
  1126--1135. PMLR, 2017.

\bibitem[Fong et~al.(2019)Fong, Patrick, and Vedaldi]{fong2019understanding}
Ruth Fong, Mandela Patrick, and Andrea Vedaldi.
\newblock Understanding deep networks via extremal perturbations and smooth
  masks.
\newblock In \emph{Proceedings of the IEEE International Conference on Computer
  Vision}, pages 2950--2958, 2019.

\bibitem[Fong and Vedaldi(2017)]{fong2017interpretable}
Ruth~C Fong and Andrea Vedaldi.
\newblock Interpretable explanations of black boxes by meaningful perturbation.
\newblock In \emph{Proceedings of the IEEE International Conference on Computer
  Vision}, pages 3429--3437, 2017.

\bibitem[Fumagalli et~al.(2023)Fumagalli, Muschalik, Kolpaczki,
  H{\"u}llermeier, and Hammer]{fumagalli2023shap}
Fabian Fumagalli, Maximilian Muschalik, Patrick Kolpaczki, Eyke
  H{\"u}llermeier, and Barbara Hammer.
\newblock {SHAP-IQ}: Unified approximation of any-order {S}hapley interactions.
\newblock \emph{arXiv preprint arXiv:2303.01179}, 2023.

\bibitem[Ghorbani and Zou(2019)]{ghorbani2019data}
Amirata Ghorbani and James Zou.
\newblock Data {S}hapley: Equitable valuation of data for machine learning.
\newblock In \emph{International Conference on Machine Learning}, pages
  2242--2251. PMLR, 2019.

\bibitem[Ghorbani et~al.(2019)Ghorbani, Abid, and
  Zou]{ghorbani2019interpretation}
Amirata Ghorbani, Abubakar Abid, and James Zou.
\newblock Interpretation of neural networks is fragile.
\newblock In \emph{Proceedings of the AAAI Conference on Artificial
  Intelligence}, volume~33, pages 3681--3688, 2019.

\bibitem[Ghorbani et~al.(2020)Ghorbani, Kim, and
  Zou]{ghorbani2020distributional}
Amirata Ghorbani, Michael Kim, and James Zou.
\newblock A distributional framework for data valuation.
\newblock In \emph{International Conference on Machine Learning}, pages
  3535--3544. PMLR, 2020.

\bibitem[Ghorbani et~al.(2022)Ghorbani, Zou, and Esteva]{ghorbani2022data}
Amirata Ghorbani, James Zou, and Andre Esteva.
\newblock Data {S}hapley valuation for efficient batch active learning.
\newblock In \emph{2022 56th Asilomar Conference on Signals, Systems, and
  Computers}, pages 1456--1462. IEEE, 2022.

\bibitem[Grosse et~al.(2023)Grosse, Bae, Anil, Elhage, Tamkin, Tajdini,
  Steiner, Li, Durmus, Perez, et~al.]{grosse2023studying}
Roger Grosse, Juhan Bae, Cem Anil, Nelson Elhage, Alex Tamkin, Amirhossein
  Tajdini, Benoit Steiner, Dustin Li, Esin Durmus, Ethan Perez, et~al.
\newblock Studying large language model generalization with influence
  functions.
\newblock \emph{arXiv preprint arXiv:2308.03296}, 2023.

\bibitem[Ha et~al.(2016)Ha, Dai, and Le]{ha2016hypernetworks}
David Ha, Andrew~M Dai, and Quoc~V Le.
\newblock {HyperNetworks}.
\newblock In \emph{International Conference on Learning Representations}, 2016.

\bibitem[Haarnoja et~al.(2018)Haarnoja, Zhou, Hartikainen, Tucker, Ha, Tan,
  Kumar, Zhu, Gupta, Abbeel, et~al.]{haarnoja2018soft}
Tuomas Haarnoja, Aurick Zhou, Kristian Hartikainen, George Tucker, Sehoon Ha,
  Jie Tan, Vikash Kumar, Henry Zhu, Abhishek Gupta, Pieter Abbeel, et~al.
\newblock Soft actor-critic algorithms and applications.
\newblock \emph{arXiv preprint arXiv:1812.05905}, 2018.

\bibitem[Hammer and Holzman(1992)]{hammer1992approximations}
Peter~L Hammer and Ron Holzman.
\newblock Approximations of pseudo-boolean functions; applications to game
  theory.
\newblock \emph{Zeitschrift f{\"u}r Operations Research}, 36\penalty0
  (1):\penalty0 3--21, 1992.

\bibitem[Hazan et~al.(2016)]{hazan2016introduction}
Elad Hazan et~al.
\newblock Introduction to online convex optimization.
\newblock \emph{Foundations and Trends{\textregistered} in Optimization},
  2\penalty0 (3-4):\penalty0 157--325, 2016.

\bibitem[He et~al.(2016{\natexlab{a}})He, Mineiro, and
  Karampatziakis]{he2016active}
He~He, Paul Mineiro, and Nikos Karampatziakis.
\newblock Active information acquisition.
\newblock \emph{arXiv preprint arXiv:1602.02181}, 2016{\natexlab{a}}.

\bibitem[He et~al.(2016{\natexlab{b}})He, Zhang, Ren, and Sun]{he2016deep}
Kaiming He, Xiangyu Zhang, Shaoqing Ren, and Jian Sun.
\newblock Deep residual learning for image recognition.
\newblock In \emph{Proceedings of the IEEE Conference on Computer Vision and
  Pattern Recognition}, pages 770--778, 2016{\natexlab{b}}.

\bibitem[He et~al.(2022)He, Mao, Ma, Huang, Hern{\`a}ndez-Lobato, and
  Chen]{he2022bsoda}
Weijie He, Xiaohao Mao, Chao Ma, Yu~Huang, Jos{\'e}~Miguel
  Hern{\`a}ndez-Lobato, and Ting Chen.
\newblock {BSODA}: a bipartite scalable framework for online disease diagnosis.
\newblock In \emph{Proceedings of the ACM Web Conference 2022}, pages
  2511--2521, 2022.

\bibitem[Heess et~al.(2015)Heess, Wayne, Silver, Lillicrap, Erez, and
  Tassa]{heess2015learning}
Nicolas Heess, Gregory Wayne, David Silver, Timothy Lillicrap, Tom Erez, and
  Yuval Tassa.
\newblock Learning continuous control policies by stochastic value gradients.
\newblock \emph{Advances in Neural Information Processing Systems}, 28, 2015.

\bibitem[Howard and Gugger(2020)]{howard2020fastai}
Jeremy Howard and Sylvain Gugger.
\newblock Fast{AI}: A layered {API} for deep learning.
\newblock \emph{Information}, 11\penalty0 (2):\penalty0 108, 2020.

\bibitem[Ilyas et~al.(2022)Ilyas, Park, Engstrom, Leclerc, and
  Madry]{ilyas2022datamodels}
Andrew Ilyas, Sung~Min Park, Logan Engstrom, Guillaume Leclerc, and Aleksander
  Madry.
\newblock Datamodels: Predicting predictions from training data.
\newblock \emph{arXiv preprint arXiv:2202.00622}, 2022.

\bibitem[Jain et~al.(2021)Jain, Salman, Wong, Zhang, Vineet, Vemprala, and
  Madry]{jain2021missingness}
Saachi Jain, Hadi Salman, Eric Wong, Pengchuan Zhang, Vibhav Vineet, Sai
  Vemprala, and Aleksander Madry.
\newblock Missingness bias in model debugging.
\newblock In \emph{International Conference on Learning Representations}, 2021.

\bibitem[Jethani et~al.(2021{\natexlab{a}})Jethani, Sudarshan, Aphinyanaphongs,
  and Ranganath]{jethani2021have}
Neil Jethani, Mukund Sudarshan, Yindalon Aphinyanaphongs, and Rajesh Ranganath.
\newblock Have we learned to explain?: How interpretability methods can learn
  to encode predictions in their interpretations.
\newblock In \emph{International Conference on Artificial Intelligence and
  Statistics}, pages 1459--1467. PMLR, 2021{\natexlab{a}}.

\bibitem[Jethani et~al.(2021{\natexlab{b}})Jethani, Sudarshan, Covert, Lee, and
  Ranganath]{jethani2021fastshap}
Neil Jethani, Mukund Sudarshan, Ian~Connick Covert, Su-In Lee, and Rajesh
  Ranganath.
\newblock Fast{SHAP}: Real-time {S}hapley value estimation.
\newblock In \emph{International Conference on Learning Representations},
  2021{\natexlab{b}}.

\bibitem[Jia et~al.(2019{\natexlab{a}})Jia, Dao, Wang, Hubis, Gurel, Li, Zhang,
  Spanos, and Song]{jia2019efficient}
Ruoxi Jia, David Dao, Boxin Wang, Frances~Ann Hubis, Nezihe~Merve Gurel, Bo~Li,
  Ce~Zhang, Costas~J Spanos, and Dawn Song.
\newblock Efficient task-specific data valuation for nearest neighbor
  algorithms.
\newblock \emph{arXiv preprint arXiv:1908.08619}, 2019{\natexlab{a}}.

\bibitem[Jia et~al.(2019{\natexlab{b}})Jia, Dao, Wang, Hubis, Hynes, G{\"u}rel,
  Li, Zhang, Song, and Spanos]{jia2019towards}
Ruoxi Jia, David Dao, Boxin Wang, Frances~Ann Hubis, Nick Hynes, Nezihe~Merve
  G{\"u}rel, Bo~Li, Ce~Zhang, Dawn Song, and Costas~J Spanos.
\newblock Towards efficient data valuation based on the {S}hapley value.
\newblock In \emph{The 22nd International Conference on Artificial Intelligence
  and Statistics}, pages 1167--1176. PMLR, 2019{\natexlab{b}}.

\bibitem[Jiang et~al.(2023)Jiang, Liang, Zou, and Kwon]{jiang2023opendataval}
Kevin~Fu Jiang, Weixin Liang, James Zou, and Yongchan Kwon.
\newblock {OpenDataVal}: a unified benchmark for data valuation.
\newblock \emph{arXiv preprint arXiv:2306.10577}, 2023.

\bibitem[Just et~al.(2023)Just, Kang, Wang, Zeng, Ko, Jin, and
  Jia]{just2023lava}
Hoang~Anh Just, Feiyang Kang, Jiachen~T Wang, Yi~Zeng, Myeongseob Ko, Ming Jin,
  and Ruoxi Jia.
\newblock Lava: Data valuation without pre-specified learning algorithms.
\newblock \emph{arXiv preprint arXiv:2305.00054}, 2023.

\bibitem[Kingma and Ba(2014)]{kingma2014adam}
Diederik~P Kingma and Jimmy Ba.
\newblock Adam: A method for stochastic optimization.
\newblock \emph{arXiv preprint arXiv:1412.6980}, 2014.

\bibitem[Kingma and Welling(2013)]{kingma2013auto}
Diederik~P Kingma and Max Welling.
\newblock Auto-encoding variational bayes.
\newblock \emph{arXiv preprint arXiv:1312.6114}, 2013.

\bibitem[Koh and Liang(2017)]{koh2017understanding}
Pang~Wei Koh and Percy Liang.
\newblock Understanding black-box predictions via influence functions.
\newblock In \emph{International conference on machine learning}, pages
  1885--1894. PMLR, 2017.

\bibitem[Kolpaczki et~al.(2023)Kolpaczki, Bengs, Muschalik, and
  H{\"u}llermeier]{kolpaczki2023approximating}
Patrick Kolpaczki, Viktor Bengs, Maximilian Muschalik, and Eyke
  H{\"u}llermeier.
\newblock Approximating the shapley value without marginal contributions.
\newblock \emph{arXiv preprint arXiv:2302.00736}, 2023.

\bibitem[Krizhevsky et~al.(2009)Krizhevsky, Hinton,
  et~al.]{krizhevsky2009learning}
Alex Krizhevsky, Geoffrey Hinton, et~al.
\newblock Learning multiple layers of features from tiny images.
\newblock 2009.

\bibitem[Kwon and Zou(2021)]{kwon2021beta}
Yongchan Kwon and James Zou.
\newblock Beta {S}hapley: a unified and noise-reduced data valuation framework
  for machine learning.
\newblock \emph{arXiv preprint arXiv:2110.14049}, 2021.

\bibitem[Kwon and Zou(2023)]{kwon2023data}
Yongchan Kwon and James Zou.
\newblock {Data-OOB}: Out-of-bag estimate as a simple and efficient data value.
\newblock \emph{arXiv preprint arXiv:2304.07718}, 2023.

\bibitem[Kwon and Zou(2022)]{kwon2022weightedshap}
Yongchan Kwon and James~Y Zou.
\newblock {WeightedSHAP}: analyzing and improving {S}hapley based feature
  attributions.
\newblock \emph{Advances in Neural Information Processing Systems},
  35:\penalty0 34363--34376, 2022.

\bibitem[Kwon et~al.(2023)Kwon, Wu, Wu, and Zou]{kwon2023datainf}
Yongchan Kwon, Eric Wu, Kevin Wu, and James Zou.
\newblock {DataInf}: Efficiently estimating data influence in lora-tuned llms
  and diffusion models.
\newblock \emph{arXiv preprint arXiv:2310.00902}, 2023.

\bibitem[Levine and Koltun(2013)]{levine2013guided}
Sergey Levine and Vladlen Koltun.
\newblock Guided policy search.
\newblock In \emph{International Conference on Machine Learning}, pages 1--9.
  PMLR, 2013.

\bibitem[Li and Yu(2024)]{li2024faster}
Weida Li and Yaoliang Yu.
\newblock Faster approximation of probabilistic and distributional values via
  least squares.
\newblock In \emph{International Conference on Learning Representations}, 2024.

\bibitem[Lin et~al.(2023)Lin, Covert, and Lee]{lin2023robustness}
Chris Lin, Ian Covert, and Su-In Lee.
\newblock On the robustness of removal-based feature attributions.
\newblock \emph{arXiv preprint arXiv:2306.07462}, 2023.

\bibitem[Loshchilov and Hutter(2019)]{loshchilov2019decoupled}
Ilya Loshchilov and Frank Hutter.
\newblock Decoupled weight decay regularization.
\newblock In \emph{International Conference on Learning Representations}, 2019.

\bibitem[Lundberg and Lee(2017)]{lundberg2017unified}
Scott~M Lundberg and Su-In Lee.
\newblock A unified approach to interpreting model predictions.
\newblock In \emph{Advances in Neural Information Processing Systems}, pages
  4765--4774, 2017.

\bibitem[Lundberg et~al.(2020)Lundberg, Erion, Chen, DeGrave, Prutkin, Nair,
  Katz, Himmelfarb, Bansal, and Lee]{lundberg2020local}
Scott~M. Lundberg, Gabriel Erion, Hugh Chen, Alex DeGrave, Jordan~M. Prutkin,
  Bala Nair, Ronit Katz, Jonathan Himmelfarb, Nisha Bansal, and Su-In Lee.
\newblock From local explanations to global understanding with explainable {AI}
  for trees.
\newblock \emph{Nature Machine Intelligence}, 2\penalty0 (1):\penalty0
  2522--5839, 2020.

\bibitem[Ma et~al.(2019)Ma, Tschiatschek, Palla, Hernandez-Lobato, Nowozin, and
  Zhang]{ma2019eddi}
Chao Ma, Sebastian Tschiatschek, Konstantina Palla, Jose~Miguel
  Hernandez-Lobato, Sebastian Nowozin, and Cheng Zhang.
\newblock {EDDI}: Efficient dynamic discovery of high-value information with
  partial {VAE}.
\newblock In \emph{International Conference on Machine Learning}, pages
  4234--4243. PMLR, 2019.

\bibitem[Maddison et~al.(2016)Maddison, Mnih, and Teh]{maddison2016concrete}
Chris~J Maddison, Andriy Mnih, and Yee~Whye Teh.
\newblock The concrete distribution: A continuous relaxation of discrete random
  variables.
\newblock \emph{arXiv preprint arXiv:1611.00712}, 2016.

\bibitem[Mahajan et~al.(2019)Mahajan, Tan, and Sharma]{mahajan2019preserving}
Divyat Mahajan, Chenhao Tan, and Amit Sharma.
\newblock Preserving causal constraints in counterfactual explanations for
  machine learning classifiers.
\newblock \emph{arXiv preprint arXiv:1912.03277}, 2019.

\bibitem[Marichal and Mathonet(2011)]{marichal2011weighted}
Jean-Luc Marichal and Pierre Mathonet.
\newblock Weighted {B}anzhaf power and interaction indexes through weighted
  approximations of games.
\newblock \emph{European Journal of Operational Research}, 211\penalty0
  (2):\penalty0 352--358, 2011.

\bibitem[Mitchell et~al.(2021)Mitchell, Cooper, Frank, and
  Holmes]{mitchell2021sampling}
Rory Mitchell, Joshua Cooper, Eibe Frank, and Geoffrey Holmes.
\newblock Sampling permutations for {S}hapley value estimation.
\newblock \emph{arXiv preprint arXiv:2104.12199}, 2021.

\bibitem[Monderer et~al.(2002)Monderer, Samet, et~al.]{monderer2002variations}
Dov Monderer, Dov Samet, et~al.
\newblock Variations on the {S}hapley value.
\newblock \emph{Handbook of Game Theory}, 3:\penalty0 2055--2076, 2002.

\bibitem[Nemirovski et~al.(2009)Nemirovski, Juditsky, Lan, and
  Shapiro]{nemirovski2009robust}
Arkadi Nemirovski, Anatoli Juditsky, Guanghui Lan, and Alexander Shapiro.
\newblock Robust stochastic approximation approach to stochastic programming.
\newblock \emph{SIAM Journal on Optimization}, 19\penalty0 (4):\penalty0
  1574--1609, 2009.

\bibitem[Okhrati and Lipani(2021)]{okhrati2021multilinear}
Ramin Okhrati and Aldo Lipani.
\newblock A multilinear sampling algorithm to estimate shapley values.
\newblock In \emph{2020 25th International Conference on Pattern Recognition
  (ICPR)}, pages 7992--7999. IEEE, 2021.

\bibitem[Park et~al.(2023)Park, Georgiev, Ilyas, Leclerc, and
  Madry]{park2023trak}
Sung~Min Park, Kristian Georgiev, Andrew Ilyas, Guillaume Leclerc, and
  Aleksander Madry.
\newblock {TRAK}: Attributing model behavior at scale.
\newblock \emph{arXiv preprint arXiv:2303.14186}, 2023.

\bibitem[Petersen et~al.(2008)Petersen, Pedersen, et~al.]{petersen2008matrix}
Kaare~Brandt Petersen, Michael~Syskind Pedersen, et~al.
\newblock The matrix cookbook.
\newblock \emph{Technical University of Denmark}, 7\penalty0 (15):\penalty0
  510, 2008.

\bibitem[Rajeswaran et~al.(2019)Rajeswaran, Finn, Kakade, and
  Levine]{rajeswaran2019meta}
Aravind Rajeswaran, Chelsea Finn, Sham~M Kakade, and Sergey Levine.
\newblock Meta-learning with implicit gradients.
\newblock \emph{Advances in Neural Information Processing Systems}, 32, 2019.

\bibitem[Rangrej and Clark(2021)]{rangrej2021probabilistic}
Samrudhdhi~B Rangrej and James~J Clark.
\newblock A probabilistic hard attention model for sequentially observed
  scenes.
\newblock \emph{arXiv preprint arXiv:2111.07534}, 2021.

\bibitem[Rezende et~al.(2014)Rezende, Mohamed, and
  Wierstra]{rezende2014stochastic}
Danilo~Jimenez Rezende, Shakir Mohamed, and Daan Wierstra.
\newblock Stochastic backpropagation and approximate inference in deep
  generative models.
\newblock In \emph{International Conference on Machine Learning}, pages
  1278--1286. PMLR, 2014.

\bibitem[Ribeiro et~al.(2016)Ribeiro, Singh, and Guestrin]{ribeiro2016should}
Marco~Tulio Ribeiro, Sameer Singh, and Carlos Guestrin.
\newblock "{W}hy should {I} trust you?" {E}xplaining the predictions of any
  classifier.
\newblock In \emph{Proceedings of the 22nd ACM SIGKDD International Conference
  on Knowledge Discovery and Data Mining}, pages 1135--1144, 2016.

\bibitem[Roe et~al.(2005)Roe, Yand, Zhu, Lui, Stancu, et~al.]{roc2005boosted}
BP~Roe, HJ~Yand, J~Zhu, Y~Lui, I~Stancu, et~al.
\newblock Boosted decision trees, an alternative to artificial neural networks.
\newblock \emph{Nucl. Instrm. Meth. A}, 543:\penalty0 577--584, 2005.

\bibitem[Saunshi et~al.(2022)Saunshi, Gupta, Braverman, and
  Arora]{saunshi2022understanding}
Nikunj Saunshi, Arushi Gupta, Mark Braverman, and Sanjeev Arora.
\newblock Understanding influence functions and datamodels via harmonic
  analysis.
\newblock \emph{arXiv preprint arXiv:2210.01072}, 2022.

\bibitem[Schwab and Karlen(2019)]{schwab2019cxplain}
Patrick Schwab and Walter Karlen.
\newblock {CXPlain}: Causal explanations for model interpretation under
  uncertainty.
\newblock In \emph{Advances in Neural Information Processing Systems}, pages
  10220--10230, 2019.

\bibitem[Schwarzenberg et~al.(2021)Schwarzenberg, Feldhus, and
  M{\"o}ller]{schwarzenberg2021efficient}
Robert Schwarzenberg, Nils Feldhus, and Sebastian M{\"o}ller.
\newblock Efficient explanations from empirical explainers.
\newblock \emph{arXiv preprint arXiv:2103.15429}, 2021.

\bibitem[Selvaraju et~al.(2017)Selvaraju, Cogswell, Das, Vedantam, Parikh, and
  Batra]{selvaraju2017grad}
Ramprasaath~R Selvaraju, Michael Cogswell, Abhishek Das, Ramakrishna Vedantam,
  Devi Parikh, and Dhruv Batra.
\newblock Grad-{CAM}: Visual explanations from deep networks via gradient-based
  localization.
\newblock In \emph{Proceedings of the IEEE International Conference on Computer
  Vision}, pages 618--626, 2017.

\bibitem[Shapley(1953)]{shapley1953value}
Lloyd~S Shapley.
\newblock A value for n-person games.
\newblock \emph{Contributions to the Theory of Games}, 2\penalty0
  (28):\penalty0 307--317, 1953.

\bibitem[Shrikumar et~al.(2017)Shrikumar, Greenside, and
  Kundaje]{shrikumar2017learning}
Avanti Shrikumar, Peyton Greenside, and Anshul Kundaje.
\newblock Learning important features through propagating activation
  differences.
\newblock In \emph{International Conference on Machine Learning}, pages
  3145--3153. PMLR, 2017.

\bibitem[Shu(2017)]{shu2017amortized}
Rui Shu.
\newblock Amortized optimization, 2017.
\newblock URL \url{http://ruishu.io/2017/11/07/amortized-optimization/}.

\bibitem[Simon and Vincent(2020)]{simon2020projected}
Grah Simon and Thouvenot Vincent.
\newblock A projected stochastic gradient algorithm for estimating {S}hapley
  value applied in attribute importance.
\newblock In \emph{International Cross-Domain Conference for Machine Learning
  and Knowledge Extraction}, pages 97--115. Springer, 2020.

\bibitem[Simonyan et~al.(2013)Simonyan, Vedaldi, and
  Zisserman]{simonyan2013deep}
Karen Simonyan, Andrea Vedaldi, and Andrew Zisserman.
\newblock Deep inside convolutional networks: Visualising image classification
  models and saliency maps.
\newblock \emph{arXiv preprint arXiv:1312.6034}, 2013.

\bibitem[Singla et~al.(2019)Singla, Pollack, Chen, and
  Batmanghelich]{singla2019explanation}
Sumedha Singla, Brian Pollack, Junxiang Chen, and Kayhan Batmanghelich.
\newblock Explanation by progressive exaggeration.
\newblock \emph{arXiv preprint arXiv:1911.00483}, 2019.

\bibitem[Smilkov et~al.(2017)Smilkov, Thorat, Kim, Vi{\'e}gas, and
  Wattenberg]{smilkov2017smoothgrad}
Daniel Smilkov, Nikhil Thorat, Been Kim, Fernanda Vi{\'e}gas, and Martin
  Wattenberg.
\newblock Smoothgrad: removing noise by adding noise.
\newblock \emph{arXiv preprint arXiv:1706.03825}, 2017.

\bibitem[{\v{S}}trumbelj and Kononenko(2010)]{vstrumbelj2010efficient}
Erik {\v{S}}trumbelj and Igor Kononenko.
\newblock An efficient explanation of individual classifications using game
  theory.
\newblock \emph{Journal of Machine Learning Research}, 11:\penalty0 1--18,
  2010.

\bibitem[Sundararajan et~al.(2017)Sundararajan, Taly, and
  Yan]{sundararajan2017axiomatic}
Mukund Sundararajan, Ankur Taly, and Qiqi Yan.
\newblock Axiomatic attribution for deep networks.
\newblock In \emph{Proceedings of the 34th International Conference on Machine
  Learning-Volume 70}, pages 3319--3328. JMLR. org, 2017.

\bibitem[Sundararajan et~al.(2020)Sundararajan, Dhamdhere, and
  Agarwal]{sundararajan2020shapley}
Mukund Sundararajan, Kedar Dhamdhere, and Ashish Agarwal.
\newblock The {S}hapley taylor interaction index.
\newblock In \emph{International Conference on Machine Learning}, pages
  9259--9268. PMLR, 2020.

\bibitem[Van~der Vaart(2000)]{van2000asymptotic}
Aad~W Van~der Vaart.
\newblock \emph{Asymptotic Statistics}, volume~3.
\newblock Cambridge University Press, 2000.

\bibitem[Verma et~al.(2020)Verma, Dickerson, and
  Hines]{verma2020counterfactual}
Sahil Verma, John Dickerson, and Keegan Hines.
\newblock Counterfactual explanations for machine learning: A review.
\newblock \emph{arXiv preprint arXiv:2010.10596}, 2020.

\bibitem[Verma et~al.(2022)Verma, Hines, and Dickerson]{verma2022amortized}
Sahil Verma, Keegan Hines, and John~P Dickerson.
\newblock Amortized generation of sequential algorithmic recourses for
  black-box models.
\newblock In \emph{Proceedings of the AAAI Conference on Artificial
  Intelligence}, volume~36, pages 8512--8519, 2022.

\bibitem[Wachter et~al.(2017)Wachter, Mittelstadt, and
  Russell]{wachter2017counterfactual}
Sandra Wachter, Brent Mittelstadt, and Chris Russell.
\newblock Counterfactual explanations without opening the black box: Automated
  decisions and the {GDPR}.
\newblock \emph{Harvard Journal of Law \& Technology}, 31:\penalty0 841, 2017.

\bibitem[Wang and Jia(2022)]{wang2022data}
Tianhao Wang and Ruoxi Jia.
\newblock Data {B}anzhaf: A data valuation framework with maximal robustness to
  learning stochasticity.
\newblock \emph{arXiv preprint arXiv:2205.15466}, 2022.

\bibitem[Wang et~al.(2021)Wang, Yang, and Jia]{wang2021improving}
Tianhao Wang, Yu~Yang, and Ruoxi Jia.
\newblock Improving cooperative game theory-based data valuation via data
  utility learning.
\newblock \emph{arXiv preprint arXiv:2107.06336}, 2021.

\bibitem[Williamson and Feng(2020)]{williamson2020efficient}
Brian Williamson and Jean Feng.
\newblock Efficient nonparametric statistical inference on population feature
  importance using {S}hapley values.
\newblock In \emph{International Conference on Machine Learning}, pages
  10282--10291. PMLR, 2020.

\bibitem[Wu et~al.(2023)Wu, Jia, Lin, Huang, and Chang]{wu2023variance}
Mengmeng Wu, Ruoxi Jia, Changle Lin, Wei Huang, and Xiangyu Chang.
\newblock Variance reduced {S}hapley value estimation for trustworthy data
  valuation.
\newblock \emph{Computers \& Operations Research}, page 106305, 2023.

\bibitem[Wu et~al.(2022)Wu, Shu, and Low]{wu2022davinz}
Zhaoxuan Wu, Yao Shu, and Bryan Kian~Hsiang Low.
\newblock Davinz: Data valuation using deep neural networks at initialization.
\newblock In \emph{International Conference on Machine Learning}, pages
  24150--24176. PMLR, 2022.

\bibitem[Yoon et~al.(2018)Yoon, Jordon, and van~der Schaar]{yoon2018invase}
Jinsung Yoon, James Jordon, and Mihaela van~der Schaar.
\newblock {INVASE}: Instance-wise variable selection using neural networks.
\newblock In \emph{International Conference on Learning Representations}, 2018.

\bibitem[Zeiler and Fergus(2014)]{zeiler2014visualizing}
Matthew~D Zeiler and Rob Fergus.
\newblock Visualizing and understanding convolutional networks.
\newblock In \emph{European Conference on Computer Vision}, pages 818--833.
  Springer, 2014.

\bibitem[Zhang et~al.(2023)Zhang, Tian, Zheng, Zhou, and
  Lu]{zhang2023exploring}
Borui Zhang, Baotong Tian, Wenzhao Zheng, Jie Zhou, and Jiwen Lu.
\newblock Exploring unified perspective for fast {S}hapley value estimation.
\newblock \emph{arXiv preprint arXiv:2311.01010}, 2023.

\bibitem[Zhang et~al.(2018)Zhang, Bargal, Lin, Brandt, Shen, and
  Sclaroff]{zhang2018top}
Jianming Zhang, Sarah~Adel Bargal, Zhe Lin, Jonathan Brandt, Xiaohui Shen, and
  Stan Sclaroff.
\newblock Top-down neural attention by excitation backprop.
\newblock \emph{International Journal of Computer Vision}, 126\penalty0
  (10):\penalty0 1084--1102, 2018.

\end{thebibliography}
\bibliographystyle{plainnat}

\clearpage
\appendix
\section{Extended Related Work} \label{app:review}

\begin{sidewaystable*} 
\centering
\caption{Summary of amortized methods in XML.}
\begin{scriptsize}
\begin{tabular}{ccccccc}
\toprule
Problem & Context & Domain & Analytic & Per-example optimization & $\Lreg(\theta)$ & $\Lobj(\theta)$ \\
\midrule
Shapley value attribution & \multirow{6}{*}{$x \in \gX$} & \multirow{6}{*}{$\R^d$} & \makecell{\citet{vstrumbelj2010efficient} \\ \citet{okhrati2021multilinear} \\ \citet{mitchell2021sampling}} & \makecell{\citet{lundberg2017unified} \\ \citet{simon2020projected} \\ \citet{covert2021improving}} & \makecell{\citet{schwarzenberg2021efficient} \\ \citet{chuang2023cortx} \\ \citet{zhang2023exploring}} & \makecell{\citet{jethani2021fastshap} \\ \citet{covert2022learning}} \\ \cmidrule{4-7}
Leave-one-out attribution & & & \citet{zeiler2014visualizing} & \nomark & \citet{schwab2019cxplain} & \nomark \\ \cmidrule{4-7}
LIME attribution & & & \nomark & \citet{ribeiro2016should} & \nomark & \nomark \\ \cmidrule{4-7}
Banzhaf value attribution & & & \makecell{\citet{covert2021explaining}} & \citet{chen2020ls} & \nomark & \nomark \\
\midrule
Instance-wise selection & \multirow{4}{*}{$x \in \gX$} & \multirow{4}{*}{$\gP([d])$} & \nomark & \nomark & \nomark & \makecell{\citet{chen2018learning} \\ \citet{yoon2018invase} \\ \citet{jethani2021have}} \\ \cmidrule{4-7}
Image masking & & &  \nomark & \makecell{\citet{fong2017interpretable} \\ \citet{fong2019understanding}} & \nomark & \citet{dabkowski2017real} \\
\midrule
Dynamic feature selection & $x_S \in \gX \times \gP([d])$ & $[d]$ &  \nomark & \citet{ma2019eddi} & \citet{he2016active} & \makecell{\citet{chattopadhyay2023variational} \\ \citet{covert2023learning}} \\
\midrule
Counterfactual explanation & $x \in \gX$ & $\gX$ & \nomark & \citet{wachter2017counterfactual} & \nomark & \makecell{\citet{mahajan2019preserving} \\ \citet{verma2022amortized}} \\
\midrule
Data Shapley & \multirow{4}{*}{$z_i \in \gD$} & \multirow{4}{*}{$\R$} & \citet{ghorbani2019data} & \nomark & \citet{ghorbani2020distributional, ghorbani2022data} & \citet{li2024faster} \\ \cmidrule{4-7}
Beta Shapley & & & \citet{kwon2021beta} & \nomark & \nomark & \nomark \\ \cmidrule{4-7}
Data Banzhaf & & & \citet{wang2022data} & \nomark & \nomark & \nomark \\
\midrule
Datamodels & $(z_i, x) \in \gD \times \gX$ & $\R$ & \nomark & \nomark & \nomark & \nomark \\
\bottomrule
\end{tabular}
\end{scriptsize}
\label{tab:related}
\end{sidewaystable*}

This section provides a more detailed review of amortization in XML. Many XML tasks involve analyzing individual data points, e.g., to determine the most important features or concepts in a model, or a data point's influence on the model's accuracy or an individual prediction. Among these methods, many require expensive per-example algorithms, and a trend in recent years has been to amortize this computation using deep learning. We discuss these works below, grouping them into several main XML problems that they address.

Within this discussion, we differentiate works not only based on their goal (e.g., feature attribution or data valuation) but also based on how they calculate the object of interest. For those that perform per-example computation, we distinguish between methods that solve a parametric optimization problem of the form $a(b) = \argmin_{a'} \; h(a'; b)$ (e.g., with gradient descent) from those that exploit an analytic solution, either by directly calculating the result or using a Monte Carlo approximation. For those that use amortization, we distinguish those that use regression-based amortization with $\Lreg(\theta)$ from those that perform objective-based amortization with $\Lobj(\theta)$ (see \Cref{app:objective-amortization} for more details on objective-based amortization \citep{amos2022tutorial}).

\Cref{tab:related} summarizes the methods we discuss, including the various tasks they solve, the context and output domains ($\gA, \gB$) for each problem, and the computational approach. This perspective highlights the significant role of amortization in XML, and it also reveals opportunities for new applications, some of which we discuss in \Cref{sec:xml}. Some of the same works are discussed by \citet{chuang2023efficient}, but we cover a wider range of methods, and we adopt the framing of \citet{amos2022tutorial} by outlining the various parametric optimization problems and different amortization approaches.

\textbf{Feature attribution.} These methods aim to quantify the importance of a model's input features, typically for individual predictions (such methods are called \textit{local} rather than \textit{global} feature attributions, see \citet{lundberg2020local}). The context is therefore a data example $x \in \gX$, and the output is a vector of attributions in $\R^d$ when we have $d$ features. Feature attribution algorithms can be efficient to calculate, particularly when they are based on simple propagation rules \citep{simonyan2013deep, sundararajan2017axiomatic, shrikumar2017learning, smilkov2017smoothgrad, selvaraju2017grad, zhang2018top, ancona2018towards} or a transformer's attention values \citep{abnar2020quantifying}. However, another family of approaches are based on feature removal \citep{covert2021explaining}, and querying the model with many feature subsets is typically less efficient.

Among the feature removal-based methods, one of the most famous is Shapley value feature attributions \citep{lundberg2017unified, shapley1953value}. Two related methods are LIME \citep{ribeiro2016should} and Banzhaf values \citep{chen2020ls, banzhaf1964weighted}, see our discussion of their similarity in \Cref{sec:xml}. Many works have focused on efficiently calculating these attributions, because their computational complexity is exponential in the number of features. As discussed by \citet{chen2022algorithms}, there are many stochastic estimators derived from an analytic expression for the attributions \citep{castro2009polynomial, vstrumbelj2010efficient, okhrati2021multilinear, covert2021improving, mitchell2021sampling, wang2022data, kolpaczki2023approximating}, and these are typically unbiased Monte Carlo estimators. Others are based on solving an optimization problem, for example with either M-estimation \citep{ribeiro2016should, lundberg2017unified, williamson2020efficient, covert2021improving} or gradient descent \citep{simon2020projected}.

Lastly, there are also methods based on amortization, which offer the possibility of real-time feature attribution without a significant loss in accuracy. Two works consider objective-based amortization, which avoids the need for labels during training \citep{jethani2021fastshap, covert2022learning}. Others consider regression-based amortization \citep{schwab2019cxplain, schwarzenberg2021efficient, chuang2023cortx}, and these recommend using high-quality labels; for example, \citet{chuang2023cortx} use exact labels when possible. Our perspective is most similar to the concurrent work by \citet{zhang2023exploring}, in that we highlight the potential for regression-based amortization with inexpensive, noisy labels; but our work is more general in that we highlight the potential to use noisy labels from any unbiased estimator, and the applicability of this approach to other XML tasks like data valuation.

\textbf{Instance-wise feature selection.} A related set of methods select the most important features for individual predictions. The context in this case is an input $x \in \gX$, and the output is a set of feature indices $S \in \gP([d])$ where we use $\gP(\cdot)$ to denote the power set. The optimal subset can be defined either based on the prediction $f(x_S)$ or $f(x_{[d] \setminus S})$ for a specific class \citep{fong2017interpretable}, or based on the deviation from the original prediction $f(x)$ \citep{chen2018learning}. Among these methods, some solve the underlying optimization problem on a per-input basis \citep{fong2017interpretable, fong2019understanding}, and others solve it using objective-based amortization \citep{dabkowski2017real, chen2018learning, yoon2018invase, jethani2021have}. These methods all require differentiating through discrete subsets, so they employ various continuous relaxations (e.g., the Concrete distribution \cite{maddison2016concrete}), as well as other tricks to constrain or penalize the subset size. These methods are sometimes thought of as attribution methods due to the continuous relaxations \citep{fong2017interpretable, fong2019understanding}, but we describe them as feature selection methods because the final scores for each feature are restricted to the range $[0, 1]$ and are made continuous mainly for optimization purposes.

\textbf{Dynamic feature selection.} Next, other works select features separately for each prediction to achieve high accuracy with a small feature acquisition budget. Each selection is made based on a partially observed input, so the context is a feature subset $x_S$, which we write as belonging to the domain $\gX \times \gP([d])$, and the output is an index $i \in [d]$. There are several ways to define the optimal selection at each step, but one common approach is to define it as the feature with maximum conditional mutual information, or $i^* = \argmax_i \; I(y; x_i \mid x_S)$, where the response variable $y$ and unobserved features $x_i$ are random variables and $x_S$ has a fixed observed value. Given access to this objective, it is trivial to solve by enumerating the possible indices; however, the mutual information is typically unavailable in practice and must be approximated. One approach is therefore to fit a proxy for the mutual information (e.g., via a generative model) and then optimize this proxy for each selection \citep{ma2019eddi, rangrej2021probabilistic, chattopadhyay2022interpretable, he2022bsoda}. An alternative is to learn a network that predicts the optimal selection at each step: \citet{he2016active} do so using imitation learning, which resembles regression-based amortization, and \citet{chattopadhyay2023variational} and \citet{covert2023learning} do so with an optimization-based view of the mutual information $I(y; x_i \mid x_S)$, which is objective-based amortization. 

\textbf{Counterfactual explanation.} The goal of counterfactual explanations (also known as \textit{recourse explanations}) is to identify small changes to an input that cause a desired change in a model's prediction. The context is an input $x \in \gX$, and the output is a modified version $x' \in \gX$ that is typically not too different from the original input. This family of methods was introduced by \citet{wachter2017counterfactual} and is generally framed as an optimization problem involving the prediction $f(x')$ and a measure of the perturbation strength between $x'$ and $x$. \citet{verma2020counterfactual} provide a review of these methods and the various choices for the optimization formulation. When computing these explanations, the most common approach is to solve the problem separately for each input, e.g., using gradient descent \citep{wachter2017counterfactual}. Other works have explored learning models that directly output the modified example $x'$, and these are typically implemented using objective-based amortization \citep{mahajan2019preserving, singla2019explanation, verma2022amortized, chuang2023efficient}.

\textbf{Data valuation.} The goal of data valuation is to assign scores to each training example that represent their contribution to a model's performance. The context is therefore a labeled training example $z \in \gD$, and the output is a real-valued score. These methods are typically not defined via an optimization problem, but instead as a measure of the example's expected impact on performance across a distribution of preceding datasets \citep{ghorbani2019data, ghorbani2020distributional, kwon2021beta, wang2022data}. Existing methods therefore rely on analytic expressions for the valuation scores, which require Monte Carlo approximation because they are intractable. Two works have considered predicting data valuation scores given a dataset of near-exact estimates \citep{ghorbani2020distributional, ghorbani2022data}, which is regression-based amortization. Our work is similar, but we use inexpensive estimates to reduce the cost of amortization and scale more efficiently to larger datasets. Concurrently, \citet{li2024faster} proposed a learning-based approach analogous to FastSHAP for feature attribution \citep{jethani2021fastshap}, but whose memory requirements scale with the dataset size and limit applicability to large datasets. Besides these methods, there are also data valuation approaches that can be calculated without any approximations \citep{jia2019efficient, just2023lava, kwon2023data}.

\textbf{Data attribution.} Finally, data attribution methods aim to quantify the effect of training examples on individual predictions. The context is therefore a training example $z \in \gD$ paired with an inference example $x \in \gX$, and the output is a real-valued attribution score. One classic approach to this problem is influence functions \citep{cook1980characterizations}, which use a gradient-based approximation to avoid the cost of retraining \citep{koh2017understanding, grosse2023studying, kwon2023datainf}. Another class of methods involve measuring the effect of training with subsampled datasets \citep{feldman2020neural, ilyas2022datamodels}. For datamodels, the main existing approximation algorithm is based on solving a global least squares problem \citep{ilyas2022datamodels}, which does not correspond to any of the computational approaches listed in \Cref{tab:related}; notably, it is not a form of amortization because there is no model that can predict the attribution given a new tuple $(z, x)$. As an alternative to the global least squares problem, TRAK calculates similar scores to datamodels using a gradient-based approximation \citep{park2023trak, engstrom2024dsdm}. We are not aware of any work investigating amortization for datamodels, but our results in \Cref{app:datamodels} show how to approximate the scores based on their analytic solution, and how to amortize the computation by adopting Monte Carlo estimates as noisy training labels.

\clearpage
\section{Datamodels} \label{app:datamodels}

The datamodels technique \citep{ilyas2022datamodels} aims to quantify how much each training data point $z_i \in \gD$ affects the prediction for an inference example $x \in \gX$. Similar to data valuation, the inference example's output given a training dataset $\gD_T$ is represented by a function $v_x(\gD_T)$, and the attribution scores $\zeta(z_i, x) \in \R$ are then defined as the solution to a joint least squares problem that can be solved after training many models with different datasets, as we show below.\footnote{As we noted in the main text, although datamodels performs data attribution by fitting a linear regression model, it is not a form of amortization because the resulting model cannot predict attribution scores $\zeta(z, x)$ for new datapoints. The attribution scores are given by the model's coefficients, not its predictions.} For the inference example's output $v_x(\gD_T)$, \citet{ilyas2022datamodels} focus on the loss rather than the raw prediction, but our perspective can accommodate any definition of this output.

Our insight on the datamodels technique is twofold. First, we show that the datamodels scores are equal to a simple expectation and can be estimated in an unbiased fashion. Second, we show that these calculations can be amortized by using the noisy Monte Carlo estimates as training targets. Our findings rely on a slight reformulation of the datamodels scores, where we deviate from \citet{ilyas2022datamodels} by using an intercept term and a subtly different dataset distribution: our distribution is biased towards size $nq$ for a value $q \in (0, 1)$ rather than using a fixed size, which was also considered in recent work by \citet{saunshi2022understanding}. In the following result, we use the shorthand $\zeta(x) \equiv [\zeta(z_1, x), \ldots, \zeta(z_n, x)]$ for the vector of data attributions. Given a probability $q \in (0, 1)$, we also define the weighting function $u(k) = q^k (1 - q)^{n - 1 - k}$. With this setup, our reformulation is the following (see the proof in \Cref{app:proofs}).

\begin{restatable}{prop}{datamodels} \label{prop:datamodels}
    Given a subset distribution
    that includes each data point $z_i \in \gD$ with probability $q \in (0, 1)$, the data attribution scores defined by
    \begin{equation*}
        \zeta(x) \equiv \argmin_{a \in \R^{n + 1}} \; \E_T\left[ \left(a_0 + \sum_{i \in T} a_i - v_x(\gD_T) \right)^2\right]
    \end{equation*}
    can be expressed as the following expectation:
    \begin{equation*}
        \zeta(z_i, x) = \sum_{T \subseteq [n] \setminus i} u(|T|) \left(v_x(\gD_{T} \cup \{z_i\}) - v_x(\gD_T)\right).
    \end{equation*}
\end{restatable}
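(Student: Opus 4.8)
The plan is to reduce the abstract least-squares problem to an ordinary linear regression with Bernoulli features and then exploit the independence of the inclusion indicators to diagonalize it. Write $s \in \{0,1\}^n$ for the indicator vector of the random subset $T$, so that each coordinate $s_i$ is an independent $\mathrm{Bernoulli}(q)$ variable, and set $v(s) \equiv v_x(\gD_T)$. The objective becomes $\E\big[(a_0 + \sum_i a_i s_i - v(s))^2\big]$, i.e., the $L^2$ projection of $v(s)$ onto the span of the features $\{1, s_1, \ldots, s_n\}$. Since $q \in (0,1)$ these indicators have full support on the hypercube, so the features are linearly independent in $L^2$, the associated Gram matrix is positive definite, and the minimizer $\zeta(x)$ is unique and characterized by the normal equations; in particular $\zeta(z_i, x) = a_i^\star$.

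The key step is to center the features. I would define $\bar s_i = s_i - q$ and rewrite the linear predictor as $c_0 + \sum_i a_i \bar s_i$ with $c_0 = a_0 + q\sum_i a_i$; this reparameterization is a bijection in the coefficients, so it leaves the optimal slopes $a_i$ unchanged. By independence, $\E[\bar s_i] = 0$ and $\E[\bar s_i \bar s_j] = \Cov(s_i, s_j) = 0$ for $i \neq j$, so the functions $\{1, \bar s_1, \ldots, \bar s_n\}$ are mutually orthogonal in $L^2$. The projection therefore decouples coordinatewise, and each slope is the normalized inner product
\[
    a_i = \frac{\E[\bar s_i\, v(s)]}{\E[\bar s_i^2]} = \frac{\Cov(s_i, v(s))}{q(1 - q)}.
\]

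Next I would identify this covariance with an expected marginal contribution by conditioning on $s_i$. Writing $\E[s_i v(s)] = q\,\E[v \mid s_i = 1]$ and expanding $\E[v]$ over $s_i \in \{0,1\}$ yields $\Cov(s_i, v(s)) = q(1-q)\big(\E[v \mid s_i = 1] - \E[v \mid s_i = 0]\big)$, hence $a_i = \E[v \mid s_i = 1] - \E[v \mid s_i = 0]$. Conditioning on $s_i = 1$ (resp. $s_i = 0$) fixes $z_i$ as included (resp. excluded) while leaving the remaining coordinates independent $\mathrm{Bernoulli}(q)$ over $[n] \setminus i$; since a fixed subset $T \subseteq [n] \setminus i$ occurs with probability $q^{|T|}(1-q)^{n - 1 - |T|} = u(|T|)$, this conditional difference equals $\sum_{T \subseteq [n]\setminus i} u(|T|)\big(v_x(\gD_T \cup \{z_i\}) - v_x(\gD_T)\big)$, which is the claimed expression.

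The main obstacle is conceptual rather than computational: the entire argument hinges on the independence of the inclusion indicators, which is what makes the centered features orthogonal and lets the regression split into $n$ one-dimensional problems. This is precisely why the reformulation uses a probability-$q$ inclusion distribution instead of the fixed-size subsets of \citet{ilyas2022datamodels}; with a fixed cardinality the indicators are negatively correlated, the Gram matrix is no longer diagonal in the centered basis, and one would have to invert a dense $(n+1)\times(n+1)$ system. The only remaining care is to confirm uniqueness of the minimizer and that recovering $a_0$ from $c_0$ does not perturb the slopes, both of which follow from the positive-definiteness noted above.
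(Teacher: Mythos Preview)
Your proof is correct and takes a genuinely different, more elementary route than the paper. The paper first proves a general lemma (\Cref{lemma:datamodels}) valid for \emph{any} symmetric subset distribution $p(T)$: it writes down the normal equations, assembles them into a $(n{+}1)\times(n{+}1)$ linear system, and inverts the resulting block matrix via the block-inversion and Sherman--Morrison formulas to extract a weighting function $c_i(T)$ with $\zeta(z_i,x) = \E[c_i(T)v_x(\gD_T)]$. \Cref{prop:datamodels} is then obtained by specializing to the independent-inclusion case, where $p_2 = p_1^2$ collapses $c_i(T)$ to $\frac{1}{p_3}(\mathds{1}(i\in T)-p_1)$ and the claimed semivalue form follows.

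Your argument bypasses the matrix algebra entirely by noticing that independence makes the centered indicators $\bar s_i = s_i - q$ orthogonal in $L^2$, so the least-squares problem decouples and each slope is simply $\Cov(s_i, v)/\Var(s_i) = \E[v\mid s_i{=}1] - \E[v\mid s_i{=}0]$. This is cleaner and more conceptual for the specific distribution in the statement. What the paper's longer route buys is generality: its lemma also covers the fixed-cardinality sampling of \citet{ilyas2022datamodels}, where the indicators are negatively correlated and your orthogonality shortcut is unavailable---exactly the obstacle you flag in your final paragraph. So the two proofs trade off directness against scope.
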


Perhaps surprisingly, this corresponds to a game-theoretic semivalue and reduces to the Banzhaf value when $q = 1/2$ \citep{marichal2011weighted}. Based on this perspective, we can estimate the score $\zeta(z_i, x)$ without solving the regression problem from \citet{ilyas2022datamodels}. For example, we can simply sample $k$ datasets $\gD_T$ from the distribution with $q \in (0, 1)$, and then calculate the empirical average as follows:
\begin{equation*}
    \hat{\zeta}(z_i, x) = \frac{1}{k} \sum_{j = 1}^k v_x(\gD_{T_j} \cup \{z_i\}) - v_x(\gD_{T_j} \setminus \{z_i\}).
\end{equation*}
Following \Cref{prop:datamodels}, we have the unbiasedness property $\E[\hat{\zeta}(z_i, x)] = \zeta(z_i, x)$. Furthermore, rather than repeating this estimation for each attribution score, we can amortize the process by fitting an attribution model $\zeta(z_i, x; \theta)$ using the noisy estimates $\hat{\zeta}(z_i, x)$ as training targets. Our experiments do not test this approach, which we leave as a direction for future work.
Implementing this requires a more complex architecture to handle two model inputs $x$ and $z$, but we speculate that if trained effectively, amortization could accelerate attribution within the training set and generalize to new inference examples $x \in \gX$ with no additional overhead.

\clearpage
\section{Connection to Objective-Based Amortization} \label{app:objective-amortization}

Recall that many tasks with repeated computation have an optimization view where $a(b) \equiv \argmin_{a'} h(a'; b)$ \citep{shu2017amortized, amos2022tutorial} (see \Cref{sec:background}). For completeness, this section describes an interpretation of stochastic amortization from this optimization perspective.
An alternative to regression-based amortization when we have an objective $h(a'; b)$ that defines $a(b)$
is to train $a(b; \theta)$ directly with the $h$ objective:

\begin{equation}
    \Lobj(\theta) = \E[h(a(\vb; \theta); \vb)].
\end{equation}

This approach provides an alternative when exact labels are costly, but it can be unappealing because (i)~the task may not offer a natural objective $h(a'; b)$, and (ii)~minimizing $h(a'; b)$ may seem disconnected from the error $\lVert a(b; \theta) - a(b) \rVert$. We discuss these issues below and how they are related to
stochastic amortization.

For issue (i), certain problems like Shapley values have a natural optimization characterization (see \Cref{sec:xml-shapley}), but this is not always the case: for example, data valuation scores are framed as an expectation rather than via optimization (see \Cref{sec:xml-valuation} or \citep{ghorbani2019data}), so they lack an optimization view. We always have the trivial optimization perspective $h(a'; b) = \lVert a' - a(b) \rVert^2$, but this is not useful because it reduces $\Lobj(\theta)$ to $\Lreg(\theta)$ and requires the exact outputs $a(b)$. Regardless of whether the task offers a natural objective $h(a'; b)$, stochastic amortization can be viewed as defining a new objective that does not require the exact outputs: given a noisy oracle $\tilde{a}(b)$, stochastic amortization is equivalent to $\Lobj(\theta)$ with the objective $h(a'; b) = \E \left[ \lVert a' - \tilde{a}(b) \rVert^2 \right]$, and if the oracle is unbiased then the optimal predictions are $a(b; \theta) = a(b)$.

Next, issue (ii) is a concern when $h(a'; b)$ is available but lacks a clear connection to the estimation error, which is directly reflected by $\Lreg(\theta)$. The squared error is in many cases a more meaningful accuracy measure, and it is commonly used to evaluate estimation accuracy for feature attribution and data valuation \citep{jethani2021fastshap, wang2022data, chen2022algorithms}. Certain works on objective-based amortization have considered non-convex objectives $h(a'; b)$ where the exact output $a(b)$ is not well-defined \citep{amos2022tutorial}, but we can understand the potential disconnect by focusing on a class of well behaved objectives. In particular, if we assume that $h(a'; b)$ is $\alpha$-strongly convex and $\beta$-smooth in $a'$ for all $b$, then the $\Lreg(\theta)$ and $\Lobj(\theta)$ objectives bound one other as follows,

\begin{equation}
    \frac{\alpha}{2} \Lreg(\theta) \leq \Lobj(\theta) - \Lobj^* \leq \frac{\beta}{2} \Lreg(\theta), \label{eq:bounds}
\end{equation}

where we define $\Lobj^* \equiv \E[h(a(\vb); \vb)]$ (see proof in \Cref{app:proofs-amortization}). \cref{eq:bounds} shows that by minimizing $\Lobj(\theta)$, we effectively minimize both upper and lower bounds on $\Lreg(\theta)$. However, these bounds can be loose: for example, because we have $\Lreg(\theta) \leq \frac{2}{\alpha} \Lobj(\theta)$ and the strong convexity constant $\alpha$ shrinks to zero for Shapley values as the number of features grows \citep{covert2022learning}, optimizing $\Lobj(\theta)$ may become less effective in high dimensions. Stochastic amortization resolves this potential disconnect between $\Lobj(\theta)$ and $\Lreg(\theta)$ as follows: if we use an unbiased noisy oracle $\tilde{a}(b)$, then training with $\Lregn(\theta)$ is equivalent to using $\Lobj(\theta)$ with $h(a'; b) = \E \left[ \lVert a' - \tilde{a}(b) \rVert^2 \right]$, so we have $\alpha = \beta = 1$ in \cref{eq:bounds} and the regression objectives $\Lreg(\theta)$ and $\Lregn(\theta)$ are equal up to a constant $\Lobj^* = \mathrm{N}(\tilde{a})$.

\clearpage
\section{Proofs} \label{app:proofs}

This section provides proofs for our claims in the main text. \Cref{app:proofs-amortization} shows proofs for our results related to stochastic amortization in \Cref{sec:amortization} and \Cref{app:objective-amortization}, and then \Cref{app:proofs-datamodels} shows the proof for \Cref{prop:datamodels} related to datamodels.

\subsection{Amortization proofs} \label{app:proofs-amortization}

We first derive the inequality in \cref{eq:regao-bound} from the main text, which is the following:

\begin{equation*}
    \left( \sqrt{\Lregn(\theta) - \mathrm{N}(\tilde{a})} - \sqrt{\mathrm{B}(\tilde{a})} \right)^2 \leq \Lreg(\theta) \leq \left( \sqrt{\Lregn(\theta) - \mathrm{N}(\tilde{a})} + \sqrt{\mathrm{B}(\tilde{a})} \right)^2.
\end{equation*}

\begin{proof}
    We begin by decomposing a per-input version of the noisy oracle loss $\Lregn(b; \theta)$, which takes an expectation over the noisy oracle $\tilde{a}(b)$ for a fixed $b$:

    \begin{align*}
        \Lregn(b; \theta)
        &= \E\left[\lVert a(\vb; \theta) - \tilde a(\vb)\rVert^2 \mid b \right] \\
        &= \lVert a(\vb; \theta) - \E[\tilde{a}(\vb) \mid b] \rVert^2 + \E[\lVert \tilde{a}(b) - \E[\tilde{a}(b) \mid b] \rVert^2 \mid b] \\
        &= \lVert a(\vb; \theta) - \E[\tilde{a}(\vb) \mid b] \rVert^2 + \mathrm{N}(\tilde{a} \mid b).
    \end{align*}

    Next, we can also decompose a per-input version of the original regression loss $\Lreg(b; \theta)$ as follows using triangle inequality:

    \begin{align*}
        \Lreg(b; \theta) &= \lVert a(b; \theta) - a(b) \rVert^2 \\
        &\leq \left( \lVert a(b; \theta) - \E[\tilde{a}(b) \mid b] \rVert + \lVert a(b) - \E[\tilde{a}(b) \mid b] \rVert \right)^2 \\
        &= \left( \sqrt{\Lregn(b; \theta) - \mathrm{N}(\tilde{a} \mid b)} + \sqrt{\mathrm{B}(\tilde{a} \mid b)} \right)^2. 
    \end{align*}

    Taking both sides of the inequality in expectation over $p(b)$, we arrive at the following upper bound:

    \begin{align*}
        \Lreg(\theta)
        &\leq \E\left[ \left( \sqrt{\Lregn(b; \theta) - \mathrm{N}(\tilde{a} \mid b)} + \sqrt{\mathrm{B}(\tilde{a} \mid b)} \right)^2 \right].
    \end{align*}

    One side of the bound in \cref{eq:regao-bound} comes from developing the square and applying Cauchy-Schwarz to the cross term:

    \begin{align*}
        \Lreg(\theta) &\leq \E\left[ \left( \sqrt{\Lregn(b; \theta) - \mathrm{N}(\tilde{a} \mid b)} + \sqrt{\mathrm{B}(\tilde{a} \mid b)} \right)^2 \right] \\
        &= \E\left[ \Lregn(b; \theta) - \mathrm{N}(\tilde{a} \mid b) \right] + \E\left[ \mathrm{B}(\tilde{a} \mid b) \right] + 2 \E\left[\sqrt{\mathrm{B}(\tilde{a} \mid b) \left(\Lregn(b; \theta) - \mathrm{N}(\tilde{a} \mid b)\right)}\right] \\
        &\leq \Lregn(\theta) - \mathrm{N}(\tilde{a}) + \mathrm{B}(\tilde{a}) + 2 \sqrt{\mathrm{B}(\tilde{a})\left( \Lregn(\theta) - \mathrm{N}(\tilde{a}) \right)} \\
        &= \left( \sqrt{\Lregn(\theta) - \mathrm{N}(\tilde{a})} + \sqrt{\mathrm{B}(\tilde{a})} \right)^2.
    \end{align*}

    This proves the upper bound in \cref{eq:regao-bound}. For the lower bound, we return to the decomposition of the per-input noisy oracle loss $\Lregn(b; \theta)$ and apply triangle inequality as follows:
    
    \begin{align*}
        \Lregn(b; \theta) &= \lVert a(\vb; \theta) - \E[\tilde{a}(\vb) \mid b] \rVert^2 + \mathrm{N}(\tilde{a} \mid b) \\
        &\leq \left( \lVert a(\vb; \theta) - a(b) \rVert + \lVert a(b) - \E[\tilde{a}(\vb) \mid b] \rVert \right)^2 + \mathrm{N}(\tilde{a} \mid b) \\
        &= \left( \sqrt{\Lreg(b; \theta)} + \sqrt{\mathrm{B}(\tilde{a} \mid b)} \right)^2 + \mathrm{N}(\tilde{a} \mid b).
    \end{align*}
    
    Rearranging terms, we have
    
    \begin{align*}
        \Lreg(b; \theta) \geq \left( \sqrt{\Lregn(b; \theta) - \mathrm{N}(\tilde{a} \mid b)} - \sqrt{\mathrm{B}(\tilde{a} \mid b)} \right)^2,
    \end{align*}
    
    and applying the same logic as above with Cauchy-Schwarz, we arrive at:
    
    \begin{align*}
        \Lreg(\theta) \geq \left( \sqrt{\Lregn(\theta) - \mathrm{N}(\tilde{a})} - \sqrt{\mathrm{B}(\tilde{a})} \right)^2.
    \end{align*}
    
    Putting the two results together, this yields the two-sided bound in \cref{eq:regao-bound}:
    
    \begin{align*}
        \left( \sqrt{\Lregn(\theta) - \mathrm{N}(\tilde{a})} - \sqrt{\mathrm{B}(\tilde{a})} \right)^2 \leq \Lreg(\theta) \leq \left( \sqrt{\Lregn(\theta) - \mathrm{N}(\tilde{a})} + \sqrt{\mathrm{B}(\tilde{a})} \right)^2.
    \end{align*}
\end{proof}

Next, we prove our SGD result for stochastic amortization in \Cref{thm:stochregao}.

\stochregao*

\begin{proof}
    Given our assumption about the noisy oracle, we can re-write the noisy objective $\Lregn(\theta)$ as follows:

    \begin{equation*}
        \Lregn(\theta)
        = \E\left[ \lVert(\theta - \tilde{\theta}) \vb \rVert^2 \right] + \mathrm{N}(\tilde a) = \Tr\left( (\theta - \tilde \theta)^\top \Sigma_p (\theta - \tilde \theta) \right) + \mathrm{N}(\tilde a).
    \end{equation*}

    It is clear that the minimizer is $\theta = \tilde{\theta}$ and that the minimum achievable error is $\Lregn(\tilde{\theta}) = \mathrm{N}(\tilde a)$. Because the objective composes a linear function with a convex function, it is convex in $\theta$ and we can analyze its convergence using a standard SGD result. Specifically, because we assume that $\lVert\tilde{\theta}\rVert_F \leq D$, we consider a projected SGD algorithm where each step is followed by a projection into the $D$-ball, or $\theta \gets \theta \cdot \min(1, D / \lVert\theta\rVert_F)$. Note that it is common to assume a bounded solution space when proving SGD's convergence \citep{nemirovski2009robust, dekel2012optimal, bubeck2015convex}.

    Before proceeding to the main convergence result, we must derive several properties of this objective and its stochastic gradients. The first is the strong convexity constant, and we begin by noting that the gradient is the following (see Section 2.5 of \citet{petersen2008matrix} for a review of matrix derivatives):

    \begin{align*}
        \nabla \Lregn(\theta)
        &= 2 (\theta - \tilde{\theta}) \E[\vb \vb^\top]
        = 2 (\theta - \tilde{\theta}) \Sigma_p.
    \end{align*}

    For the strong convexity constant, we require a value $\alpha > 0$ such that the following is satisfied for all parameters $\theta, \theta'$:

    \begin{equation}
        \Lregn(\theta) \geq \Lregn(\theta') + \Tr\left( (\theta - \theta')^\top \nabla \Lregn(\theta') \right) + \frac{\alpha}{2} \lVert \theta - \theta' \rVert_F^2. \label{eq:obj-strong-convexity}
    \end{equation}

    For the first two terms on the right side of the inequality, we can write:

    \begin{align*}
        \Lregn(\theta') + \Tr\left( (\theta - \theta')^\top \nabla \Lregn(\theta') \right)
        &= \mathrm{N}(\tilde a) + \Tr\left( (\theta' - \tilde{\theta}) \Sigma_p (\theta' - \tilde{\theta})^\top \right) + 2 \Tr\left( (\theta' - \tilde{\theta}) \Sigma_p (\theta - \theta')^\top \right) \\
        &= \mathrm{N}(\tilde a) + \Tr\left( (\theta - \tilde{\theta}) \Sigma_p (\theta - \tilde{\theta})^\top \right) - \Tr\left( (\theta' - \theta) \Sigma_p (\theta' - \theta)^\top \right).
    \end{align*}

    By using the minimum eigenvalue of $\Sigma_p$, we can write the following,

    \begin{equation*}
        \Tr\left( (\theta' - \theta) \Sigma_p (\theta' - \theta)^\top \right) \geq \lambda_{\min}(\Sigma_p) \lVert\theta - \theta'\rVert_F^2,
    \end{equation*}

    and therefore satisfy \cref{eq:obj-strong-convexity} as follows:


    \begin{align*}
        \Lregn(\theta)
        &= \mathrm{N}(\tilde a) + \Tr\left( (\theta - \tilde{\theta}) \Sigma_p (\theta - \tilde{\theta})^\top \right) \\
        &= \Lregn(\theta') + \Tr\left( (\theta - \theta')^\top \nabla \Lregn(\theta') \right) + \Tr\left( (\theta' - \theta) \Sigma_p (\theta' - \theta)^\top \right) \\
        &\geq \Lregn(\theta') + \Tr\left( (\theta - \theta')^\top \nabla \Lregn(\theta') \right) + \lambda_{\min}(\Sigma_p) \lVert\theta - \theta'\rVert_F^2.
    \end{align*}

    We can therefore conclude that $\Lregn(\theta)$ is $2\lambda_{\min}(\Sigma_p)$-strongly convex.

    The next property to derive is the expected gradient norm. When running SGD, we make updates using the following stochastic gradient estimate:

    \begin{equation*}
        g(\theta) = 2 \left(\theta b - \tilde{a}(b) \right) b^\top.
    \end{equation*}

    We require an upper bound on the stochastic gradient norm, which can be written as follows:

    \begin{equation}
        \E\left[ \lVert g(\theta) \rVert_F^2\right]
        = \E\left[ \lVert g(\theta) - \nabla \Lregn(\theta)\rVert_F^2\right] + \lVert\nabla \Lregn(\theta)\rVert_F^2. \label{eq:grad-upper}
    \end{equation}

    For the first term in \cref{eq:grad-upper}, which represents the gradient variance, we have:

    \begin{equation*}
        \E\left[ \lVert g(\theta) - \nabla \Lregn(\theta)\rVert_F^2\right]
        = 4 \E\left[ \lVert (\theta \vb - \tilde a(\vb))\vb^\top - (\theta - \tilde{\theta})\Sigma_p \rVert_F^2 \right].
    \end{equation*}

    To find a simple expression for this term, we first consider the expectation over the distribution $\tilde a(b)$ with fixed $b \in \gB$, which isolates label variation due to the noisy oracle:

    \begin{align*}
        \E_{\tilde a \mid b}\left[ \lVert (\theta b - \tilde a(b))b^\top - (\theta - \tilde{\theta})\Sigma_p \rVert_F^2 \right]
        &= \E_{\tilde a \mid b}\left[ \lVert (\tilde a(b) - \tilde{\theta} b)b^\top \rVert_F^2 \right] + \lVert (\theta - \tilde{\theta})(bb^\top - \Sigma_p) \rVert_F^2 \\
        &= \mathrm{N}(\tilde a \mid b) \cdot \lVert b \rVert^2 + \lVert (\theta - \tilde{\theta})(bb^\top - \Sigma_p) \rVert_F^2.
    \end{align*}

    When we take this in expectation over $p(\vb)$, the first term can be understood as a norm-weighted average of the noisy oracle's conditional variance:

    \begin{align*}
        \E\left[ \mathrm{N}(\tilde a \mid \vb) \cdot \lVert\vb\rVert^2 \right]
        &= \int p(b) \mathrm{N}(\tilde a \mid b) \cdot \lVert b\rVert^2 db \\
        &= \Tr(\Sigma_p) \int \left( \frac{p(b) \cdot \lVert b \rVert^2}{\Tr(\Sigma_p)} \right) \mathrm{N}(\tilde a \mid b) db \\
        &= \Tr(\Sigma_p) \mathrm{N}_q(\tilde a).
    \end{align*}
    
    The second term can be rewritten as follows using $\Sigma_p$ and $\Sigma_q$:

    \begin{align*}
        \E \left[ \lVert (\theta - \tilde{\theta})(\vb\vb^\top - \Sigma_p) \rVert_F^2 \right]
        &= \E\left[ \Tr\left( (\vb\vb^\top - \Sigma_p) (\theta - \tilde{\theta})^\top (\theta - \tilde{\theta}) (\vb\vb^\top - \Sigma_p) \right) \right] \\
        &= \E\left[ \Tr\left( (\theta - \tilde{\theta}) (\vb\vb^\top - \Sigma_p) (\vb\vb^\top - \Sigma_p) (\theta - \tilde{\theta})^\top \right) \right] \\
        &= \Tr\left( (\theta - \tilde{\theta}) \E\left[ (\vb\vb^\top - \Sigma_p) (\vb\vb^\top - \Sigma_p) \right] (\theta - \tilde{\theta})^\top \right) \\
        &= \Tr\left( (\theta - \tilde{\theta}) (\Tr(\Sigma_p)\Sigma_q - \Sigma_p^2) (\theta - \tilde{\theta})^\top \right).
    \end{align*}

    For the deterministic gradient upper bound in \cref{eq:grad-upper}, we have:

    \begin{equation*}
        \lVert\nabla \Lregn(\theta)\rVert_F^2
        = 4 \Tr\left( (\theta - \tilde{\theta}) \Sigma_p^2 (\theta - \tilde{\theta})^\top \right).
    \end{equation*}

    Putting these results together, we can upper bound the expected gradient norm for any parameter values $\lVert\theta\rVert_F \leq D$ as
    
    \begin{align*}
        \E\left[\lVert g(\theta)\rVert_F^2\right]
        &= 4\Tr(\Sigma_p) \mathrm{N}_q(\tilde a) + 4 \Tr(\Sigma_p) \Tr\left( (\theta - \tilde{\theta}) \Sigma_q (\theta - \tilde{\theta})^\top \right) \\
        &\leq 4\Tr(\Sigma_p) \mathrm{N}_q(\tilde a) + 4 \Tr(\Sigma_p) \lambda_{\max}(\Sigma_q) \lVert\theta - \tilde{\theta}\rVert_F^2 \\
        &\leq 4\Tr(\Sigma_p) \mathrm{N}_q(\tilde a) + 16 \Tr(\Sigma_p) \lambda_{\max}(\Sigma_q) D^2.
    \end{align*}
       
    Using our results for the objective's strong convexity and expected gradient norm, we can now invoke a standard SGD convergence result. Following Theorem 6.3 from \citet{bubeck2015convex}, if we make $T$ updates with the specified step size,
    we have the following upper bound on the expected objective value:

    \begin{equation}
        \E[\Lregn(\bar \theta_T)] - \mathrm{N}(\tilde a) 
        \leq \frac{4\Tr(\Sigma_p) \mathrm{N}_q(\tilde a) + 16 \Tr(\Sigma_p) \lambda_{\max}(\Sigma_q) D^2}{\lambda_{\min}(\Sigma_p) (T + 1)}.
    \end{equation}

\end{proof}

Next, we prove a simple consequence of this result, which is that the rate from \Cref{thm:stochregao} applies to the original objective $\Lreg(\theta)$ when we assume an unbiased noisy oracle.

\begin{restatable}{corr}{regao} \label{corr:regao}
    Following the setup from \Cref{thm:stochregao}, if
    the noisy oracle is unbiased, then the averaged iterate at step $T$ satisfies
    \begin{equation*}
        \E[\Lreg(\bar \theta_T)] \leq \frac{4\Tr(\Sigma_p) \mathrm{N}_q(\tilde a) + 16 \Tr(\Sigma_p) \lambda_{\max}(\Sigma_q) D^2}{\lambda_{\min}(\Sigma_p) (T + 1)}.
    \end{equation*}
    If the oracle is also noiseless, or if we assume access to the exact oracle $\tilde{a}(b) = a(b)$, then the averaged iterate at step $T$ satisfies
    \begin{equation*}
        \E[\Lreg(\bar \theta_T)] \leq \frac{16 \Tr(\Sigma_p) \lambda_{\max}(\Sigma_q) D^2}{\lambda_{\min}(\Sigma_p) (T + 1)}.
    \end{equation*}
\end{restatable}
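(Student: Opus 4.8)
The plan is to derive both bounds directly from the two-sided inequality in \cref{eq:regao-bound}, combined with the convergence rate already established in \Cref{thm:stochregao}. The single observation driving the whole argument is that unbiasedness collapses that two-sided bound into an exact identity, so the corollary is essentially a two-line consequence rather than a fresh analysis.

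First I would specialize the bias--variance relationship \cref{eq:regao-bound} to the unbiased case. By definition $\mathrm{B}(\tilde a) = \E_p[\lVert a(\vb) - \E[\tilde a(\vb) \mid \vb]\rVert^2]$, so an unbiased oracle gives $\mathrm{B}(\tilde a) = 0$. Substituting this into both the lower and upper bounds of \cref{eq:regao-bound} makes them coincide at $\Lregn(\theta) - \mathrm{N}(\tilde a)$, which forces the exact equality $\Lreg(\theta) = \Lregn(\theta) - \mathrm{N}(\tilde a)$ for every $\theta$. (Equivalently, this is just the bias-free bias--variance decomposition noted in the discussion following \cref{eq:regao-bound}.)

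Next I would apply this identity at the averaged iterate $\bar\theta_T$ and take expectations, obtaining $\E[\Lreg(\bar\theta_T)] = \E[\Lregn(\bar\theta_T)] - \mathrm{N}(\tilde a)$. The right-hand side is exactly the quantity controlled by \Cref{thm:stochregao}, so substituting that bound immediately yields the first claimed inequality, with numerator $4\Tr(\Sigma_p)\mathrm{N}_q(\tilde a) + 16\Tr(\Sigma_p)\lambda_{\max}(\Sigma_q)D^2$.

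Finally, for the noiseless case I would observe that if $\tilde a(b) = a(b)$ (or more generally the oracle has zero conditional variance), then $\mathrm{N}(\tilde a \mid b) = 0$ for all $b$; since $\mathrm{N}_q(\tilde a) = \E_q[\mathrm{N}(\tilde a \mid \vb)]$ is a nonnegative weighted average of these conditional variances, it vanishes too. Setting $\mathrm{N}_q(\tilde a) = 0$ in the first bound drops the leading numerator term and leaves the second claimed inequality. There is no substantive obstacle here; the only point requiring a moment's care is confirming that $\mathrm{N}_q(\tilde a) = 0$ follows from noiselessness, which is immediate from its definition as a (norm-weighted) average of conditional variances.
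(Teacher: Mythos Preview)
Your proposal is correct and matches the paper's own proof essentially line for line: the paper likewise invokes the identity $\Lreg(\theta) = \Lregn(\theta) - \mathrm{N}(\tilde a)$ (obtained from \cref{eq:regao-bound} with $\mathrm{B}(\tilde a)=0$) together with \Cref{thm:stochregao} for the first bound, and then sets $\mathrm{N}_q(\tilde a)=0$ for the noiseless case. Your extra sentence justifying why noiselessness forces $\mathrm{N}_q(\tilde a)=0$ is a small clarification the paper leaves implicit.
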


\begin{proof}
    The first result follows from combining \Cref{thm:stochregao} with the relationship between $\Lreg(\theta)$ and $\Lregn(\theta)$: if we assume the noisy oracle is unbiased,
    or $\mathrm{B}(\tilde a) = 0$,
    then we have $\Lreg(\theta) = \Lregn(\theta) - \mathrm{N}(\tilde{a})$, which implies the first inequality. The second inequality follows from setting $\mathrm{N}_q(\tilde a) = 0$ in the upper bound.
\end{proof}

Next, we provide a proof for \cref{eq:bounds} regarding the connection between regression- and objective-based amortization, which is the following:

\begin{equation*}
    \frac{\alpha}{2} \Lreg(\theta) \leq \Lobj(\theta) - \Lobj^* \leq \frac{\beta}{2} \Lreg(\theta).
\end{equation*}


\begin{proof}
    This result relies on well known properties from convex optimization \citep{hazan2016introduction}. Consider a fixed context variable $b \in \gB$. Strong convexity with $\alpha > 0$ means that for all $a, a' \in \gA$ we have:

    \begin{equation*}
        h(a; b) \geq h(a'; b) + (a - a')^\top \nabla_a h(a'; b) + \frac{\alpha}{2} \lVert a - a'\rVert^2.
    \end{equation*}

    Considering the optimal value $a' = a(b)$, this simplifies to:

    \begin{equation}
        \frac{\alpha}{2} \lVert a - a(b)\rVert^2 \leq h(a; b) - h(a(b); b). \label{eq:strong-convexity-simplified}
    \end{equation}

    Next, smoothness with $\beta > 0$ means that for all $a, a' \in \R^d$ we have:

    \begin{equation*}
        \lVert \nabla_a h(a; b) - \nabla_a h(a'; b)\rVert^2 \leq \beta \lVert a - a'\rVert.
    \end{equation*}

    Lemma 3.4 in \citet{bubeck2015convex} shows that $\beta$-smoothness also implies the following:

    \begin{equation*}
        \left|h(a; b) - h(a'; b) - (a - a')^\top \nabla_a h(a'; b) \right| \leq \frac{\beta}{2} \lVert a - a'\rVert^2.
    \end{equation*}

    Considering the optimal value $a' = a(b)$, this simplifies to:

    \begin{equation}
        h(a; b) - h(a(b); b) \leq \frac{\beta}{2} \lVert a - a(b)\rVert^2. \label{eq:smoothness-simplified}
    \end{equation}

    The bounds in \cref{eq:bounds} follow from substituting $a$ for predictions from a model $a(b; \theta)$, and considering \cref{eq:strong-convexity-simplified} and \cref{eq:smoothness-simplified} in expectation across the distribution $p(\vb)$.
\end{proof}

\subsection{Datamodels proofs} \label{app:proofs-datamodels}

Before proving our main claim for datamodels in \Cref{prop:datamodels}, we first prove a more general result. This version considers an arbitrary symmetric distribution $p(T)$ over datasets, rather than the specific distribution parameterized by $q \in (0, 1)$ used in \Cref{prop:datamodels}. By a symmetric distribution, we mean one that satifies $p(T) = p(T')$ whenever $|T| = |T'|$.

As setup for our derivation, notice that when we assume a symmetric distribution $p(T)$, we can define the following probabilities that are identical for all indices $i, j \in [n]$:

\begin{align*}
    p_1 &\equiv \mathrm{Pr}(i \in T) \quad \mathrm{for} \; i \in [n] \\
    p_2 &\equiv \mathrm{Pr}(i, j \in T) \quad \mathrm{for} \; i \neq j \\
    p_3 &\equiv \mathrm{Pr}(i \in T, j \notin T) \quad \mathrm{for} \; i \neq j.
\end{align*}

Note that we have $p_1 = p_2 + p_3$. For example, given a uniform distribution $p(T)$ over subsets with size $|T| = k$, which is used by \citet{ilyas2022datamodels}, we have:

\begin{align*}
    p_1 = \frac{\binom{n - 1}{k - 1}}{\binom{n}{k}} = \frac{k}{n}
    \quad\quad\quad
    p_2 = \frac{\binom{n - 2}{k - 1}}{\binom{n}{k}} = \frac{k(k - 1)}{n (n - 1)}
    \quad\quad\quad
    p_3 = \frac{\binom{n - 2}{k - 1}}{\binom{n}{k}} = \frac{k(n - k)}{n(n - 1)}.
\end{align*}

Alternatively, if we have each data point included independently with probability $q \in (0, 1)$, which is used by \citet{saunshi2022understanding} and \Cref{prop:datamodels}, then we have $p_1 = q$, $p_2 = q^2$, and $p_3 = q (1 - q)$.

Our more general claim about datamodels with a symmetric distribution $p(T)$ is the following.

\begin{restatable}{lemma}{datamodels_general} \label{lemma:datamodels}
    Given a symmetric distribution
    $p(T)$,
    the data attribution scores defined by
    \begin{equation*}
        \zeta(x) \equiv \argmin_{a \in \R^{n + 1}} \; \E_{p(T)}\left[ \left(a_0 + \sum_{i \in T} a_i - v_x(\gD_T) \right)^2\right]
    \end{equation*}
    can be expressed as the expectation $\zeta(z_i, x) = \E[c_i(T) v_x(\gD_T)]$, where we define the weighting function $c_i(T)$ as follows:
    \begin{equation*}
        c_i(T)
        \equiv \frac{1}{p_3} \left( \mathds{1}(i \in T) - \frac{p_2 - p_1^2}{p_3 + n(p_2 - p_1^2)} |T| - \frac{p_1p_3}{p_3 + n(p_2 - p_1^2)} \right).
    \end{equation*}
\end{restatable}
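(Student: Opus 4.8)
The plan is to recognize the defining optimization as an ordinary (population) linear regression and read the solution off the normal equations. Introduce the feature map $\phi(T) = (1, \mathds{1}(1 \in T), \ldots, \mathds{1}(n \in T))^\top \in \R^{n+1}$, so the objective becomes $\E[(a^\top \phi(T) - v_x(\gD_T))^2]$. Its minimizer satisfies the normal equations $A a = b$, where $A \equiv \E[\phi(T)\phi(T)^\top]$ and $b \equiv \E[\phi(T)\, v_x(\gD_T)]$. Assuming $p(T)$ is non-degenerate so that $A$ is invertible, the solution is $a = A^{-1} b$; since $b = \E[\phi(T)\, v_x(\gD_T)]$, I would pull the constant matrix $A^{-1}$ inside the expectation to get $a_i = \E[(A^{-1}\phi(T))_i \, v_x(\gD_T)]$. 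It therefore suffices to show that $(A^{-1}\phi(T))_i = c_i(T)$ for each data index $i \in \{1,\ldots,n\}$ (the intercept coordinate $a_0$ is determined as well but discarded).

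Next I would write $A$ in block form using the three symmetric probabilities. The symmetry of $p(T)$ gives $\E[\mathds{1}(i\in T)] = p_1$, $\E[\mathds{1}(i\in T)^2] = p_1$, and $\E[\mathds{1}(i\in T)\mathds{1}(j\in T)] = p_2$ for $i \neq j$, so that, using $p_1 = p_2 + p_3$,
\[
A = \begin{pmatrix} 1 & p_1 \vone^\top \\ p_1 \vone & p_3 I + p_2 \vone\vone^\top \end{pmatrix}.
\]
The lower-right block is a scaled identity plus a rank-one term, which is exactly what makes the inverse tractable.

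I would then exploit exchangeability of the data indices: permuting $\{1,\ldots,n\}$ leaves $A$ invariant, so for each $i$ the $i$-th row of $A^{-1}$ has the form $(\gamma,\ \beta,\ldots,\ \alpha,\ldots,\ \beta)$, with the distinguished value $\alpha$ in position $i$, the value $\gamma$ in the intercept slot, and a common off-diagonal value $\beta$. This reduces the problem to three scalars and yields
\[
c_i(T) = (A^{-1}\phi(T))_i = \gamma + \alpha\,\mathds{1}(i\in T) + \beta\big(|T| - \mathds{1}(i\in T)\big) = \gamma + (\alpha-\beta)\,\mathds{1}(i\in T) + \beta\,|T|.
\]
To pin down $\alpha,\beta,\gamma$ I would impose $A v_i = e_i$ on the column $v_i$ of $A^{-1}$, which collapses to three scalar equations (one from the intercept row, one from the diagonal entry $k=i$, and one from a generic off-diagonal entry $k \neq i$). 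Subtracting the last two gives $\alpha - \beta = 1/p_3$ immediately; substituting into the other two and solving the small system gives $\beta = -(p_2 - p_1^2)/\big(p_3(p_3 + n(p_2 - p_1^2))\big)$ and $\gamma = -p_1/\big(p_3 + n(p_2 - p_1^2)\big)$. Plugging these into the display above and factoring out $1/p_3$ reproduces the claimed formula.

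The main obstacle is not conceptual but the bookkeeping in this last step: correctly using the rank-one-plus-diagonal structure to solve the $3\times 3$ system and simplifying $\beta,\gamma$ into the stated closed form. A secondary point requiring care is justifying that $A$ is invertible — equivalently, that the constant and the $n$ membership indicators are not collinear under $p(T)$ — which for the $q$-distribution of \Cref{prop:datamodels} amounts to $p_3 = q(1-q) > 0$. That distribution also satisfies $p_2 = p_1^2$, so the denominator collapses to $p_3$ and the formula specializes to $c_i(T) = (\mathds{1}(i\in T) - q)/(q(1-q))$, a convenient sanity check on the way to \Cref{prop:datamodels}.
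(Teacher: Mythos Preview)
Your proposal is correct and follows essentially the same route as the paper: both set up the normal equations, identify the same block structure $A = \begin{pmatrix} 1 & p_1 \vone^\top \\ p_1 \vone & p_3 I + p_2 \vone\vone^\top \end{pmatrix}$, and invert it to read off the coefficients. The only cosmetic difference is that the paper applies block inversion plus Sherman--Morrison explicitly and then matches the resulting expression in $\bar v_i = \E[v_x(\gD_T)\mid i\in T]$ to a weighting function, whereas you use the permutation symmetry to parametrize the relevant row of $A^{-1}$ by three scalars and identify $c_i(T) = (A^{-1}\phi(T))_i$ directly---a slightly more streamlined bookkeeping of the same computation.
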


\begin{proof}
    We can write the problem's partial derivatives as follows, both for the intercept term $a_0$ and the coefficients $a_i$ for $i \in [n]$:

    \begin{align*}
        \frac{\partial}{\partial a_0} \E \left[ \left( a_0 + a^\top \vone_T - v_x(\gD_T) \right)^2 \right]
        &= 2 \left( a_0 +  \E \left[ \vone_T^\top a \right] - \E \left[ v_x(\gD_T) \right] \right) \\
        &= 2 \left( a_0 +  p_1 \vone_n^\top a - \E \left[ v_x(\gD_T) \right] \right)
    \end{align*}
    
    \begin{align*}
        \frac{\partial}{\partial a_i} \E \left[ \left( a_0 + a^\top \vone_T - v_x(\gD_T) \right)^2 \right]
        &= 2 \left( \E \left[ \mathds{1}(i \in T) (\vone_T^\top a + a_0) \right] - \E \left[ \mathds{1}(i \in T) v_x(\gD_T) \right] \right) \\
        &= 2 \left( p_1 a_0 + p_2 \vone_n^\top a + (p_1 - p_2)a_i - p_1 \E \left[ v_x(\gD_T) \mid i \in T \right] \right).
    \end{align*}
    
    We use the shorthand notation $\bar{v} \in \R^{n + 1}$ for a vector with entries $\bar{v}_0 = \E[v_x(\gD_T)]$ and $\bar{v}_i = \E[v_x(\gD_T) \mid i \in T]$ for $i \in [n]$. Combining this with the partial derivatives, we can derive an analytic solution $a^* \in \R^{n + 1}$ by setting the derivative to zero,

    \begin{equation*}
        \begin{pmatrix}
            1 & p_1 \vone_n^\top \\
            p_1 \vone_n & p_2 \vone_n \vone_n^\top + p_3 I_n
        \end{pmatrix} a^* - \begin{pmatrix}
            1 & 0 \\
            0 & p_1 I_n
        \end{pmatrix} \bar{v} = 0,
    \end{equation*}
    
    which yields the following equation for the solution:
    
    \begin{equation*}
        a^* = \begin{pmatrix}
            1 & p_1 \vone_n^\top \\
            p_1 \vone_n & p_2 \vone_n \vone_n^\top + p_3 I_n
        \end{pmatrix}^{-1}
        \begin{pmatrix}
            1 & 0 \\
            0 & p_1 I_n
        \end{pmatrix} \bar{v}.
    \end{equation*}
    
    To find the required matrix inverse, we can combine the Sherman-Morrison formula with the formula for block matrix inversion. The formula for block matrix inversion is the following \citep{petersen2008matrix}:
    
    \begin{equation*}
        \begin{pmatrix}
            \rmA & \rmB \\
            \rmC & \rmD
        \end{pmatrix}^{-1} = \begin{pmatrix}
            \rmA^{-1} + \rmA^{-1}\rmB(\rmD - \rmC\rmA^{-1}\rmB)^{-1}\rmC\rmA^{-1} & -\rmA^{-1}\rmB(\rmD - \rmC\rmA^{-1}\rmB)^{-1} \\
            - (\rmD - \rmC\rmA^{-1}\rmB)^{-1}\rmC\rmA^{-1} & (\rmD - \rmC\rmA^{-1}\rmB)^{-1}
        \end{pmatrix}.
    \end{equation*}
    
    We are mainly interested in the lower rows of this matrix because we do not use the learned intercept term. For the lower right matrix, we have the following:
    
    \begin{align*}
        (\rmD - \rmC\rmA^{-1}\rmB)^{-1}
        &= \left( p_2 \vone_n \vone_n^\top + p_3 I_n - p_1^2 \vone_n \vone_n^\top \right)^{-1} \\
        &= \left( (p_2 - p_1^2) \vone_n \vone_n^\top + p_3 I_n\right)^{-1} \\
        &= \frac{1}{p_3} I_n - \frac{p_2 - p_1^2}{p_3 (p_3 + n(p_2 - p_1^2))} \vone_n \vone_n^\top.
    \end{align*}
    
    Next, for the lower left vector, we have the following:
    
    \begin{align*}
        - (\rmD - \rmC\rmA^{-1}\rmB)^{-1}\rmC\rmA^{-1}
        &= - \left( \frac{1}{p_3} I_n - \frac{p_2 - p_1^2}{p_3 (p_3 + n (p_2 - p_1^2))} \vone_n \vone_n^\top \right) p_1 \vone_n \\
        &= - \frac{p_1}{p_3} \vone_n + \frac{np_1}{p_3} \frac{p_2 - p_1^2}{p_3 + n(p_2 - p_1^2)}  \vone_n \\
        &= - \frac{p_1}{p_3 + n(p_2 - p_1^2)} \vone_n.
    \end{align*}
    
    This yields the following solution for the optimal coefficients $a_i^*$ with $i \in [n]$:
    
    \begin{equation*}
        a_i^*
        = \frac{p_1}{p_3} \bar{v}_i - \frac{p_1(p_2 - p_1^2)}{p_3(p_3 + n(p_2 - p_1^2))} \sum_{j \in [n]} \bar{v}_j - \frac{p_1}{p_3 + n(p_2 - p_1^2)} \bar{v}_0.
    \end{equation*}
    
    Based on this solution, we can design a function $c_i(T)$ so that we have the expectation $\E[c_i(T) v_x(\gD_T)] = a_i^*$. We define the function as follows,
    
    \begin{equation*}
        c_i(T)
        \equiv \frac{1}{p_3} \left( \mathds{1}(i \in T) - \frac{p_2 - p_1^2}{p_3 + n(p_2 - p_1^2)} |T| - \frac{p_1p_3}{p_3 + n(p_2 - p_1^2)} \right),
    \end{equation*}

    and it can be verified that this satisfies the required expectation.
    


\end{proof}

A similar derivation to \Cref{lemma:datamodels} is possible when we omit an intercept term in the datamodels optimization problem, but we do not show the result here. 

Finally, we prove the special case of \Cref{lemma:datamodels} considered in \Cref{prop:datamodels}.

\datamodels*

\begin{proof}
    Consider the weighting function $c_i(T)$ introduced in the proof for \Cref{lemma:datamodels}. In this case, we can use the fact that $p_2 = p_1^2$ and write the weighting function as follows:

    \begin{equation*}
        c_i(T)
        = \frac{1}{p_3} \left( \mathds{1}(i \in T) - p_1 \right)
        = \begin{cases}
            \frac{1 - p_1}{p_3} & i \in T \\
            - \frac{p_1}{p_3} & i \notin T.
        \end{cases}
    \end{equation*}

    Using the fact that $p(T) = q^{|T|}(1 - q)^{n - |T|}$ and the coefficient values $(1 - p_1) / p_3 = q^{-1}$ and $- p_1 / p_3 = - (1 - q)^{-1}$, we arrive at the result in \Cref{prop:datamodels}.
\end{proof}

\clearpage
\section{Noisy Oracles for XML Methods} \label{app:estimators}

This section describes the statistical estimators used for each XML task, which are briefly introduced in \Cref{sec:xml}. Each estimator serves as a noisy oracle for the given task, which allows us to train amortized models with inexpensive supervision.

\textbf{Shapley values.} We use three statistical estimators for Shapley value feature attributions. The simplest is permutation sampling, where we average each feature's contribution across a set of sampled orderings \citep{castro2009polynomial, vstrumbelj2010efficient}. For this approach, we use $\rho$ to denote a permutation of the indices $[d]$, where we let $\rho(i) \subseteq [d] \setminus \{i\}$ denote the elements appearing before $i$ in the permutation. The Shapley value can be understood as the marginal contribution $f(x_{\rho(i) \cup \{i\}}) - f(x_{\rho(i)})$ averaged across all possible permutations \citep{shapley1953value}, so our first estimator involves sampling $k$ permutations $\rho_1, \ldots, \rho_k$, and then calculating the following average for each feature:

\begin{equation}
    \hat{\phi}_i(x) = \frac{1}{k} \sum_{j = 1}^k f(x_{\rho_j(i) \cup \{i\} }) - f(x_{\rho_j(i)}).
\end{equation}

Next, we consider two estimators based on the Shapley value's least squares view. Recall that the Shapley value is the solution to the following problem (where we discard the intercept term),

\begin{equation}
    \phi(x) = \argmin_{a \in \R^{d + 1}} \; \sum_{S \subseteq [d]} \mu(S) \left( f(x_S) - a_0 - \sum_{i \in S} a_i \right)^2, \label{eq:kernelshap-app}
\end{equation}

where we use a weighting kernel defined as $\mu^{-1}(S) = \binom{d}{|S|} |S| (d - |S|)$ \citep{charnes1988extremal}. The first estimator we use based on this perspective is KernelSHAP \citep{lundberg2017unified}, which solves this problem using $k$ subsets $S_j \subseteq [d]$ sampled according to $\mu(S)$. The problem is convex but constrained due to the weighting terms $\mu([d]) = \mu(\varnothing) = \infty$, so it must be solved via the KKT conditions; we refer readers to \citet{covert2021improving} for the closed-form solution. We also consider the SGD-Shapley approach from \citet{simon2020projected}: rather than solving \cref{eq:kernelshap-app} exactly given sampled subsets, this approach solves the problem iteratively with projected stochastic gradient descent. Unlike the other estimators, SGD-Shapley has been shown to have non-negligible bias \citep{chen2022algorithms}. We used a custom implementation for SGD-Shapley,
and we used an open-source implementation of permutation sampling and KernelSHAP.\footnote{\url{https://github.com/iancovert/shapley-regression}}

\textbf{Banzhaf values.} When calculating Banzhaf value feature attributions, we adapt the \textit{maximum sample re-use} (MSR) estimator from \citet{wang2022data}, which was originally used for data valuation but is equally applicable to feature attribution. For a single example $x \in \gX$, we generate predictions using $k$ subsets $S_j \subseteq [d]$ sampled uniformly at random. We then split the subsets into those that include or exclude each feature $i \in [d]$, and we estimate the Banzhaf value as follows:

\begin{equation}
    \hat{\phi}_i(x) = \frac{1}{|\{j: i \in S_j\}|} \sum_{j: i \in S_j} f(x_{S_j}) - \frac{1}{|\{j: i \notin S_j\}|} \sum_{j: i \notin S_j} f(x_{S_j}).
\end{equation}

The re-use of samples across all features makes this estimator more efficient
than independent Monte Carlo estimates \citep{wang2022data}, and re-using samples in this way is simpler for Banzhaf values than for other methods like Shapley values \citep{covertshapley, kolpaczki2023approximating}. We used a custom implementation for this approach.

\textbf{LIME.} Similar to KernelSHAP, the most popular estimator for LIME feature attributions is based on approximately solving its weighted least squares problem. Following the problem formulation in \Cref{sec:xml-others}, which we simplify by omitting the regularization term $\Omega$, we sample $k$ subsets $S_j \subseteq [d]$ uniformly at random and solve the following importance sampling version of the objective:

\begin{equation}
    \hat{\phi}_1(x), \ldots, \hat{\phi}_d(x) = \argmin_{a \in \R^{d + 1}} \; \sum_{j = 1}^k \pi(S_j) \left( a_0 + \sum_{i \in S_j} a_i - f(x_{S_j}) \right)^2. 
\end{equation}

When doing so, we discard the intercept term,
we ensure that $k$ is large enough to avoid singular matrix inversion, and we use the default weighting kernel $\pi(S)$ for images in the official LIME implementation.\footnote{\url{https://github.com/marcotcr/lime}} We opt to sample subsets uniformly rather than according to $\pi(S)$ because this approach is used in the official implementation, and because the weighting kernel is similar to sampling subsets uniformly at random \citep{lin2023robustness}. We used a custom implementation of this approach.

\textbf{Data valuation.} For the various data valuation methods discussed in \Cref{sec:xml-valuation}, we use the simplest unbiased approximation, which is a Monte Carlo estimator that averages the performance difference across a set of sampled datasets. Given a labeled data point $z$, we sample $k$ datasets $\gD_j \subseteq \gD$ from the appropriate distribution and calculate the following empirical average:

\begin{equation}
    \hat{\psi}(z) = \frac{1}{k} \sum_{j = 1}^k v(\gD_j \cup \{z\}) - v(\gD_j).
\end{equation}

This is similar to the TMC algorithm from \citet{ghorbani2019data}, but we do not employ truncation; our approach can be used with truncation, but the noisy labels and valuation model predictions would both become biased. Similar to previous works, we also adopt a minimum subset cardinality to avoid training models with an insufficient number of data points. We implemented this approach using the OpenDataVal package \citep{jiang2023opendataval}. More sophisticated estimators are available, like those that exploit stratification \citep{wu2023variance}, re-use samples across all data points \citep{wang2022data}, or assume sparse attribution scores \citep{jia2019towards}, but we leave exploration of these estimators to future work.

\clearpage
\section{Experiment Details} \label{app:implementation}

\textbf{Model architectures.} For the feature attribution experiments, we used the ViT-Base architecture \cite{dosovitskiy2020image} for the classifier, and we used a modified version of the architecture for the amortized attribution model: following the approach from \citet{covert2022learning}, we added an extra self-attention layer and three fully-connected layers that operate on each token, so that the output contains an attribution score for each class. When estimating feature attributions, we make predictions with subsets of patches by setting the held-out patches to zero, and the classifier was fine-tuned with random masking to accommodate missing patches \citep{covert2021explaining, jain2021missingness, jethani2021fastshap, covert2022learning}.

For the data valuation experiments, we used a FCN with two hidden layers of size $128$ for the tabular datasets, and we used a ResNet-18 architecture \citep{he2016deep} for CIFAR-10. Valuation scores are defined for labeled examples $z = (x, y)$, so the valuation model must account for both the features $x$ and the class $y$ when making predictions; rather than passing $y$ as a model input, our architecture makes predictions simultaneously for all classes, and we only use the estimate for the relevant class. When generating noisy labels using the TMC estimator \citep{ghorbani2019data}, we trained a logistic regression model on raw input features for the tabular datasets, and for CIFAR-10 we trained a logistic regression on pre-trained ResNet-50 features whose dimensionality was reduced with PCA.

\textbf{Hyperparameters.} For the feature attribution experiments, we optimized the models using the AdamW optimizer \citep{loshchilov2019decoupled} with a linearly decaying learning rate schedule. The maximum learning rate was tuned using the validation loss, we trained for up to 100 epochs, and we selected the best model based on the validation loss. Due to the small scale of the noisy labels for Banzhaf and LIME feature attributions, we re-scaled the labels during training to improve the model's stability (see \Cref{app:lime-banzhaf}), but this re-scaling was not necessary for Shapley values.

For the data valuation experiments, we optimized the models using Adam \citep{kingma2014adam} with a cosine learning rate schedule. The maximum learning rate, the number of training epochs and the best model from the training run were determined using the validation loss. Due to the small scale of the noisy labels, we found it helpful to re-scale them during training to have standard deviation approximately equal to $1$. The cost of performing multiple training runs is negligible compared to generating the noisy training labels, so it is not reflected in our plots comparing amortization to the Monte Carlo estimator (e.g., \Cref{fig:valuation-tabular}).

\textbf{Pretraining.} We found that training the amortized models was faster and more stable when we initialized using pretrained architectures. For the feature attribution experiments, we initialized the model using the existing ViT-Base classifier weights. For data valuation, we initialized from a classifier trained using the entire dataset, where we used a FCN for the tabular datasets and a ResNet-18 for CIFAR-10. When adapting these models to their respective amortization tasks, the feature attribution models had freshly initialized output layers (one self-attention and three fully-connected), and the data valuation models had identical output layers that we re-initialized to zero.

\textbf{Validation loss.} For the feature attribution experiments, we calculated the validation loss using independent estimates generated with same estimator and the same number of samples used for the noisy training labels. For the data valuation experiments, we implemented the validation loss using independent Monte Carlo estimates of the data valuation scores. Our independent estimates use only $10$ samples for the tabular datasets, and just one sample for CIFAR-10. Amortization provides a significant benefit over per-example calculations even after accounting for the cost of the validation loss.

\textbf{Noisy oracle details.} The estimators used for each task are described in detail in \Cref{app:estimators}. For the feature attribution experiments, the permutation sampling and KernelSHAP estimators only require the number of samples as hyperparameters, and we tested multiple values in our experiments (e.g., \Cref{fig:shapley-target-versus-prediction-short}). For SGD-Shapley, we tuned the algorithm in several ways to improve its performance: we used a constant rather than decaying learning rate, we took a uniform average over iterates, we calculated gradients using multiple subsets, and we used the paired sampling trick from \citet{covert2021improving} to reduce the gradient variance. We also tuned the learning rate to a value that did not cause divergence for any examples (5e-4). As described in \Cref{app:shapley}, we observed that SGD-Shapley often had the lowest squared error among the three Shapley value estimators (\Cref{fig:shapley-target-accuracy}), but that it did not lead to successful amortization due to its non-negligible bias (\Cref{fig:shapley-target-versus-prediction}).

For the data valuation experiments, models were trained on each subsampled dataset by fitting a logistic regression model, either on raw input features for the tabular datasets or on pre-trained ResNet-50 features for CIFAR-10. For the Data Shapley experiments in \Cref{sec:data-valuation} with tabular datasets, we sampled datasets without replacement, we used a minimum cardinality of $5$ points and a maximum cardinality equal to the number of points $n$. For the Distributional Data Shapley experiments in \Cref{sec:distributional-valuation} with CIFAR-10, we sampled datasets with replacement, and we used a minimum cardinality of $100$ and maximum cardinality of $1000$.

\textbf{Ground truth.} In all our experiments, the ground truth is obtained by running an existing per-instance estimator for a large number of samples, and we use enough samples to ensure that it has approximately converged to the exact result. For the Shapley value feature attribution experiments, we run KernelSHAP for 1M samples. For Banzhaf values, we run the MSR estimator for 1M samples. For LIME, we run the least squares estimator for 1M samples. For the data valuation experiments, we run the Monte Carlo estimator for 10K samples. These estimators are costly to run for a large number of samples, so we do so for only a small portion of the dataset: we calculate the ground truth for $100$ examples for the feature attribution experiments. For the data valuation experiments, we use $250$ examples for the tabular datasets and $500$ for CIFAR-10.

\textbf{Metrics.} The performance metrics used throughout our experiments are related to the estimation accuracy, which we evaluate using our ground truth values. Other works have evaluated Shapley value feature attributions against other methods \citep{jethani2021fastshap, covert2022learning} or used data valuation scores in a range of downstream tasks \citep{jiang2023opendataval}, but our focus is on efficient and accurate estimation. For the feature attribution experiments, we used the squared error distance, which we calculate across all attributions; we used Pearson and Spearman correlation, which are calculated individually for each flattened vector of attribution scores and then averaged across data points; and we used sign agreement, which is averaged across all attributions. For the data valuation experiments, we used squared error distance, which we normalized so the mean ground truth valuation score has error equal to $1$ (i.e., we report the error divided by the variance in ground truth scores); and we used Pearson correlation, Spearman correlation and sign agreement, which were calculated using the vectors of estimated and ground truth valuation scores.

\textbf{Datasets.}
We used publicly available, open-source datasets for our experiments. For the feature attribution experiments, we used the ImageNette dataset, a natural image dataset consisting of
ten ImageNet classes \cite{howard2020fastai, deng2009imagenet}. We partitioned the $224 \times 224$ inputs into $196$ patches of size $14 \times 14$, and the dataset was split into a training set with $9469$ examples, a validation set with $1962$ examples, and a test set with $1963$ examples. The validation set was used to perform early stopping, and the test set was only used to evaluate the model's performance on external data.

For the data valuation experiments, we used two tabular datasets from the UCI repository: the MiniBooNE particle classification dataset \citep{roc2005boosted} and the adult census income classification dataset \citep{dua2017uci}. We obtained these using the OpenDataVal package \citep{jiang2023opendataval}, we used variable numbers of training examples ranging from 250 to 10K, and we reserved $100$ examples for validation in each case; these examples are only used to evaluate the performance of models trained on subsampled datasets. We also used OpenDataVal to add label noise to $20\%$ of the training examples. For CIFAR-10, we used 50K examples for training and 1K for validation. We tested multiple levels of label noise for CIFAR-10 (\Cref{app:valuation}), and our main text results use $10\%$ label noise.

\textbf{Compute resources.}
For the feature attribution experiments, we used a single GeForce RTX 2080Ti GPU to train the amortized models. Training an amortized model on ImageNette dataset for 50 epochs required roughly 4 hours. For the data valuation experiments, we used a single RTX A6000 to train amortized models, and these each trained in under an hour.

\textbf{FLOPs profiling.}
When profiling compute for our feature attribution experiments (see \Cref{fig:shapley-target-versus-prediction-short}), we first measured the following constants using the DeepSpeed FLOPs profiler, all using the ViT-Base architecture:

\begin{itemize}[leftmargin=0.7cm]
    \item The classifier’s forward pass requires 17,563,067,904 FLOPs $\approx$ 17.6 GFLOPs per prediction.
    \item The amortized model’s forward pass requires 21,335,153,664 FLOPs $\approx$ 21.3 GFLOPs per prediction, due to the model having an extra transformer block for fine-tuning.
    \item The amortized model has 104,730,000 $\approx$ 105M trainable parameters.
\end{itemize}

Next, for the FLOPs comparison shown in \Cref{fig:shapley-target-versus-prediction-short} we calculated total FLOPs for the two approaches as follows. For KernelSHAP, the total FLOPs = 17.6 GFLOPs $\times$ (number of subset samples) $\times$ (number of training datapoints).
For the amortized models, the total FLOPs are the sum of multiple terms:

\begin{enumerate}[leftmargin=0.7cm]
    \item FLOPs for obtaining noisy labels = 17.6 GFLOPs $\times$ (number of subset samples) $\times$ (number of training datapoints)
    \item FLOPs for the forward pass during training = 21.3 GFLOPs $\times$ (number of epochs) $\times$ (number of training datapoints)
    \item FLOPs for the backward pass during training = 21.3 GFLOPs $\times$ 2 $\times$ (number of epochs) $\times$ (number of training datapoints),
    where 2 is a standard multiplier for calculating FLOPs during the backward pass
    \item FLOPs for updating parameters = 
    105 MFLOPs $\times$ (number of epochs) $\times$ (number of training datapoints) / (batch size) $\times$ 19,
    where 19 is due to the optimizer state
\end{enumerate}

In the calculation above, 1) dominates due to the large number of subset samples used for each datapoint (e.g., $>$512 for KernelSHAP). Meanwhile, the training epochs in 2), 3) and 4) are relatively low due to our models’ fast convergence ($<$25 epochs). In the FLOPs calculation above, training an amortized model for one epoch is equivalent to obtaining predictions for roughly $3.65$ additional subset samples per training datapoint. This is because $(3 \times 21{,}335{,}153{,}664 + 104{,}730{,}000 \times 19 / 64) / 17{,}563{,}067{,}904 \approx 3.65$. In other words, the amount of compute for training an amortized model for up to 50 epochs translates to obtaining predictions for just $182.5$ subset samples per datapoint, which is not enough to meaningfully improve the estimates by running KernelSHAP for more iterations. This explains why the lines for amortized models appear almost vertical in \Cref{fig:shapley-target-versus-prediction-short} when we compared amortization to KernelSHAP. Our procedure for calculating FLOPs is similar to other works that profile computation when training neural networks.\footnote{We consulted the following resources: (i)~\url{https://www.lesswrong.com/posts/jJApGWG95495pYM7C/how-to-measure-flop-s-for-neural-networks-empirically}, (ii)~\url{https://www.lesswrong.com/posts/fnjKpBoWJXcSDwhZk/what-s-the-backward-forward-flop-ratio-for-neural-networks}, and
(iii)~\url{https://www.adamcasson.com/posts/transformer-flops}}

\clearpage
\section{Additional Results} \label{app:results}

This section provides additional experimental results involving Shapley value feature attributions (\Cref{app:shapley}), Banzhaf value and LIME feature attributions (\Cref{app:lime-banzhaf}), and data valuation (\Cref{app:valuation}).

\subsection{Shapley value amortization} \label{app:shapley}

First, \Cref{fig:shapley-prediction-contextualize} shows that the performance of our amortized models is comparable to running KernelSHAP for 10-40k samples. For example, an amortized model trained on noisy KernelSHAP labels generated with $512$ samples produces outputs of similar quality to running KernelSHAP for about 20k samples in terms of MSE, which is equivalent to a speedup of roughly 40x.

\begin{figure*}[ht!]
\centering
\includegraphics[width=1\linewidth, trim={0 19cm 0 0}, clip]{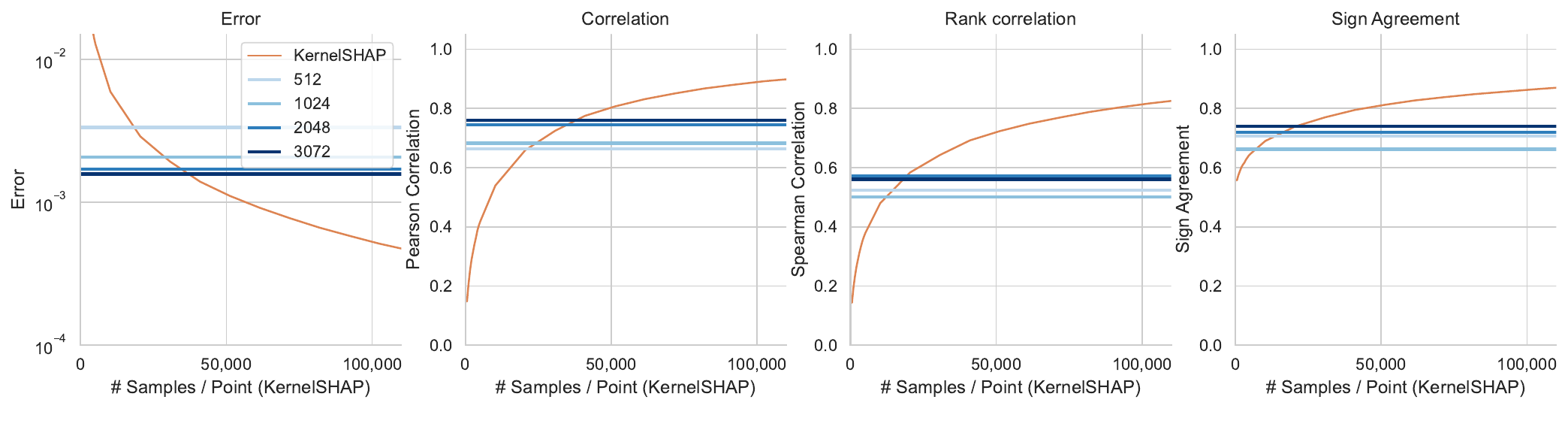}
\vspace{-0.35in}
\caption{Comparison of the estimation accuracy between KernelSHAP and amortized predictions.} \label{fig:shapley-prediction-contextualize}
\end{figure*}

Next, \Cref{fig:shapley-compute-trainsamples} shows an expanded version of \Cref{fig:shapley-target-versus-prediction-short} (right)
from the main text, where we compare amortization to per-instance estimation given a fixed amount of compute per data point. We observe that across four metrics, amortization provides a benefit over KernelSHAP even when training with a small portion of the ImageNette dataset. 

\begin{figure*}[ht!]
\centering
\includegraphics[width=\linewidth]{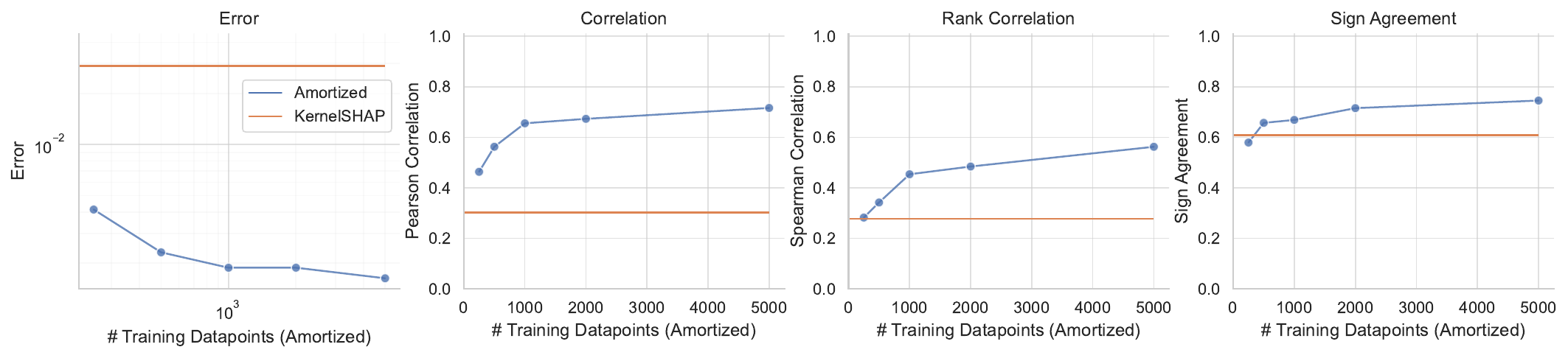}
\vspace{-0.22in}
\caption{Estimation accuracy for amortization and KernelSHAP with different dataset sizes given equivalent compute.} \label{fig:shapley-compute-trainsamples}
\end{figure*}

\Cref{fig:shapley-compute-trainsamples-external} shows a similar result, but the amortized model's performance is evaluated with external data points (i.e., points that are not seen during training). The benefits of amortization remain significant for most dataset sizes, reflecting that the model generalizes beyond the training data and can be used for real-time feature attribution with new examples.

\begin{figure*}[ht!]
\centering
\includegraphics[width=\linewidth]{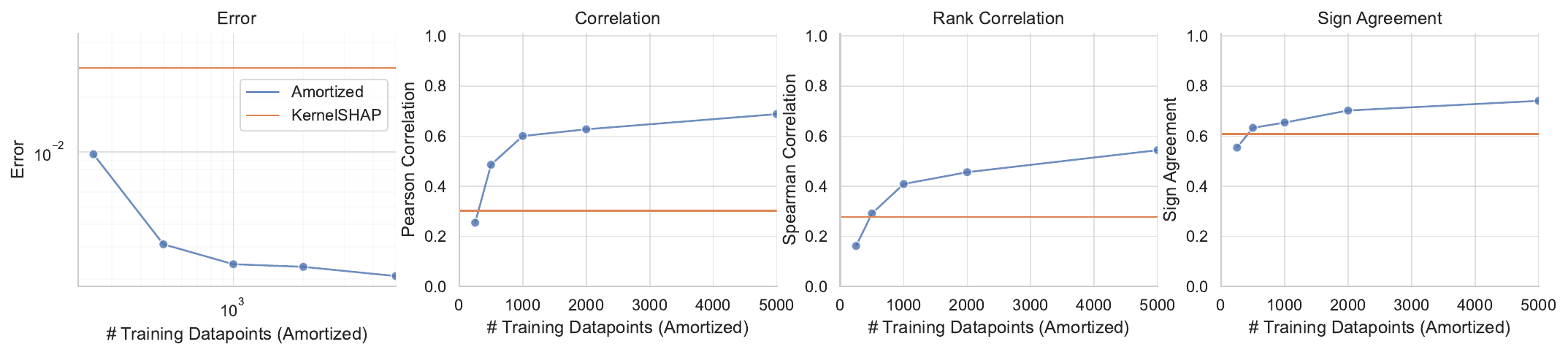}
\vspace{-0.22in}
\caption{Estimation accuracy for amortization and KernelSHAP with different dataset sizes given equivalent compute (external data points).} \label{fig:shapley-compute-trainsamples-external}
\end{figure*}

\Cref{fig:shapley-compute-epoch} shows an expanded version of \Cref{fig:shapley-target-versus-prediction-short} (center)
where we can see the benefits of amortization across multiple numbers of KernelSHAP samples when we account for the number of FLOPs.

\begin{figure*}[ht!]
\centering
\includegraphics[width=\linewidth]{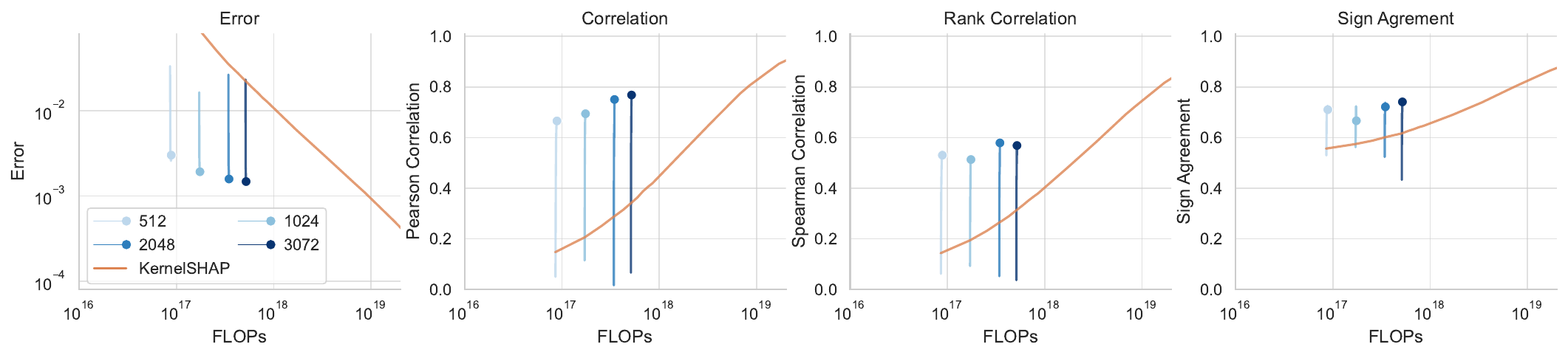}
\vspace{-0.2in}
\caption{Error of amortization and KernelSHAP as a function of FLOPs.} \label{fig:shapley-compute-epoch}
\end{figure*}

\Cref{fig:shapley-target-versus-prediction} shows an expanded version of \Cref{fig:shapley-target-versus-prediction-short}, where we can see the benefits of amortization for KernelSHAP and permutation sampling across four metrics. We also see that SGD-Shapley does not lead to successful amortization, because the predictions are worse than the noisy labels across all four metrics. This is perhaps surprising, because \Cref{fig:shapley-target-accuracy} shows that SGD-Shapley has the lowest squared error among the three noisy oracles. The crucial issue with SGD-Shapley is that its estimates are not unbiased (see \Cref{sec:amortization}), an issue that has been shown in prior work \citep{chen2022algorithms}.

\begin{figure*}[ht!]
\centering
\includegraphics[width=1\linewidth]{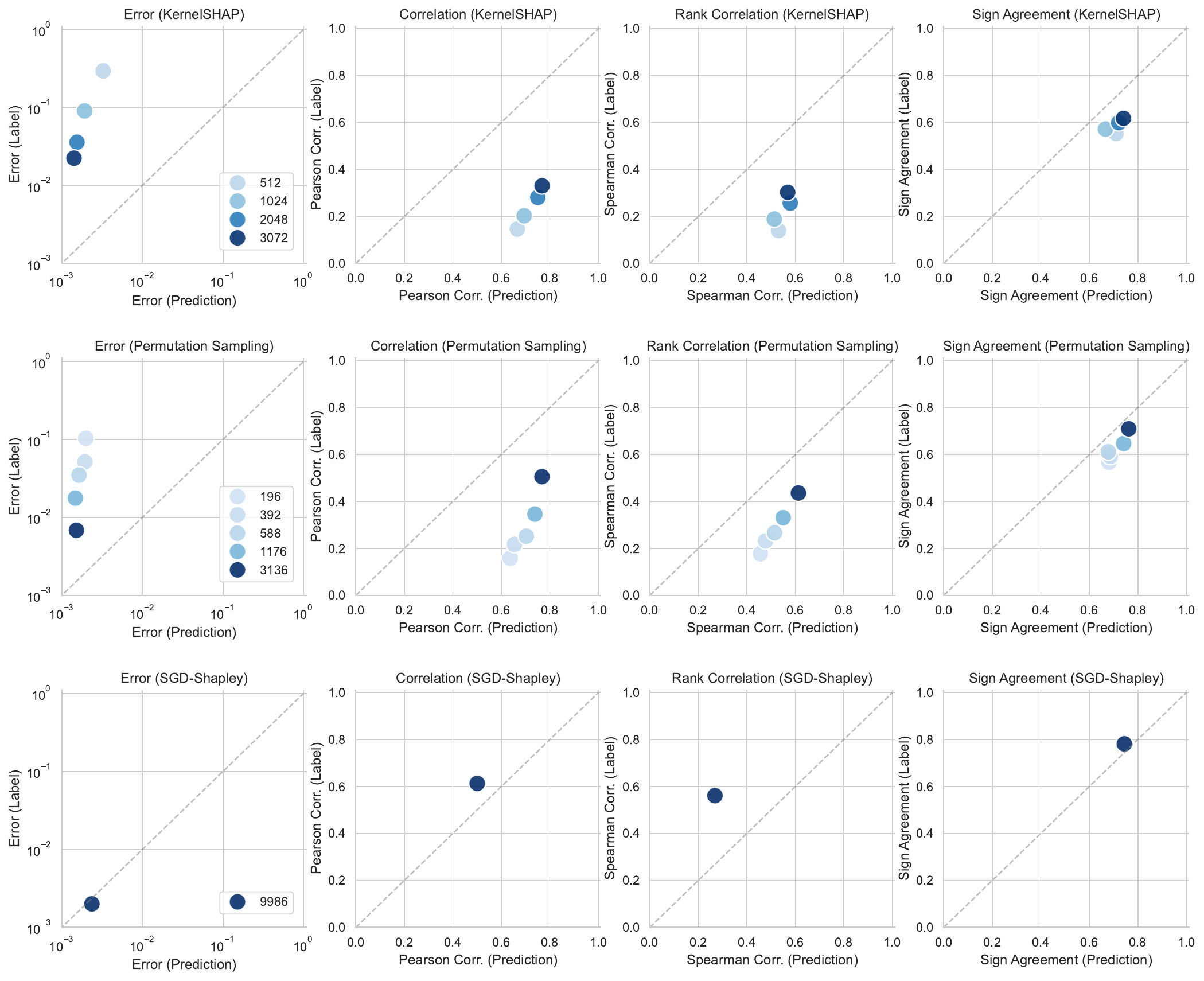}
\vspace{-0.22in}
\caption{Comparison of the estimation error between noisy labels and amortized predictions for Shapley value feature attributions. Top: noisy labels generated using KernelSHAP with different numbers of samples. Middle: noisy labels generated using permutation sampling with different numbers of samples. Bottom: noisy labels generated using SGD-Shapley with different numbers of samples.} \label{fig:shapley-target-versus-prediction}
\end{figure*}

\begin{figure*}[ht!]
\centering
\includegraphics[width=1\linewidth]{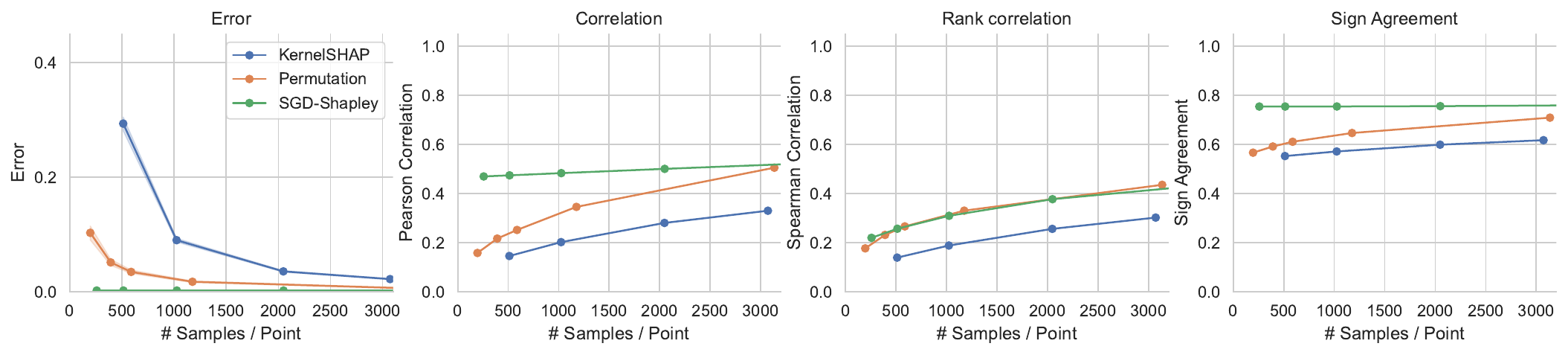}
\vspace{-0.22in}
\caption{Comparison of the estimation error between different per-example estimators for Shapley value feature attributions across varying numbers of samples.} \label{fig:shapley-target-accuracy}
\end{figure*}

\Cref{fig:shapley-target-versus-prediction-external} is similar to \Cref{fig:shapley-target-versus-prediction}, only the performance metrics are calculated using external points not seen during training. Like \Cref{fig:shapley-target-versus-prediction-external}, this result emphasizes that the amortized attribution model generalizes well and can be reliably used with new data points.

\begin{figure*}[ht!]
\centering
\includegraphics[width=1\linewidth]{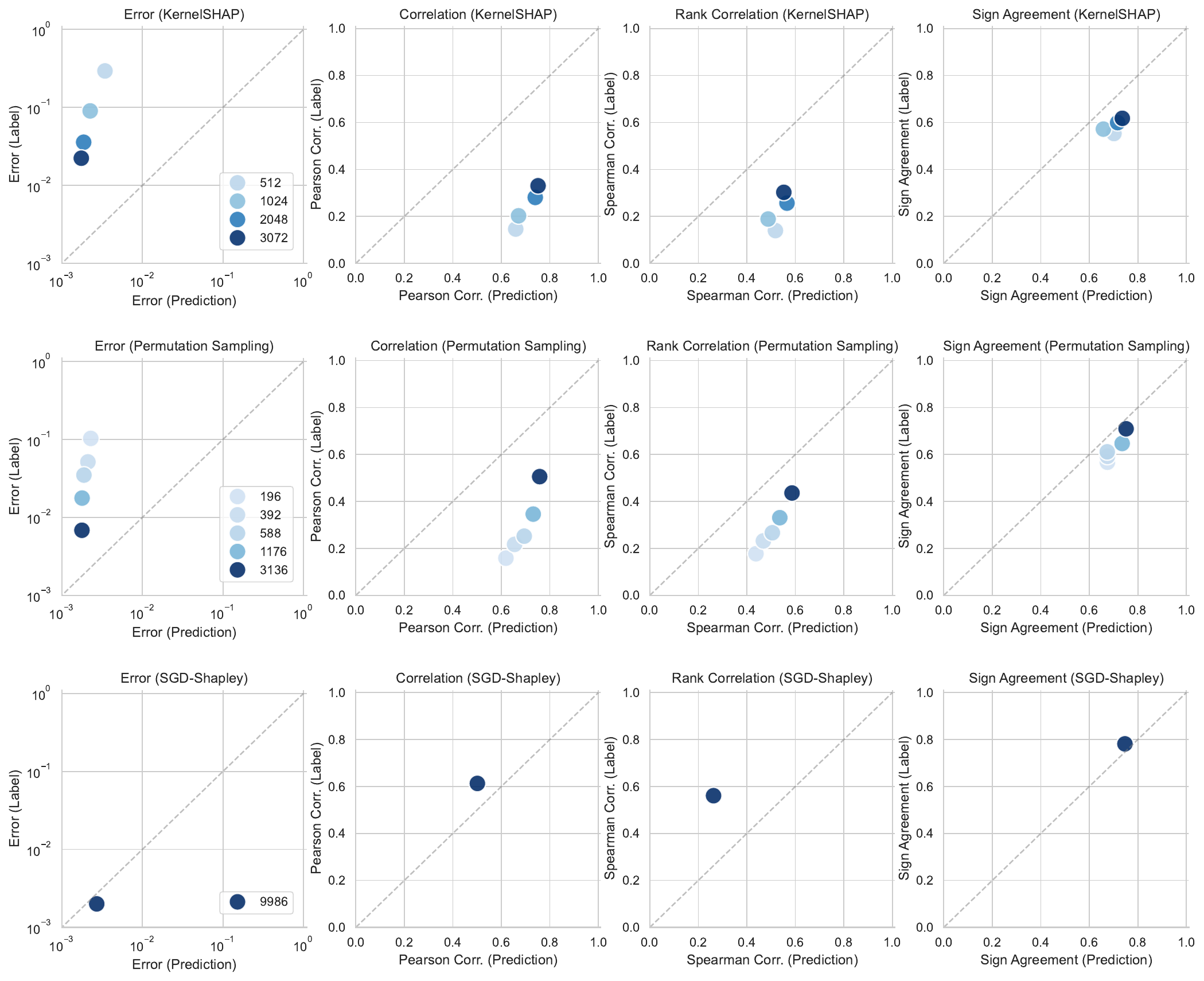}
\vspace{-0.22in}
\caption{Comparison of the estimation error between noisy labels and amortized predictions for Shapley value feature attributions (external data points). Top: noisy labels generated using KernelSHAP with different numbers of samples. Middle: noisy labels generated using permutation sampling with different numbers of samples. Bottom: noisy labels generated using SGD-Shapley with different numbers of samples.} \label{fig:shapley-target-versus-prediction-external}
\end{figure*}

\clearpage
Finally, \Cref{fig:fastshap} provides a comparison between stochastic amortization and FastSHAP \citep{jethani2021fastshap, covert2022learning}, one of the main existing approaches to amortized Shapley value estimation. For stochastic amortization, we use our previous results with permutation sampling as the noisy oracle and five different numbers of samples. For FastSHAP, we train the same ViT-Base architecture following the approach from \citet{covert2022learning}, using 32 subset samples per gradient step. We monitor FastSHAP's estimation accuracy at the end of each epoch while training for a total of 100 epochs. We observe that FastSHAP and stochastic amortization achieve similar error at each total FLOPs budget, where both methods incur FLOPs from training the amortized model, stochastic amortization incurs FLOPs upfront when generating noisy labels, and FastSHAP incurs FLOPs during training while sampling subsets for each gradient step. FastSHAP is slightly more accurate at the largest computational budget, and both are significantly more accurate than per-sample estimation with KernelSHAP. Due to their similar performance, the main advantage of our approach is its simplicity (a standard regression objective rather than the custom FastSHAP objective), and the flexibility to use any unbiased estimator as the noisy oracle; future work may find that stochastic amortization is more effective with other noisy oracles that we did not try here.

\begin{figure*}[ht!]
\centering
\includegraphics[width=0.5\linewidth, trim={0cm 0cm 27cm 0cm}, clip]{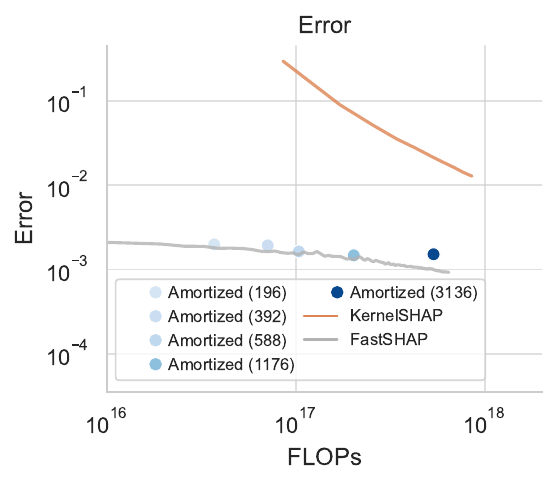}
\caption{Comparison between stochastic amortization, FastSHAP and KernelSHAP as a function of total FLOPs.} \label{fig:shapley-compute-reg-vs-obj}
\label{fig:fastshap}
\end{figure*}

\clearpage
\subsection{Banzhaf value and LIME amortization} \label{app:lime-banzhaf}

As we discussed in \Cref{sec:xml-others}, Banzhaf value and LIME feature attributions are two XML tasks closely related to Shapley values that can be amortized in a similar fashion. Both are intractable to calculate exactly, but they have (approximately) unbiased estimators that can be used as noisy labels for stochastic amortization. \Cref{app:estimators} describes these estimators, namely the MSR estimator for Banzhaf values \citep{wang2022data} and the least squares estimator for LIME \citep{ribeiro2016should}. Prior work has also shown that these approaches generate similar outputs \citep{lin2023robustness}, although the standard computational approaches are quite different.

In amortizing these methods, one trend we observed is that Banzhaf value and LIME feature attributions have a very different scale from Shapley values. \Cref{fig:feature-attribution-distribution} shows that they not only have smaller norm, but that the distribution is concentrated at small magnitudes with a long tail of larger magnitudes. This is troublesome, because our theory is focused on the squared error $\lVert a(b; \theta) - a(b) \rVert^2$ (see \Cref{sec:amortization}), which is dominated by the small number of examples with large magnitudes. As a result, an amortized attribution model can appear to train well by accurately estimating attributions with a large norm, and predicting the remaining attributions to be roughly zero; in doing so it may fail to provide the correct relative feature ordering for most examples, which is more important for practical usage of feature attributions.

\begin{figure*}[ht]
\centering
\includegraphics[width=0.9\linewidth]{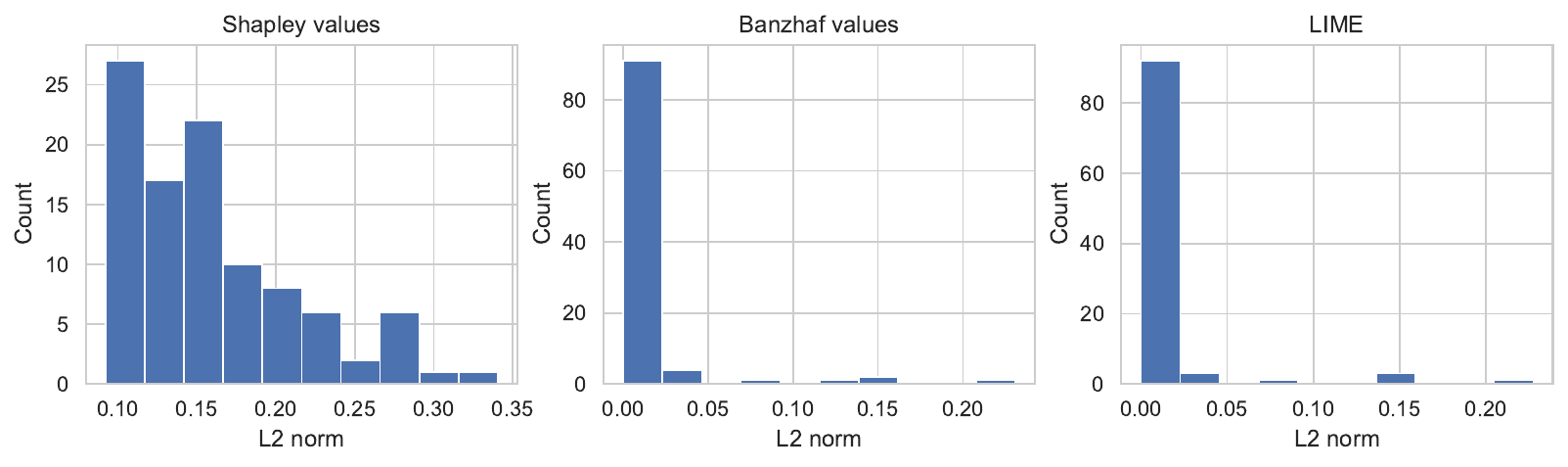}
\vspace{-0.15in}
\caption{Comparison of the distribution of norms between different feature attribution methods. We plotted the L2 norm of the 100 ground truths for each feature attribution.} \label{fig:feature-attribution-distribution}
\end{figure*}

The issue described above is precisely what occurs when we amortize Banzhaf value feature attributions, as shown in the top row of \Cref{fig:banzhaf-target-versus-prediction}. We observe that the squared error from our predictions is lower than that of the noisy labels, which is consistent with our theory from \Cref{sec:amortization} and our results for Shapley values (\Cref{app:shapley}). However, the amortized estimates perform worse than the noisy labels on the remaining metrics, particularly for the correlation scores that evaluate the relative feature ordering in each example's attributions (see details for the metrics in \Cref{app:implementation}).

To alleviate this issue, we experimented with a per-label normalization that eliminates the long tail of large attribution norms that dominate training: we simply normalized each example's attributions for each class to have a norm of $1$. The results are shown in the bottom row of \Cref{fig:banzhaf-target-versus-prediction}, where we see that the predictions have higher correlation scores than the noisy labels. The squared error is higher for the amortized predictions, and the sign agreement is roughly the same. One issue with this heuristic is that the normalized noisy labels are biased: the normalized noisy label is not an unbiased estimator of the normalized exact label, because the normalization constant is not known a priori and must be calculated using the noisy label. However, the results show that amortization can to some extent denoise inexact labels even with this non-zero bias.

\begin{figure*}[ht]
\centering
\includegraphics[width=\linewidth]{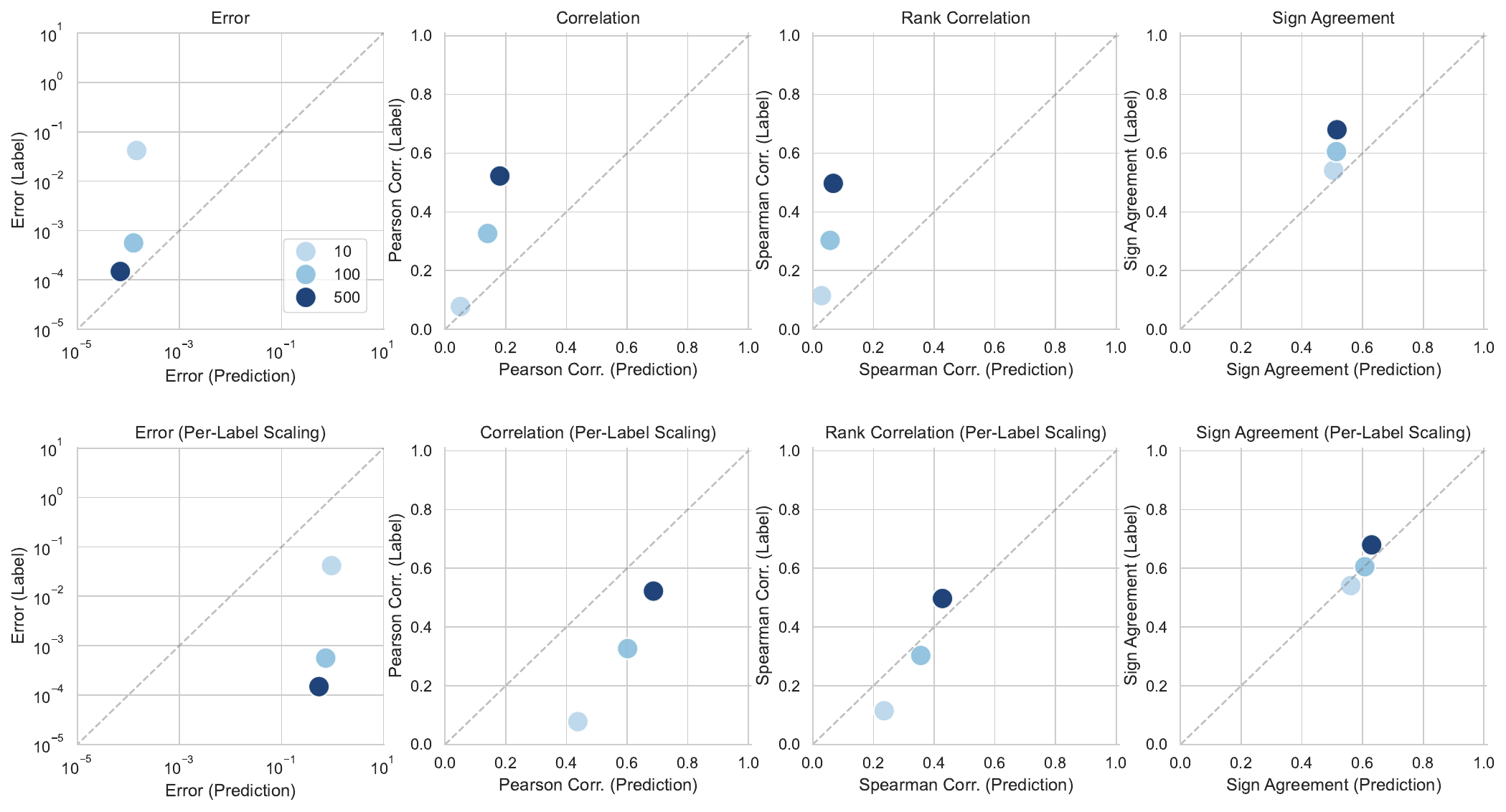}
\vspace{-0.22in}
\caption{Comparison of the estimation error between noisy labels and amortized predictions for Banzhaf value feature attributions. Noisy labels were generated using the MSR estimator with different numbers of samples. Top: using raw estimates as noisy training labels. Bottom: normalizing each noisy training label separately to have unit norm for each class.} \label{fig:banzhaf-target-versus-prediction}
\end{figure*}

\Cref{fig:lime-target-versus-prediction} shows similar results as \Cref{fig:banzhaf-target-versus-prediction} but with LIME feature attributions. We observe the same improvement in squared error from amortization, and a similar degradation in the correlation metrics (\Cref{fig:lime-target-versus-prediction} top). We then apply the same per-label normalization trick, and we observe a similar modest improvement in the correlation metrics, at least for noisier settings of the least squares estimator (\Cref{fig:lime-target-versus-prediction} bottom).

Overall, these results show that for the purpose of amortization, the consistency in scale of Shapley values is an unexpected advantage over Banzhaf values and LIME. This property is due in part to the Shapley value's efficiency axiom \citep{shapley1953value}, which guarantees that the Shapley values sum to the difference between the prediction with all features and no features \citep{lundberg2017unified}. In comparison, LIME and Banzhaf values focus on marginal contributions involving roughly half of the features, which in many cases are near zero when the prediction is saturated; for example, previous works have observed that for natural images containing relatively large objects, a large portion of the patches must be removed before we observe significant changes in the prediction \citep{jain2021missingness, covert2022learning}.

\begin{figure*}[ht]
\centering
\includegraphics[width=\linewidth]{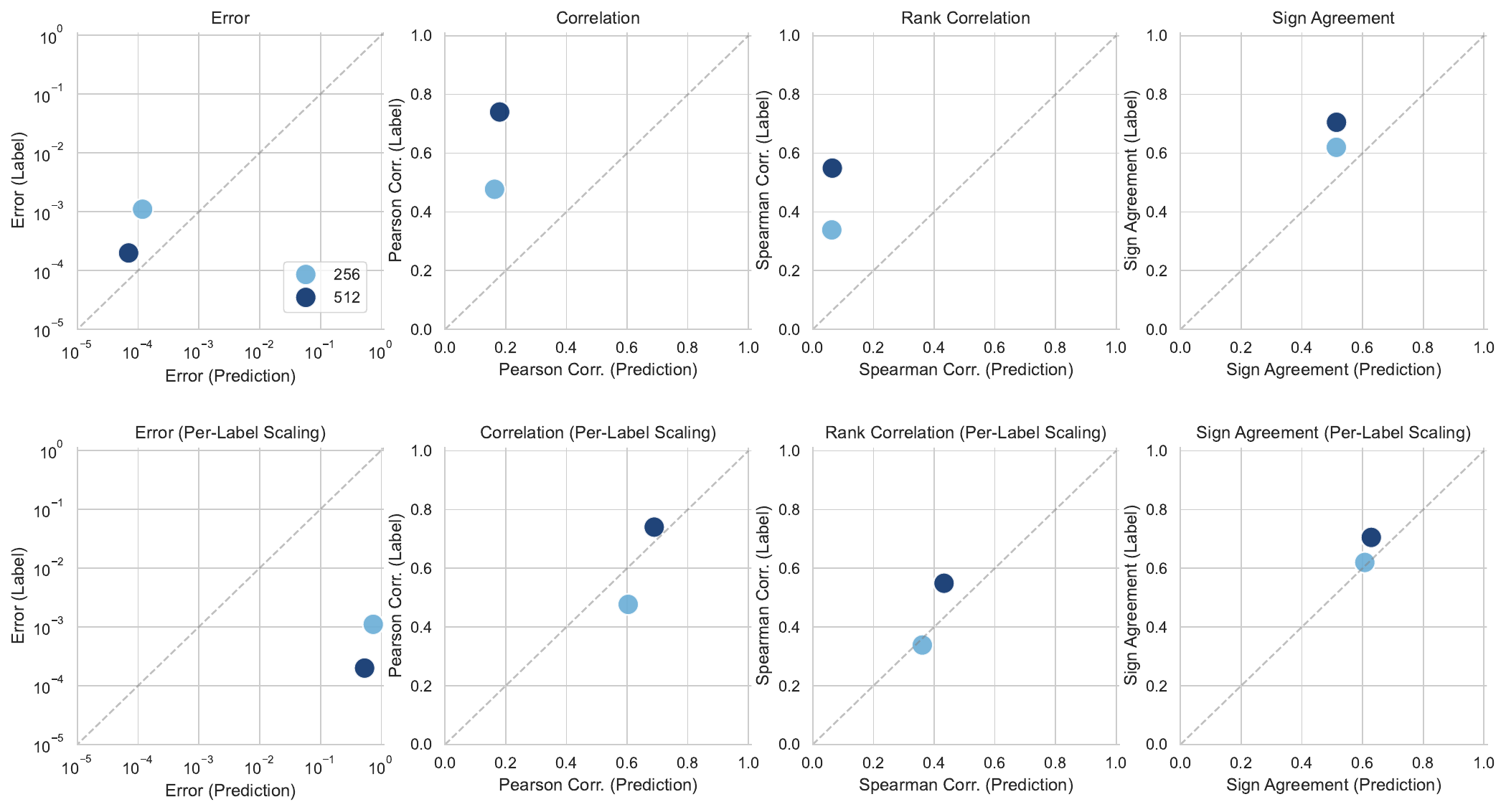}
\vspace{-0.22in}
\caption{Comparison of the estimation error between noisy labels and amortized predictions for LIME feature attributions. Noisy labels were generated using LIME's least squares estimator with different numbers of samples. Top: using raw estimates as noisy training labels. Bottom: normalizing each noisy training label separately to have unit norm for each class.} \label{fig:lime-target-versus-prediction}
\end{figure*}

\clearpage
\subsection{Data valuation} \label{app:valuation}

For the data valuation experiments, our first additional result compares amortization to the Monte Carlo estimator when using the MiniBooNE and adult datasets with different numbers of data points. The results are shown in \Cref{fig:valuation-tabular-metrics-miniboone} and \Cref{fig:valuation-tabular-metrics-adult}, where we use datasets ranging from 1K to 10K points, and we see that the benefits of amortization increase with the size of the dataset.

\begin{figure}[ht]
\centering
\includegraphics[width=\textwidth]{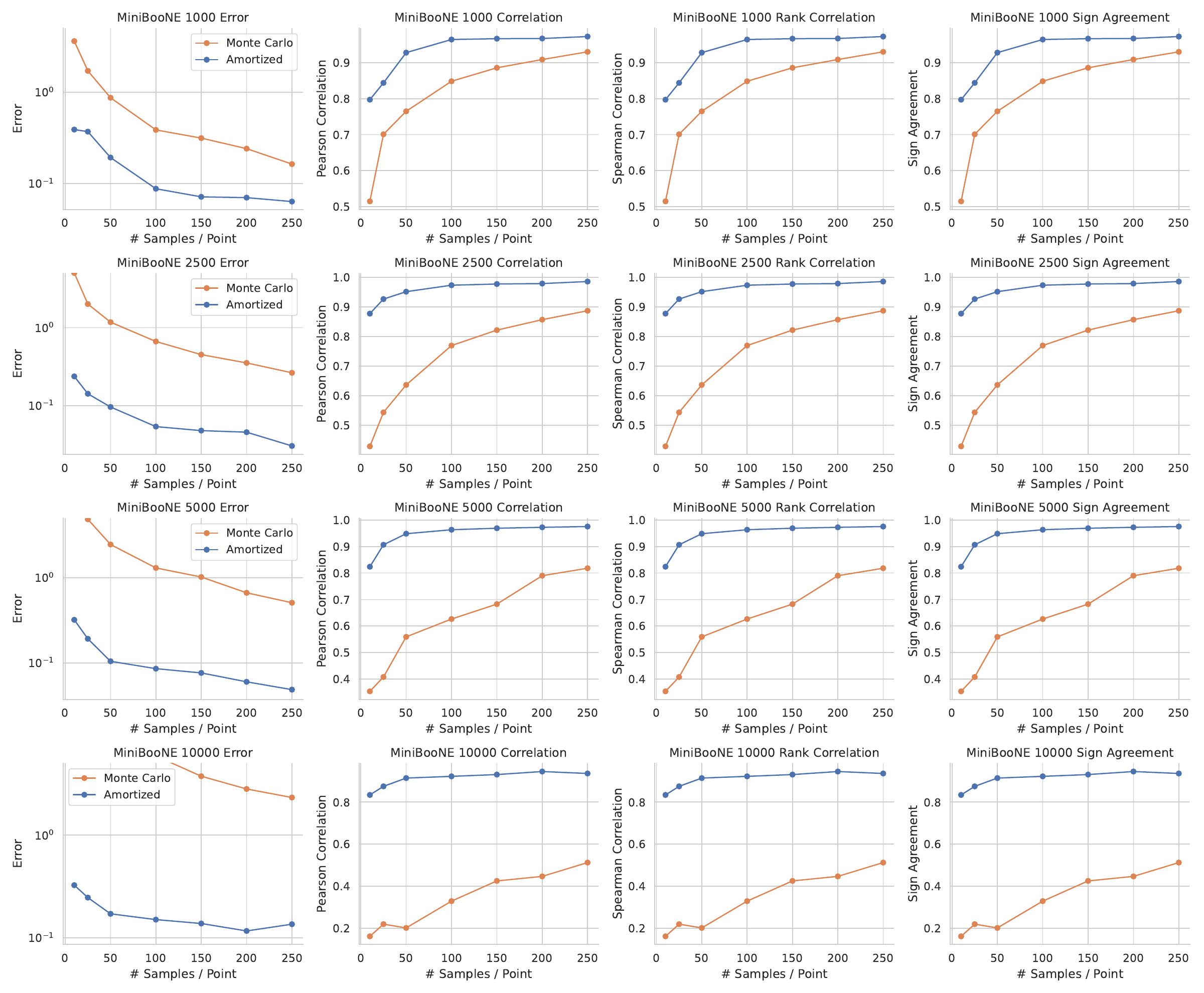}
\vspace{-0.2in}
\caption{Amortized data valuation for the MiniBooNE dataset with different numbers of training data points (1K to 10K). We show four metrics: squared error (normalized so that the mean valuation scores has error equal to $1$), Pearson correlation, Spearman correlation, and sign agreement.} \label{fig:valuation-tabular-metrics-miniboone}
\end{figure}

\begin{figure}[ht]
\centering
\includegraphics[width=\textwidth]{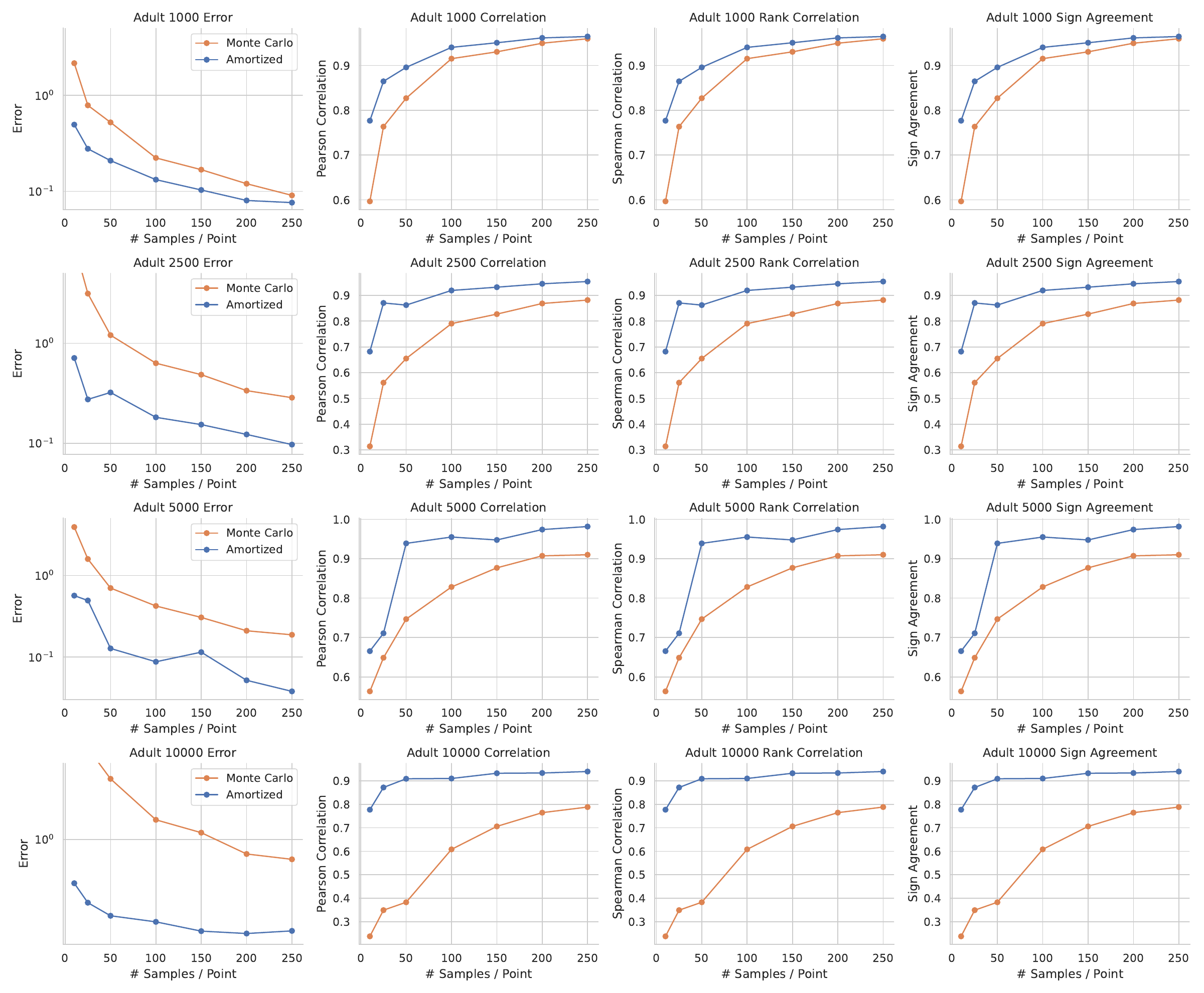}
\vspace{-0.2in}
\caption{Amortized data valuation for the adult dataset with different numbers of training data points (1K to 10K). We show four metrics: squared error (normalized so that the mean valuation scores has error equal to $1$), Pearson correlation, Spearman correlation, and sign agreement.} \label{fig:valuation-tabular-metrics-adult}
\end{figure}

\clearpage
\begin{figure}[ht!]
\centering
\includegraphics[width=\linewidth]{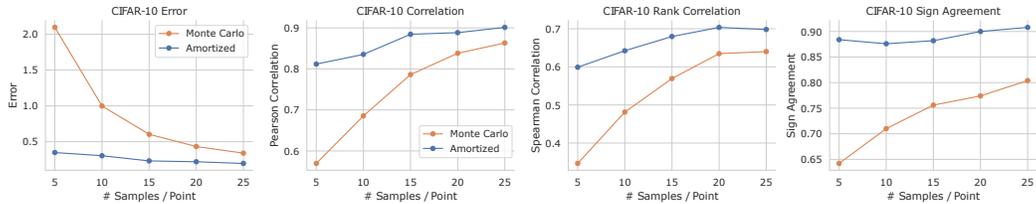}
\vspace{-0.12in}
\caption{Distributional data valuation for CIFAR-10 with 50K data points. We generate Monte Carlo estimates and amortized estimates with different numbers of samples per data point, and the scores are compared to ground truth values using four metrics.} \label{fig:valuation-image-full}
\end{figure}

We also provide more detailed performance metrics for CIFAR-10: \Cref{fig:valuation-image-full} shows an expanded version of \Cref{fig:valuation-image}, and we observe that amortization improves upon the Monte Carlo estimator across all four performance metrics.

Next, we consider the use of CIFAR-10 data valuation scores in two downstream tasks. First, we attempt to identify mislabeled examples, which we expect should have large negative valuation scores. \Cref{fig:valuation-mislabeled} shows how quickly each method identifies negative examples when we sort the scores from lowest to highest: our amortized estimates that train with just $5$ samples provide the highest accuracy, outperforming Monte Carlo estimates that use as many as $100$ samples. This result is consistent across different levels of label noise, where we randomly flip either $10\%$, $25\%$ or $50\%$ of the labels.

\Cref{tab:mislabeled} performs a similar analysis involving mislabeled examples: we expect these examples to have lower scores than correctly labeled examples, so we use the estimated scores to calculate the AUROC, the AUPR, and the portion of mislabeled examples with negative scores. Across the different levels of label noise, our amortized estimates achieve the best performance according to all three metrics. The strong performance of our amortized data valuation scores is largely due to the improved estimation accuracy, but we expect that it is also due to the valuation network being trained on raw images: the noisy labels are derived from training logistic regression models on pretrained ResNet-50 embeddings, which can lead to somewhat arbitrary valuation score differences between semantically similar examples, and the valuation network may learn a more generalizable notion of data value.

\begin{figure}[h]
\centering
\includegraphics[width=\textwidth]{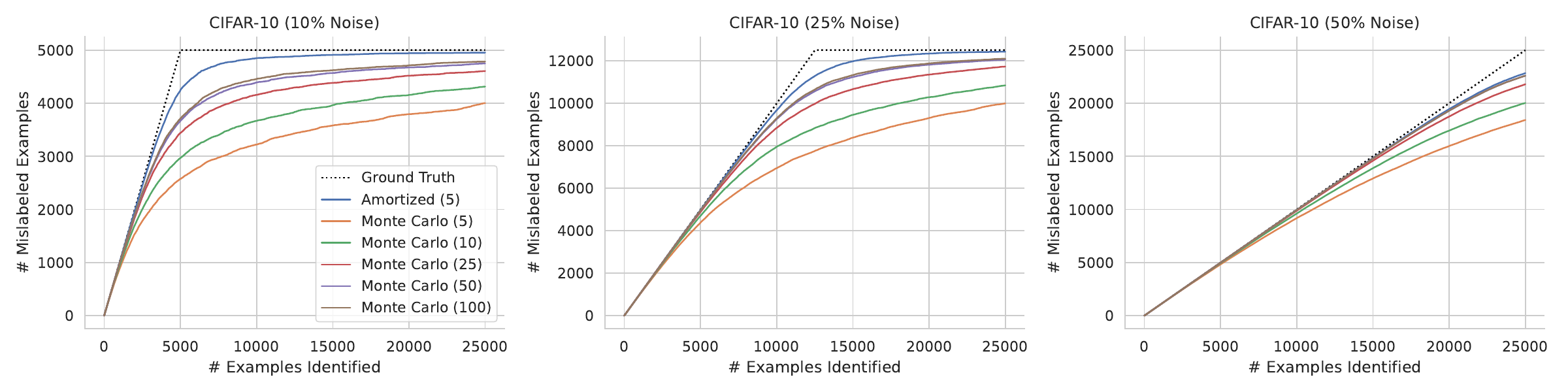}
\vspace{-0.32in}
\caption{CIFAR-10 mislabeled example identification with different amounts of label noise. We compare the number of mislabeled examples among the lowest-scoring data points, where the amortized model uses just $5$ samples for training and the Monte Carlo estimator uses between $5$ and $100$ samples.} \label{fig:valuation-mislabeled}
\end{figure}

\begin{table*}[ht]
\centering
\caption{Mislabeled example identification accuracy for CIFAR-10 with different amounts of label noise.} \label{tab:mislabeled}
\begin{center}
\vskip 0.15cm
\begin{scriptsize}
\begin{tabular}{lccccccccc}
\toprule
 & \multicolumn{3}{c}{10\% Noise} & \multicolumn{3}{c}{25\% Noise} & \multicolumn{3}{c}{50\% Noise} \\
 \cmidrule(lr){2-4} \cmidrule(lr){5-7} \cmidrule(lr){8-10} 
 & AUROC & AUPRC & Negative & AUROC & AUPRC & Negative & AUROC & AUPRC & Negative \\
\midrule
Amortized (5) & 0.981 & 0.917 & 0.974 & 0.985 & 0.964 & 0.973 & 0.972 & 0.973 & 0.915 \\
Monte Carlo (5) & 0.782 & 0.515 & 0.757 & 0.802 & 0.687 & 0.758 & 0.815 & 0.829 & 0.708 \\
Monte Carlo (10) & 0.844 & 0.619 & 0.826 & 0.867 & 0.784 & 0.821 & 0.883 & 0.892 & 0.769 \\
Monte Carlo (25) & 0.906 & 0.738 & 0.883 & 0.931 & 0.877 & 0.888 & 0.943 & 0.947 & 0.841 \\
Monte Carlo (50) & 0.934 & 0.794 & 0.909 & 0.956 & 0.918 & 0.918 & 0.965 & 0.967 & 0.878 \\
Monte Carlo (100) & 0.941 & 0.808 & 0.918 & 0.960 & 0.923 & 0.922 & 0.965 & 0.967 & 0.878 \\
\bottomrule
\end{tabular}
\end{scriptsize}
\end{center}
\end{table*}

\clearpage
Finally, we experiment with using the data valuation scores to improve the dataset. Using the version with $25\%$ label noise, we experiment with several possible modifications to the dataset, where in each case we train $5$ models and report the mean cross entropy loss. First, we consider a perfect filtering of the data where we remove all mislabeled examples. Next, we remove different numbers of examples chosen uniformly at random. We also consider removing the examples with the lowest scores according to the Monte Carlo estimates with $10$ samples. Finally, we test two possible approaches based on the amortized valuation model: (i)~we filter out the lowest scoring examples (``Amortized filtering''), similar to how we use the noisy Monte Carlo estimates; (ii)~we attempt to correct the lowest scoring examples using the class that the valuation model predicts to be most valuable (``Amortized cleaning''), which is a unique capability enabled by our valuation model that predicts valuation scores simultaneously for all possible classes (see \Cref{app:implementation}). The results of this experiment are shown in \Cref{fig:valuation-downstream}. We see that removing samples at random hurts the model's performance relative to using all the data,
that filtering with the amortized estimates outperforms filtering with the Monte Carlo estimates, and that cleaning the estimates is the best approach by a narrow margin.

\begin{figure}[h]
\centering
\includegraphics[width=0.5\columnwidth]{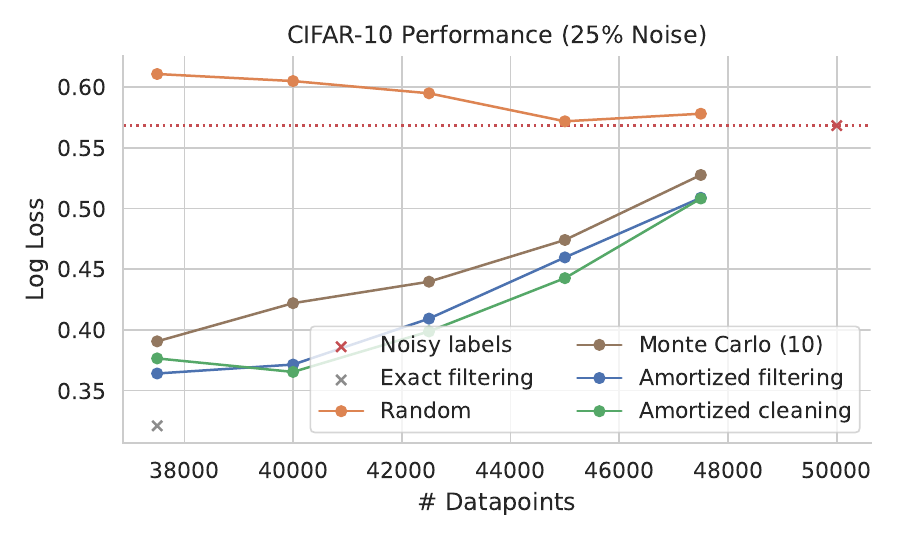}
\vspace{-0.1in}
\caption{CIFAR-10 performance when training data is removed or adjusted based on estimated valuation scores.} \label{fig:valuation-downstream}
\end{figure}

\section{Broader Impact} \label{app:broader_impact}

Our proposed method aims to make many computationally challenging XML tasks feasible and scalable to large datasets. We expect our research to contribute to a better understanding of ML models, enhancing their transparency and explainability. Additionally, our research could help understand model fairness properties if our method is used to identify undesirable or unfair dependencies of ML models on input features related to protected attributes like race or gender. The potential downside is that that if users overly trust our method and rely solely on it for these tasks without careful usage, it could lead to misunderstanding of models and
result in harmful outcomes.

\clearpage

\newpage
\section*{NeurIPS Paper Checklist}

\begin{enumerate}

\item {\bf Claims}
    \item[] Question: Do the main claims made in the abstract and introduction accurately reflect the paper's contributions and scope?
    \item[] Answer: \answerYes{} 
    \item[] Justification: Our claim made in the abstract and introduction is that training amortized models with unbiased noisy oracles is effective and provides significant speed-ups for various XML tasks. This is supported by theoretical and experimental results in \Cref{sec:amortization}, \Cref{sec:xml}, and \Cref{sec:experiments}.
    \item[] Guidelines:
    \begin{itemize}
        \item The answer NA means that the abstract and introduction do not include the claims made in the paper.
        \item The abstract and/or introduction should clearly state the claims made, including the contributions made in the paper and important assumptions and limitations. A No or NA answer to this question will not be perceived well by the reviewers. 
        \item The claims made should match theoretical and experimental results, and reflect how much the results can be expected to generalize to other settings. 
        \item It is fine to include aspirational goals as motivation as long as it is clear that these goals are not attained by the paper. 
    \end{itemize}

\item {\bf Limitations}
    \item[] Question: Does the paper discuss the limitations of the work performed by the authors?
    \item[] Answer: \answerYes{} 
    \item[] Justification: The limitations of our work are discussed in \Cref{sec:conclusion}.
    \item[] Guidelines:
    \begin{itemize}
        \item The answer NA means that the paper has no limitation while the answer No means that the paper has limitations, but those are not discussed in the paper. 
        \item The authors are encouraged to create a separate "Limitations" section in their paper.
        \item The paper should point out any strong assumptions and how robust the results are to violations of these assumptions (e.g., independence assumptions, noiseless settings, model well-specification, asymptotic approximations only holding locally). The authors should reflect on how these assumptions might be violated in practice and what the implications would be.
        \item The authors should reflect on the scope of the claims made, e.g., if the approach was only tested on a few datasets or with a few runs. In general, empirical results often depend on implicit assumptions, which should be articulated.
        \item The authors should reflect on the factors that influence the performance of the approach. For example, a facial recognition algorithm may perform poorly when image resolution is low or images are taken in low lighting. Or a speech-to-text system might not be used reliably to provide closed captions for online lectures because it fails to handle technical jargon.
        \item The authors should discuss the computational efficiency of the proposed algorithms and how they scale with dataset size.
        \item If applicable, the authors should discuss possible limitations of their approach to address problems of privacy and fairness.
        \item While the authors might fear that complete honesty about limitations might be used by reviewers as grounds for rejection, a worse outcome might be that reviewers discover limitations that aren't acknowledged in the paper. The authors should use their best judgment and recognize that individual actions in favor of transparency play an important role in developing norms that preserve the integrity of the community. Reviewers will be specifically instructed to not penalize honesty concerning limitations.
    \end{itemize}

\item {\bf Theory Assumptions and Proofs}
    \item[] Question: For each theoretical result, does the paper provide the full set of assumptions and a complete (and correct) proof?
    \item[] Answer: \answerYes{} 
    \item[] Justification: The proofs for all theoretical results
    are provided in \Cref{app:proofs}.
    \item[] Guidelines:
    \begin{itemize}
        \item The answer NA means that the paper does not include theoretical results. 
        \item All the theorems, formulas, and proofs in the paper should be numbered and cross-referenced.
        \item All assumptions should be clearly stated or referenced in the statement of any theorems.
        \item The proofs can either appear in the main paper or the supplemental material, but if they appear in the supplemental material, the authors are encouraged to provide a short proof sketch to provide intuition. 
        \item Inversely, any informal proof provided in the core of the paper should be complemented by formal proofs provided in appendix or supplemental material.
        \item Theorems and Lemmas that the proof relies upon should be properly referenced. 
    \end{itemize}

    \item {\bf Experimental Result Reproducibility}
    \item[] Question: Does the paper fully disclose all the information needed to reproduce the main experimental results of the paper to the extent that it affects the main claims and/or conclusions of the paper (regardless of whether the code and data are provided or not)?
    \item[] Answer: \answerYes{} 
    \item[] Justification: The details for reproducing the experimental results are described in \Cref{sec:experiments} and \Cref{app:implementation}.
    \item[] Guidelines:
    \begin{itemize}
        \item The answer NA means that the paper does not include experiments.
        \item If the paper includes experiments, a No answer to this question will not be perceived well by the reviewers: Making the paper reproducible is important, regardless of whether the code and data are provided or not.
        \item If the contribution is a dataset and/or model, the authors should describe the steps taken to make their results reproducible or verifiable. 
        \item Depending on the contribution, reproducibility can be accomplished in various ways. For example, if the contribution is a novel architecture, describing the architecture fully might suffice, or if the contribution is a specific model and empirical evaluation, it may be necessary to either make it possible for others to replicate the model with the same dataset, or provide access to the model. In general. releasing code and data is often one good way to accomplish this, but reproducibility can also be provided via detailed instructions for how to replicate the results, access to a hosted model (e.g., in the case of a large language model), releasing of a model checkpoint, or other means that are appropriate to the research performed.
        \item While NeurIPS does not require releasing code, the conference does require all submissions to provide some reasonable avenue for reproducibility, which may depend on the nature of the contribution. For example
        \begin{enumerate}
            \item If the contribution is primarily a new algorithm, the paper should make it clear how to reproduce that algorithm.
            \item If the contribution is primarily a new model architecture, the paper should describe the architecture clearly and fully.
            \item If the contribution is a new model (e.g., a large language model), then there should either be a way to access this model for reproducing the results or a way to reproduce the model (e.g., with an open-source dataset or instructions for how to construct the dataset).
            \item We recognize that reproducibility may be tricky in some cases, in which case authors are welcome to describe the particular way they provide for reproducibility. In the case of closed-source models, it may be that access to the model is limited in some way (e.g., to registered users), but it should be possible for other researchers to have some path to reproducing or verifying the results.
        \end{enumerate}
    \end{itemize}

\item {\bf Open access to data and code}
    \item[] Question: Does the paper provide open access to the data and code, with sufficient instructions to faithfully reproduce the main experimental results, as described in supplemental material?
    \item[] Answer: \answerYes 
    \item[] Justification: Links to our code are provided in the paper. Our experiments use only open-source software and datasets, all of which are referenced in the text. 
    \item[] Guidelines:
    \begin{itemize}
        \item The answer NA means that paper does not include experiments requiring code.
        \item Please see the NeurIPS code and data submission guidelines (\url{https://nips.cc/public/guides/CodeSubmissionPolicy}) for more details.
        \item While we encourage the release of code and data, we understand that this might not be possible, so “No” is an acceptable answer. Papers cannot be rejected simply for not including code, unless this is central to the contribution (e.g., for a new open-source benchmark).
        \item The instructions should contain the exact command and environment needed to run to reproduce the results. See the NeurIPS code and data submission guidelines (\url{https://nips.cc/public/guides/CodeSubmissionPolicy}) for more details.
        \item The authors should provide instructions on data access and preparation, including how to access the raw data, preprocessed data, intermediate data, and generated data, etc.
        \item The authors should provide scripts to reproduce all experimental results for the new proposed method and baselines. If only a subset of experiments are reproducible, they should state which ones are omitted from the script and why.
        \item At submission time, to preserve anonymity, the authors should release anonymized versions (if applicable).
        \item Providing as much information as possible in supplemental material (appended to the paper) is recommended, but including URLs to data and code is permitted.
    \end{itemize}

\item {\bf Experimental Setting/Details}
    \item[] Question: Does the paper specify all the training and test details (e.g., data splits, hyperparameters, how they were chosen, type of optimizer, etc.) necessary to understand the results?
    \item[] Answer: \answerYes{} 
    \item[] Justification: Details of the experimental settings are discussed in \Cref{sec:experiments} and \Cref{app:implementation}. 
    \item[] Guidelines:
    \begin{itemize}
        \item The answer NA means that the paper does not include experiments.
        \item The experimental setting should be presented in the core of the paper to a level of detail that is necessary to appreciate the results and make sense of them.
        \item The full details can be provided either with the code, in appendix, or as supplemental material.
    \end{itemize}

\item {\bf Experiment Statistical Significance}
    \item[] Question: Does the paper report error bars suitably and correctly defined or other appropriate information about the statistical significance of the experiments?
    \item[] Answer: \answerNo{} 
    \item[] Justification: We performed experiments across multiple XML tasks, noisy oracles, noise levels, model architectures, and datasets, so performing multiple runs for each experiment would be too computationally expensive. 
    \item[] Guidelines:
    \begin{itemize}
        \item The answer NA means that the paper does not include experiments.
        \item The authors should answer "Yes" if the results are accompanied by error bars, confidence intervals, or statistical significance tests, at least for the experiments that support the main claims of the paper.
        \item The factors of variability that the error bars are capturing should be clearly stated (for example, train/test split, initialization, random drawing of some parameter, or overall run with given experimental conditions).
        \item The method for calculating the error bars should be explained (closed form formula, call to a library function, bootstrap, etc.)
        \item The assumptions made should be given (e.g., Normally distributed errors).
        \item It should be clear whether the error bar is the standard deviation or the standard error of the mean.
        \item It is OK to report 1-sigma error bars, but one should state it. The authors should preferably report a 2-sigma error bar than state that they have a 96\% CI, if the hypothesis of Normality of errors is not verified.
        \item For asymmetric distributions, the authors should be careful not to show in tables or figures symmetric error bars that would yield results that are out of range (e.g. negative error rates).
        \item If error bars are reported in tables or plots, The authors should explain in the text how they were calculated and reference the corresponding figures or tables in the text.
    \end{itemize}

\item {\bf Experiments Compute Resources}
    \item[] Question: For each experiment, does the paper provide sufficient information on the computer resources (type of compute workers, memory, time of execution) needed to reproduce the experiments?
    \item[] Answer: \answerYes{} 
    \item[] Justification: Information on compute resources is provided in \Cref{app:implementation}. 
    \item[] Guidelines:
    \begin{itemize}
        \item The answer NA means that the paper does not include experiments.
        \item The paper should indicate the type of compute workers CPU or GPU, internal cluster, or cloud provider, including relevant memory and storage.
        \item The paper should provide the amount of compute required for each of the individual experimental runs as well as estimate the total compute. 
        \item The paper should disclose whether the full research project required more compute than the experiments reported in the paper (e.g., preliminary or failed experiments that didn't make it into the paper). 
    \end{itemize}
    
\item {\bf Code Of Ethics}
    \item[] Question: Does the research conducted in the paper conform, in every respect, with the NeurIPS Code of Ethics \url{https://neurips.cc/public/EthicsGuidelines}?
    \item[] Answer: \answerYes{} 
    \item[] Justification: Our proposed method aims to make computationally challenging XML tasks feasible, so our research promotes transparency and interpretability in ML.
    \item[] Guidelines:
    \begin{itemize}
        \item The answer NA means that the authors have not reviewed the NeurIPS Code of Ethics.
        \item If the authors answer No, they should explain the special circumstances that require a deviation from the Code of Ethics.
        \item The authors should make sure to preserve anonymity (e.g., if there is a special consideration due to laws or regulations in their jurisdiction).
    \end{itemize}

\item {\bf Broader Impacts}
    \item[] Question: Does the paper discuss both potential positive societal impacts and negative societal impacts of the work performed?
    \item[] Answer: \answerYes{} 
    \item[] Justification: The potential positive and negative societal impacts of our work are discussed in \Cref{app:broader_impact}. 
    \item[] Guidelines:
    \begin{itemize}
        \item The answer NA means that there is no societal impact of the work performed.
        \item If the authors answer NA or No, they should explain why their work has no societal impact or why the paper does not address societal impact.
        \item Examples of negative societal impacts include potential malicious or unintended uses (e.g., disinformation, generating fake profiles, surveillance), fairness considerations (e.g., deployment of technologies that could make decisions that unfairly impact specific groups), privacy considerations, and security considerations.
        \item The conference expects that many papers will be foundational research and not tied to particular applications, let alone deployments. However, if there is a direct path to any negative applications, the authors should point it out. For example, it is legitimate to point out that an improvement in the quality of generative models could be used to generate deepfakes for disinformation. On the other hand, it is not needed to point out that a generic algorithm for optimizing neural networks could enable people to train models that generate Deepfakes faster.
        \item The authors should consider possible harms that could arise when the technology is being used as intended and functioning correctly, harms that could arise when the technology is being used as intended but gives incorrect results, and harms following from (intentional or unintentional) misuse of the technology.
        \item If there are negative societal impacts, the authors could also discuss possible mitigation strategies (e.g., gated release of models, providing defenses in addition to attacks, mechanisms for monitoring misuse, mechanisms to monitor how a system learns from feedback over time, improving the efficiency and accessibility of ML).
    \end{itemize}
    
\item {\bf Safeguards}
    \item[] Question: Does the paper describe safeguards that have been put in place for responsible release of data or models that have a high risk for misuse (e.g., pretrained language models, image generators, or scraped datasets)?
    \item[] Answer: \answerNA{} 
    \item[] Justification: Our paper does not release a new dataset or model. 
    \item[] Guidelines:
    \begin{itemize}
        \item The answer NA means that the paper poses no such risks.
        \item Released models that have a high risk for misuse or dual-use should be released with necessary safeguards to allow for controlled use of the model, for example by requiring that users adhere to usage guidelines or restrictions to access the model or implementing safety filters. 
        \item Datasets that have been scraped from the Internet could pose safety risks. The authors should describe how they avoided releasing unsafe images.
        \item We recognize that providing effective safeguards is challenging, and many papers do not require this, but we encourage authors to take this into account and make a best faith effort.
    \end{itemize}

\item {\bf Licenses for existing assets}
    \item[] Question: Are the creators or original owners of assets (e.g., code, data, models), used in the paper, properly credited and are the license and terms of use explicitly mentioned and properly respected?
    \item[] Answer: \answerYes{} 
    \item[] Justification: The open-source packages and datasets we used are cited in \Cref{app:implementation}. 
    \item[] Guidelines:
    \begin{itemize}
        \item The answer NA means that the paper does not use existing assets.
        \item The authors should cite the original paper that produced the code package or dataset.
        \item The authors should state which version of the asset is used and, if possible, include a URL.
        \item The name of the license (e.g., CC-BY 4.0) should be included for each asset.
        \item For scraped data from a particular source (e.g., website), the copyright and terms of service of that source should be provided.
        \item If assets are released, the license, copyright information, and terms of use in the package should be provided. For popular datasets, \url{paperswithcode.com/datasets} has curated licenses for some datasets. Their licensing guide can help determine the license of a dataset.
        \item For existing datasets that are re-packaged, both the original license and the license of the derived asset (if it has changed) should be provided.
        \item If this information is not available online, the authors are encouraged to reach out to the asset's creators.
    \end{itemize}

\item {\bf New Assets}
    \item[] Question: Are new assets introduced in the paper well documented and is the documentation provided alongside the assets?
    \item[] Answer: \answerYes{} 
    \item[] Justification: The code we provide contains usage instructions and scripts for running experiments. 
    \item[] Guidelines:
    \begin{itemize}
        \item The answer NA means that the paper does not release new assets.
        \item Researchers should communicate the details of the dataset/code/model as part of their submissions via structured templates. This includes details about training, license, limitations, etc. 
        \item The paper should discuss whether and how consent was obtained from people whose asset is used.
        \item At submission time, remember to anonymize your assets (if applicable). You can either create an anonymized URL or include an anonymized zip file.
    \end{itemize}

\item {\bf Crowdsourcing and Research with Human Subjects}
    \item[] Question: For crowdsourcing experiments and research with human subjects, does the paper include the full text of instructions given to participants and screenshots, if applicable, as well as details about compensation (if any)? 
    \item[] Answer: \answerNA{} 
    \item[] Justification: Our paper does not involve crowdsourcing experiments or research with human subjects.
    \item[] Guidelines:
    \begin{itemize}
        \item The answer NA means that the paper does not involve crowdsourcing nor research with human subjects.
        \item Including this information in the supplemental material is fine, but if the main contribution of the paper involves human subjects, then as much detail as possible should be included in the main paper. 
        \item According to the NeurIPS Code of Ethics, workers involved in data collection, curation, or other labor should be paid at least the minimum wage in the country of the data collector. 
    \end{itemize}

\item {\bf Institutional Review Board (IRB) Approvals or Equivalent for Research with Human Subjects}
    \item[] Question: Does the paper describe potential risks incurred by study participants, whether such risks were disclosed to the subjects, and whether Institutional Review Board (IRB) approvals (or an equivalent approval/review based on the requirements of your country or institution) were obtained?
    \item[] Answer: \answerNA{} 
    \item[] Justification: Our paper does not involve human subjects.
    \item[] Guidelines:
    \begin{itemize}
        \item The answer NA means that the paper does not involve crowdsourcing nor research with human subjects.
        \item Depending on the country in which research is conducted, IRB approval (or equivalent) may be required for any human subjects research. If you obtained IRB approval, you should clearly state this in the paper. 
        \item We recognize that the procedures for this may vary significantly between institutions and locations, and we expect authors to adhere to the NeurIPS Code of Ethics and the guidelines for their institution. 
        \item For initial submissions, do not include any information that would break anonymity (if applicable), such as the institution conducting the review.
    \end{itemize}

\end{enumerate}

\end{document}